\documentclass[11pt]{article}
\usepackage{latexsym,amsmath,amscd,amssymb,graphics,mathrsfs}
\usepackage{multirow}
\usepackage{graphicx}
\usepackage{enumerate}
\usepackage{graphicx}
\usepackage{color}
\usepackage{soul}
\usepackage{calrsfs}
\usepackage{hyperref}
\usepackage{comment}
\usepackage{geometry}
\usepackage{subcaption}
\usepackage{changepage}
\usepackage{url}
\usepackage{xcolor}
\usepackage{tabularx}
\usepackage{breakcites}
\usepackage{hyperref}
\usepackage{cleveref}
\usepackage[tracking=true]{microtype}
\usepackage{changepage}
\usepackage{geometry}
\usepackage{amsthm}
 \geometry{
 a4paper,
 total={170mm,257mm},
 left=20mm,
 top=20mm
 }

\newcommand{\rem}[1]{}


\newcommand\on{\operatorname}

\renewcommand\prod{\on{prod}}

\newtheorem{theorem}{Theorem}[section]
\newtheorem{Definition}[theorem]{Definition}

\newtheorem{Remark}[theorem]{Remark}
\newtheorem{Proposition}[theorem]{Proposition}


\begin{document}

\title{Geometric Learning of Canonical Parameterizations of $2D$-curves}
\author{Ioana Ciuclea$^{1, 4}$, Giorgio Longari$^{2, 4}$, Alice Barbara Tumpach $^{3, 4}$ }
\addtocounter{footnote}{1}

\footnotetext{Faculty of Physics and Mathematics, Department of Mathematics, West University of Timi\c{s}oara, Vasile Pârvan 4,  300392 Timi\c{s}oara, Romania;
\texttt{ioana.ciuclea@e-uvt.ro}
\addtocounter{footnote}{1}}

\footnotetext{Computer Vision Lab, Technische Universität Wien,
Karlsplatz 13, 1040 Vienna, Austria; 
\texttt{giorgio.longari@unimib.it}
\addtocounter{footnote}{1}}

\footnotetext{Wolfgang Pauli Institut, Oskar-Morgensternplatz 1, 1090 Vienna, Austria;   Laboratoire Painlev\'e, Lille University, 59650 Villeneuve d'Ascq, France; Technische Universität Wien,
Karlsplatz 13, 1040 Wien. 
\texttt{alice-barbora.tumpach@univ-lille.fr}
\addtocounter{footnote}{1}}

 \footnotetext{The authors are listed in alphabetic order. See the section ``Authors contributions'' for the contributions of each author.}

\date{ }
\maketitle

\begin{abstract}
Most datasets encountered in computer vision and medical applications present symmetries that should be taken into account in classification tasks.  A typical example is the symmetry by rotation and/or scaling  in object detection. A common way to build neural networks that learn the symmetries is to use data augmentation. In order to avoid data augmentation and build more sustainable algorithms, we present an alternative method to mod out symmetries based on the notion of section of a principal fiber bundle. This framework allows to use simple metrics on the space of objects in order to measure dissimilarities between orbits of objects under the symmetry group. Moreover, the section used can be optimized to  maximize separation of classes. We illustrate this methodology on a dataset of contours of objects for the groups of translations, rotations, scalings and reparameterizations. In particular, we present a $2$-parameter family of canonical parameterizations of curves, containing the constant-speed parameterization as a special case, which we believe is interesting in its own right. We hope that this simple application will serve to convey the geometric concepts underlying this method, which have a wide range of possible applications. The code is available at the following link:  \href{https://github.com/GiLonga/Geometric-Learning}{https://github.com/GiLonga/Geometric-Learning}. A tutorial notebook showcasing an application of the code to a specific dataset is available at the following link:\href{https://github.com/ioanaciuclea/geometric-learning-notebook}{https://github.com/ioanaciuclea/geometric-learning-notebook}. 
\end{abstract}

\noindent \textbf{Keywords:} \textcolor{black}{principal fiber bundles; reparameterizations; group of diffeomorphisms; shape-preserving groups; plane curves; section of a fiber bundle; arc-length parameterization; curvature-weighted parameterization}

\section{Extended~Abstract}

Our visual system is trained to identify objects that differ only by the action of a shape-preserving group, like the group of translations, rotations, and~scalings. Consequently, these symmetries need to be taken into account in the design of algorithms for object detection and classification.  A~common way to build neural networks that learn the symmetries is to use data augmentation. This involves adding to the dataset new samples obtained by letting the symmetry group act on the original samples, for~example, adding rotated images to the original images. In~addition to the fact that data augmentation increases computational cost, it is also very memory-intensive. In~this paper, we will consider, in particular, the symmetry group consisting of reparameterizations of contours in the plane, which is an infinite-dimensional Lie~group.

In order to avoid data augmentation and build more sustainable algorithms, we present an alternative method to mod out symmetries based on the notion of section (also called cross-section) of  a principal fiber bundle (see \cref{section_fiber_bundle,section_sections}). Within~this framework, a~distinguished object is selected in each orbit under the symmetry group. This amounts to normalization or standardization of samples with respect to the action of the groups of translations, rotations, scalings, and~reparameterizations.

One aim of the present paper is to investigate canonical parameterizations of curves, which allow one to mod out the action of the infinite-dimensional group of diffeomorphisms acting on curves by reparameterizations.  A~canonical parameterization can be understood as an automatic way to re-sample a curve according to some of its geometric features. An~example of a canonical parameterization is provided by the arc-length parameterization, which consists of a unit speed travel along the shape drawn by the curve. In~Section~\ref{Section_clock_parameterization},  we present a new $2$-parameter family of canonical curve parameterizations, called curvature-weighted clock parameterizations, inspired by the small hand trajectory on a traditional clock, which moves at a constant angle every hour. These canonical parameterizations are very natural and may be a good choice in many applications, particularly in the presence of~noise.

When the quotient space by the group action is unique, sections, when they exist, are numerous. In~fact, for~trivial fiber bundles like the fiber bundle of parameterized curves studied in the present paper, the~space of sections is infinite-dimensional. Therefore, the~present approach allows for a lot of flexibility and can be customized for particular applications. It also allows us to use a simple distance function on the total space of the fiber bundle in order to measure dissimilarities between orbits of objects under the symmetry group. Indeed, restricting a simple distance function, such as the $L^2$ distance, to~the range of a chosen section, we obtain a distance function on the quotient space, which is easy to compute. An~example of this construction of distance functions between curves irrespective of their parameterization is given in Section~\ref{section_distance}. They are straightforward to compute, and~do not rely on any energy minimization algorithm. During~training for a classification task, the~section used to design the distance function measuring the dissimilarities between orbits can be optimized to maximize the separation of classes, solving a metric learning problem (see \cref{sec:metric_learning,section_geometric_learning}). Moreover, the~optimal section gives rise to an optimal correspondence between points along any pair of contours in the dataset, solving a registration task. It therefore allows us to interpolate between contours, leading to optimal deformations between shapes (see Figure~\ref{illustration-optimal_parameterization}). Last but not least, our standardization procedure can be integrated into all classification algorithms for contours as a pre-processing step, allowing us to improve classification performance (see Section~\ref{sec:class_results}).

In Section~\ref{results}, we illustrate this methodology with a dataset of leaves. More precisely, we optimize the Dunn index of clustering over a $2$-parameter family of sections corresponding to the curvature-weighted clock parameterizations defined in Section~\ref{Section_clock_parameterization}. In~Section~\ref{Testing}, we show that this solution leads to good classification results for very low computational costs using classical machine learning algorithms. Indeed, with~an optimization over only 2 parameters, our algorithm reaches 0.9602 accuracy (96.02\%  of correct classifications) with SVM for the dataset of Swedish leaves, whereas the state-of-the art model VGG-16 needs 138 million parameters to reach perfect accuracy (100\% correct classifications) on the same dataset (see Section~\ref{Testing}). We also show that taking into account all the shape-preserving groups boosts classification  performance of all the classification algorithms that we considered, with~even an increase of 25.71\% of correct classifications for KNN on the Swedish leaf dataset (Section~\ref{Testing}). Therefore, we argue that our method is a good pre-processing step that should be performed before any more complex feature extraction algorithm on~contours. 

The main contributions of this paper are the~following:

\begin{itemize}
    \item The idea of using sections of principal fiber bundles in order to mod out symmetries is explained in a comprehensive manner and illustrated in the context of plane curves for classical shape-preserving groups (Section~\ref{section_fiber_bundle}).
    \item A $2$-parameter family of canonical contour parameterizations is introduced, called curvature-weighted clock parameterizations (Section~\ref{Section_clock_parameterization}). 
    \item For a labeled dataset of contours, the~separation of classes is optimized based on cluster validity indices such as the Dunn index (Section~\ref{section_geometric_learning}).
    \item We demonstrate and quantify how taking into account symmetries affects clustering and classification results (Section~\ref{Testing}). 
    \item The proposed method not only allows us to measure distances between shapes in a parameterization-invariant manner, but~also provides a registration and optimal deformation between shapes at a very low computational~cost.
\end{itemize}
  
The code is available at the following link: \href{https://github.com/GiLonga/Geometric-Learning}{https://github.com/GiLonga/Geometric-Learning}. A tutorial notebook showcasing an application of the code to some datasets is available at the following link: \href{https://github.com/ioanaciuclea/geometric-learning-notebook}{https://github.com/ioanaciuclea/geometric-learning-notebook}. For the datasets analyzed in this paper, the contours extracted from images are also available at these links.

\begin{figure*}[h!]
    \centering
    \begin{subfigure}{\textwidth}
        \centering
        \includegraphics[width = .7\textwidth]{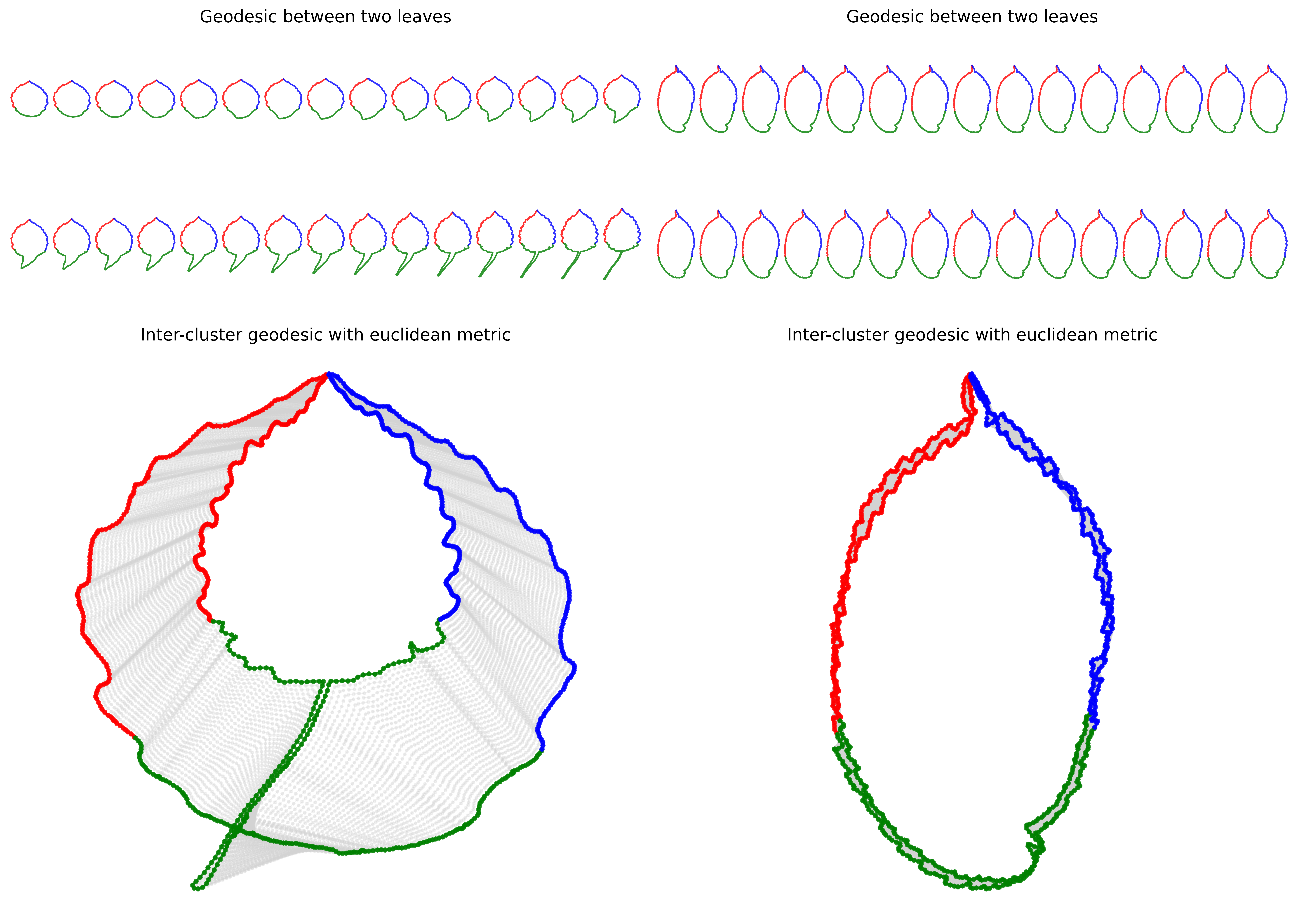}
        \caption{}
    \end{subfigure}\\
    ~ 
    \begin{subfigure}{\textwidth}
        \centering
        \includegraphics[width = .7\textwidth]{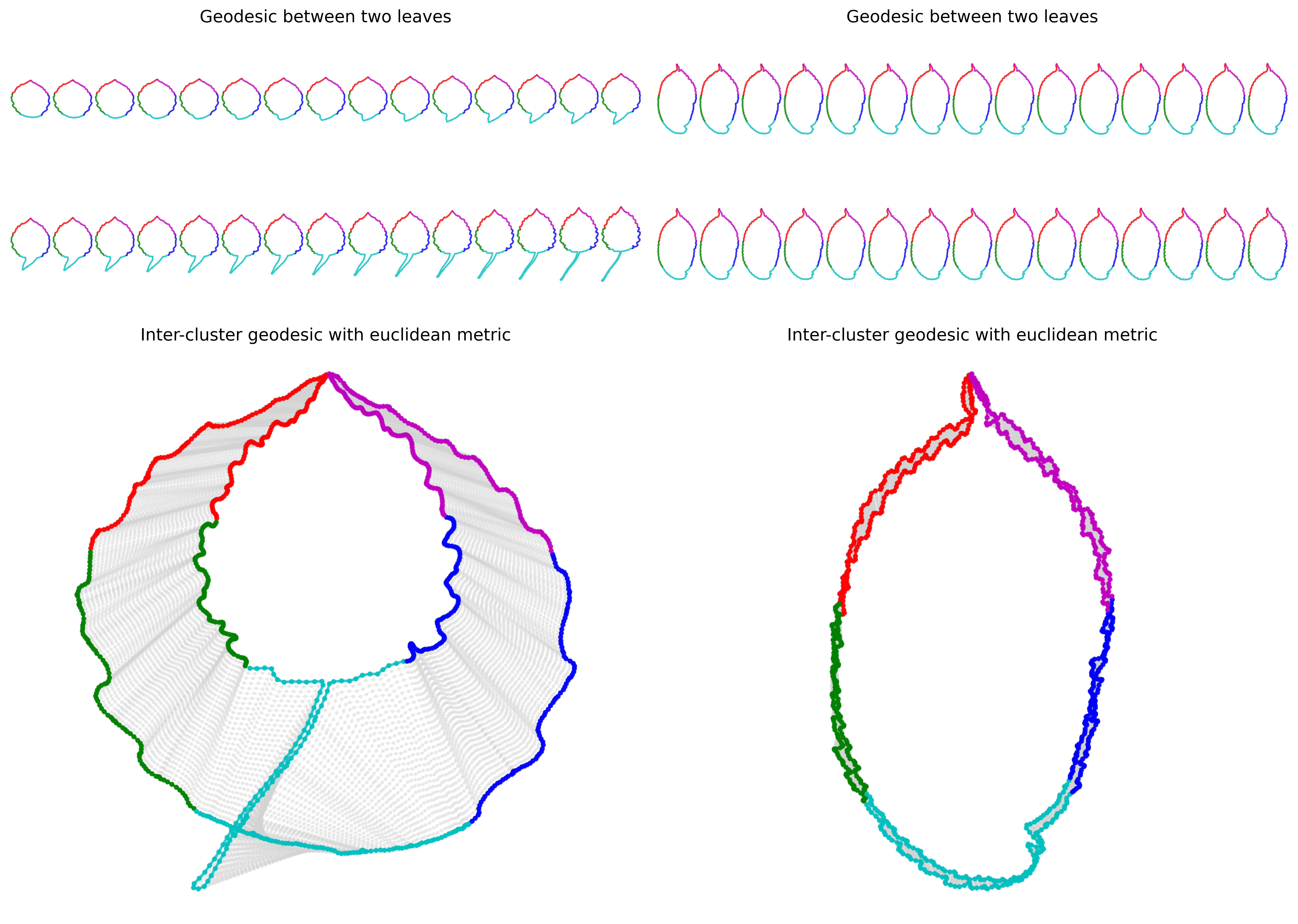}
        \caption{}
    \end{subfigure}
    \caption{(\textbf{a})  Left: The pair of leaves from the Swedish dataset that maximizes the intraclass distance is extracted from the training set, and~the interpolation of their optimal parameterizations for the Dunn index is displayed for the parameters $(n = 3, \lambda = 2000)$. Right: The pair of leaves from the Swedish dataset that minimizes the interclass distance is extracted from the training set, and~the interpolation of their optimal parameterizations for the Dunn index is displayed for $(n = 3, \lambda = 2000)$. 
(\textbf{b}) Left: The pair of leaves from the Swedish dataset that maximizes the intraclass distance is extracted from the training set, and~the interpolation of their optimal parameterizations for the Davies Bouldin index is displayed for the parameters $(n = 5, \lambda = +\infty)$. Right: The pair of leaves from the Swedish dataset that minimizes the interclass distance is extracted from the training set, and~the interpolation of their optimal parameterizations is displayed for $(n = 5, \lambda = +\infty)$. We can see that the same pair of leaves  maximizes the intraclass distance both for the Dunn index and the Davies Bouldin index, and~the same pair of leaves minimizes the interclass distance for both indices.}
    \label{illustration-optimal_parameterization}
\end{figure*}

\section{Mathematical Background and~Method}

\subsection{Parameterized Versus Unparameterized $2D$-Curves}
In this section, we recall the distinction between parameterized and unparameterized $2D$-curves~\cite{Mennucci_book,Overview}. We will be mainly interested in the contours of objects, like the contours of objects depicted in Figure~\ref{Noether}, which mathematically correspond to  Jordan curves in the plane. More precisely, we will consider the following space of smooth embedded closed curves in the plane: 
\begin{equation}\label{P}
\mathcal{P} = \{\gamma\in \mathcal{C}^{\infty}(\mathbb{S}^1, \mathbb{R}^2), \gamma \textrm{ injective}, \gamma'(s) \neq 0, \forall s\in \mathbb{S}^1\}.
\end{equation}
In what follows, the~unit circle $\mathbb{S}^1$ will be identified with $\mathbb{R}/\mathbb{Z}= \{t \in [0,1], 0\sim 1\}$ via the map $\iota:\mathbb{R}/\mathbb{Z}\rightarrow \mathbb{C}$, $[t]\mapsto e^{ 2\pi i t}$. In~particular, this identification distinguishes the point $\iota(0) = (1,0)$ in $\mathbb{S}^1\subset\mathbb{C}$.

\begin{figure}[h!]
\centering
\includegraphics[width=11 cm]{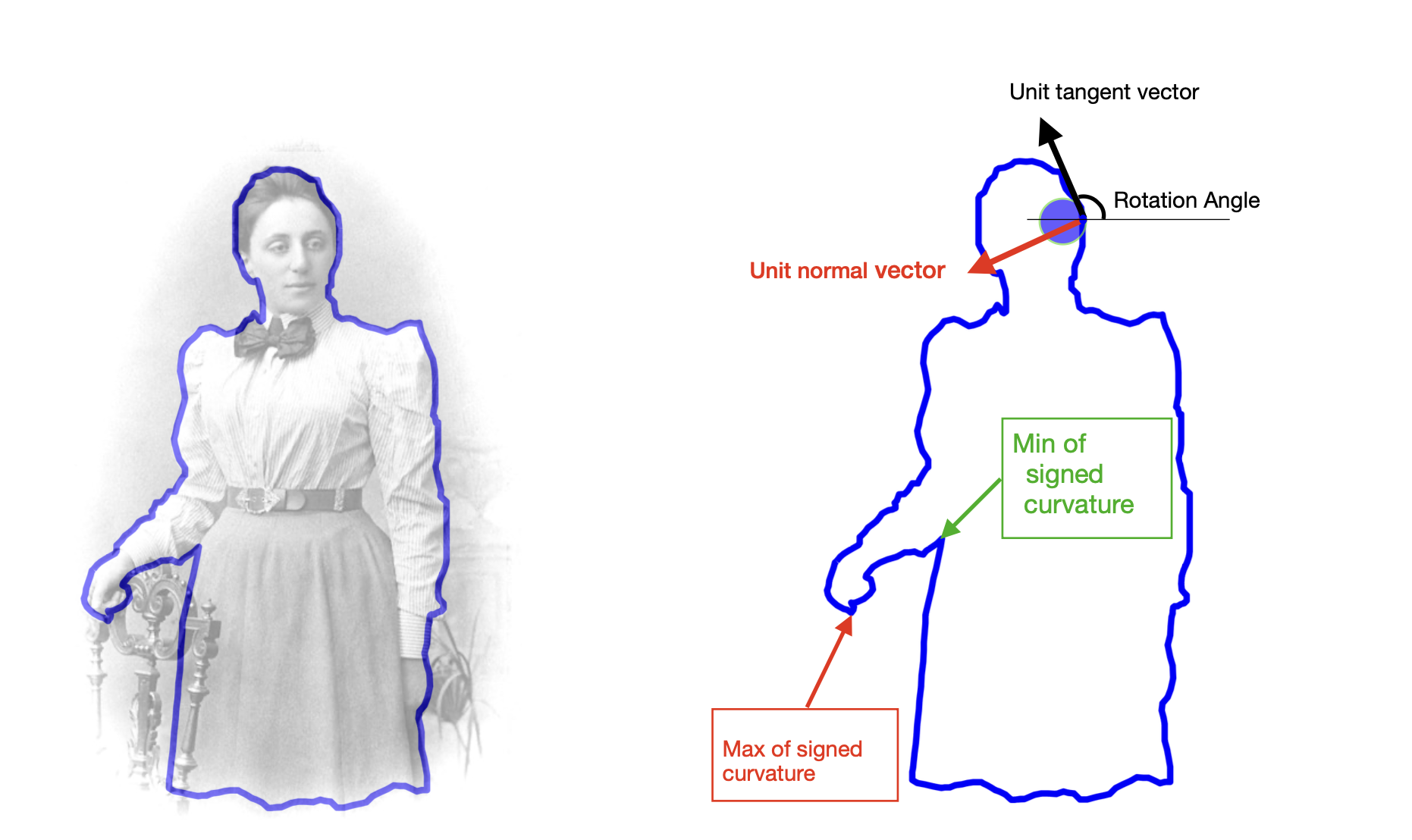}
\caption{\textbf{{Emmy Noether and the moving frame associated with her profile.}} The signed curvature $\kappa$ is defined as the rate of turning angle of the moving frame associated with a parameterized plane curve. The~maximum and the minimum of the signed curvature correspond to two points where the curvature is~extremal.}
\label{Noether}
\end{figure}   

The space $\mathcal{P}$ has a natural structure of smooth Fr\'echet manifold~\cite{Binz}.
Note that the parameterization of a contour with parameter space $\mathbb{S}^1$ is not unique. 
In fact, the~group  $\mathcal{G} = \operatorname{Diff}^+(\mathbb{S}^1)$, consisting of orientation-preserving diffeomorphisms of $\mathbb{S}^1$, is a Fr\'echet Lie group acting smoothly on $\mathcal{P}$ by precomposition:
\[
\begin{array}{lll}
\mathcal{G}\times \mathcal{P}& \rightarrow &\mathcal{P}\\
(\psi, \gamma) & \mapsto & \gamma\circ \psi^{-1}
\end{array}
\]
This action preserves the shapes of curves, and~also the direction of travel along the curves. Moreover, two parameterized curves $\gamma_1$ and $\gamma_2$ in $\mathcal{P}$ corresponding to the same oriented contour in the plane are necessarily related by a diffeomorphism $\psi\in \mathcal{G}$: $\gamma_1 = \gamma_2\circ \psi^{-1}$. 
Given a parameterized curve $\gamma\in\mathcal{P}$, one can consider its equivalence  class  $[\gamma]$ modulo the action of $\mathcal{G}$:
\begin{equation}\label{[gamma]}
[\gamma] = \{ \gamma\circ \psi^{-1}, \psi \in \mathcal{G}\},
\end{equation}
also called the orbit of $\gamma$ for the $\mathcal{G}$-action. 
The equivalence class $[\gamma]$ is uniquely characterized by the range of $\gamma: \mathbb{S}^1\rightarrow \mathbb{R}^2$, which is the shape drawn by $\gamma$ in the plane, also called the unparameterized curve associated with $\gamma$, together with its orientation (the direction of travel).
Consequently, the~shape space of oriented contours in the plane is the quotient space $\mathcal{P}/\mathcal{G}$ of the manifold of smooth embeddings $\mathcal{P}$ modulo the action of the Fr\'echet Lie group $\mathcal{G}$. It was proven in~\cite{Binz} that this quotient space admits a natural structure of smooth manifold and that the canonical projection
\begin{equation}\label{projection}
\begin{array}{cccc}
     \pi: & \mathcal{P} & \longrightarrow & \mathcal{P}/\mathcal{G},\\
     & \gamma & \longmapsto & \pi(\gamma) = [\gamma],  
\end{array}
\end{equation}
onto the quotient space defines a principal fiber bundle in the Fr\'echet category. This result was extended to freely immersed curves in~\cite{Cervera}, with~some missing arguments in the proof, which were fully fixed in~\cite{Mennucci(2021)}. A~visualization of a fiber bundle is given in Figure~\ref{fig1}.
\begin{figure}[h!]
\centering
\includegraphics[width=9 cm]{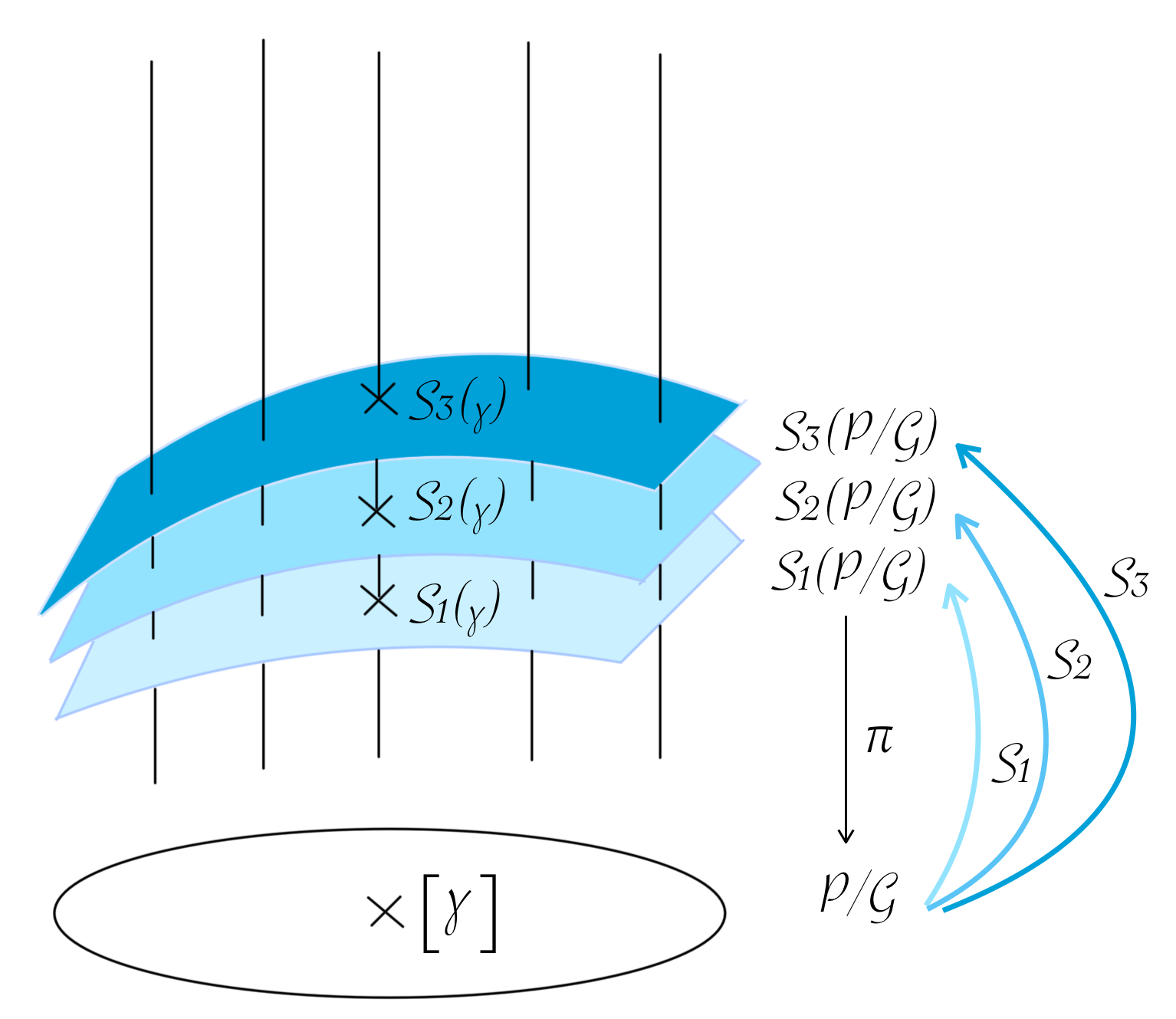}
\caption{Illustration of a fiber bundle $\pi: \mathcal{P}\rightarrow \mathcal{P}/\mathcal{G}$ with three different sections $S_i: \mathcal{P}/\mathcal{G}\rightarrow \mathcal{P}$.\label{fig1}}
\end{figure}   

\subsection{Sections of Fiber~Bundles}\label{section_fiber_bundle}
In the present paper, we will be interested in choosing smoothly a preferred parameterization in each equivalence class $[\gamma]$ defined by \eqref{[gamma]}, where $\gamma$ belongs to (some open subset of) the space of smooth embedded closed curves $\mathcal{P}$. This corresponds to the choice of a smooth section of the principal fiber bundle $\pi:\mathcal{P}\rightarrow \mathcal{P}/\mathcal{G}$ (see Figure~\ref{fig1}). Let us recall the following definition.

\begin{Definition}\label{def_section}
A \emph{(global) smooth section}
  of a fiber bundle $\pi: \mathcal{P} \rightarrow \mathcal{B}$ is a smooth map $s: \mathcal{B} \rightarrow \mathcal{P}$ such that $\pi\circ s = \textrm{Id}_{\mathcal{B}}$. 
\end{Definition}

\begin{Remark}
It can be shown that the range of a smooth section  $s: \mathcal{B}\rightarrow \mathcal{P}$ of a principal fiber bundle $\pi:\mathcal{P}\rightarrow \mathcal{B}$ is a smooth submanifold of $\mathcal{P}$. In~particular, the~manifold consisting of closed curves parameterized by arc length is a smooth manifold~\cite{Preston,  TumPre2}. Using the parametrization with arc length of some particular curves, the~authors of~\cite{MakBor} were able to give the exact analytical solution of the linear static equation of curved Bernoulli–Euler beam. 
\end{Remark}

The notion of section can be applied to different quotient spaces, in~particular to the quotient space  of the space of embedded closed curves modulo shape-preserving groups. 
We will see in Sections~\ref{results} and~\ref{Testing} how the choice of a particular section can influence downstream~analysis. 

\subsection{Canonical Parameterizations of $2D$-Curves as Smooth~Sections}\label{section_sections}

An example of a smooth section for the fiber bundle $\pi:\mathcal{P}\rightarrow \mathcal{P}/\mathcal{G}$ is provided by the submanifold of curves parameterized proportional to arc-length. Let us recall how this particular parameterization is defined. 
Given a smooth parameterized curve in the plane $\gamma\in \mathcal{P}$, its length is defined as
\begin{equation}\label{length}
\operatorname{Length}(\gamma) = \int_0^1 \|\gamma'(t)\| dt,
\end{equation}
where $\|\cdot\|$ denotes the Euclidean norm in $\mathbb{R}^2$.
The length is a geometric invariant of the curve, i.e.,~it does not depend on the parameterization.
Given a starting point, which in our case will be the image of $0\in\mathbb{R}/\mathbb{Z}$, there is a canonical way to reparameterize a curve $\gamma\in\mathcal{P}$ by arc length, producing a unit speed curve. This procedure will change the parameter domain when
the length of the curve is not equal to $1$, and~therefore may not belong to $\mathcal{P}$. However, there is a unique constantspeed reparameterization of $\gamma\in \mathcal{P}$ with parameter domain $\mathbb{R}/\mathbb{Z} = \{t \in [0,1], 0\sim 1\}$, given as follows.

\begin{Proposition}
Given a curve $\gamma\in \mathcal{P}$, consider
the map $\psi$ defined as
\begin{equation}\label{arclength_reparameterization}
\psi(t)  = \frac{1}{\operatorname{Length}(\gamma)}\int_0^t \|\gamma'(s)\|ds,
\end{equation}
where $t \in [0,1]$.
Then, $\psi:\mathbb{R}/\mathbb{Z}\rightarrow\mathbb{R}/\mathbb{Z}$ is an orientation-preserving diffeomorphism, fixing $0\in\mathbb{R}/\mathbb{Z}$. Moreover, the~parameterized curve 
 $p(\gamma) = \gamma\circ \psi^{-1}\in \mathcal{P}$ 
 is the unique constant-speed reparameterization of $\gamma$ with parameter space $\mathbb{R}/\mathbb{Z} = \{t \in [0,1], 0\sim 1\}$, which maps $0\in \mathbb{R}/\mathbb{Z}$ to $\gamma(0)$ and has the same orientation as $\gamma$.  
\end{Proposition}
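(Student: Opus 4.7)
The strategy is to verify the four claims in turn (well-definedness on $\mathbb{R}/\mathbb{Z}$, diffeomorphism property, constant-speed property, and uniqueness), each of which reduces to a short calculation using the hypothesis $\gamma'(s)\neq 0$ from the definition of $\mathcal{P}$.

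First I would verify that $\psi$ descends to a map $\mathbb{R}/\mathbb{Z}\to\mathbb{R}/\mathbb{Z}$ fixing $0$. By construction $\psi(0)=0$, and by definition \eqref{length} of $\operatorname{Length}(\gamma)$, we have $\psi(1)=1$; moreover $\psi$ is smooth as the integral of a smooth function, and the derivative $\psi'(t)=\|\gamma'(t)\|/\operatorname{Length}(\gamma)$ is periodic in $t$, so $\psi$ induces a smooth map of $\mathbb{R}/\mathbb{Z}$. Because $\gamma\in\mathcal{P}$ forces $\|\gamma'(t)\|>0$ everywhere, we get $\psi'(t)>0$ for all $t$, so $\psi$ lifts to a strictly increasing smooth bijection $\mathbb{R}\to\mathbb{R}$ satisfying $\psi(t+1)=\psi(t)+1$, with smooth inverse. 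This gives the orientation-preserving diffeomorphism property and the fact that $\psi\in\mathcal{G}$.

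Next I would check the constant-speed claim for $p(\gamma)=\gamma\circ\psi^{-1}$. By the chain rule,
\begin{equation*}
(p(\gamma))'(u) = \gamma'(\psi^{-1}(u))\cdot(\psi^{-1})'(u) = \frac{\operatorname{Length}(\gamma)}{\|\gamma'(\psi^{-1}(u))\|}\,\gamma'(\psi^{-1}(u)),
\end{equation*}
whose Euclidean norm is the constant $\operatorname{Length}(\gamma)$. The starting point condition $p(\gamma)(0)=\gamma(0)$ is immediate from $\psi^{-1}(0)=0$, and the orientation of $p(\gamma)$ agrees with that of $\gamma$ since $\psi$ is orientation-preserving.

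Finally I would prove uniqueness. Suppose $\tilde{\gamma}=\gamma\circ\varphi^{-1}\in\mathcal{P}$ is any constant-speed reparameterization with $\varphi\in\mathcal{G}$, $\varphi(0)=0$, and same orientation as $\gamma$. Write $\|\tilde{\gamma}'(u)\|\equiv c$; integrating on $[0,1]$ gives $c=\operatorname{Length}(\tilde{\gamma})=\operatorname{Length}(\gamma)$ since length is parameterization-invariant. Applying the chain rule to $\tilde{\gamma}(u)=\gamma(\varphi^{-1}(u))$ and taking norms yields $(\varphi^{-1})'(u)=c/\|\gamma'(\varphi^{-1}(u))\|$, equivalently $\varphi'(t)=\|\gamma'(t)\|/\operatorname{Length}(\gamma)=\psi'(t)$. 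Combined with $\varphi(0)=\psi(0)=0$, integration forces $\varphi=\psi$, giving $\tilde{\gamma}=p(\gamma)$. I do not anticipate any real obstacle; the only point that requires a moment of care is the periodicity argument ensuring $\psi$ genuinely descends to the circle, which is immediate from $\psi(1)=1$ and the periodicity of $\|\gamma'\|$.
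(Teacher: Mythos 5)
Your proof is correct. Note that the paper itself states this Proposition without any proof, so there is no argument to compare against; your verification (positivity and periodicity of $\psi'$ to get an orientation-preserving circle diffeomorphism fixing $0$, the chain rule for the constant-speed claim, and integration of $\varphi'=\psi'$ with $\varphi(0)=\psi(0)=0$ for uniqueness) is exactly the standard argument the authors leave implicit. The only step worth making explicit is that the hypothesis ``maps $0$ to $\gamma(0)$'' yields $\varphi(0)=0$ only because $\gamma$ is injective, which is part of the definition of $\mathcal{P}$.
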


\begin{Definition}\label{prop_arclength}
    We will denote by $\mathcal{A}$ the subset of $\mathcal{P}$ consisting of constant speed curves with parameter space $\mathbb{R}/\mathbb{Z} = \{t \in [0,1], 0\sim 1\}$. One has
\begin{equation}\label{A}
        \mathcal{A} = \{\gamma\in\mathcal{P}, \|\gamma'(t)\| = \operatorname{Length}(\gamma), \forall t\in\mathbb{R}/\mathbb{Z}\}.
    \end{equation}
\end{Definition}

The space $\mathcal{A}$ of constant-speed parameterized curves with parameter space\linebreak   \mbox{$\mathbb{R}/\mathbb{Z} = \{t \in [0,1], 0\sim 1\}$} \textls[-15]{is just one example of space of canonically parameterized curves. The~possible choices are infinite. 
In the present paper, we will use the following~terminology:}

\begin{Definition}
Let $\mathcal{P}$ be the infinite-dimensional manifold of parameterized closed embedded curves in $\mathbb{R}^2$ defined in \eqref{P}, and~$\mathcal{G} = \operatorname{Diff}^+(\mathbb{S}^1)$ the Fr\'echet Lie group of orientation-preserving reparameterizations.
A \emph{canonical parameterization}  will refer to the choice of a smooth section $s: \mathcal{P}/\mathcal{G}\rightarrow \mathcal{P}$ of the principal fiber bundle  $\pi: \mathcal{P} \rightarrow \mathcal{P}/\mathcal{G}$, which depends only on the geometric features of oriented contours.  It can be understood as an automatic procedure to parameterize curves. It allows us to single out a distinguished parameterization of an oriented contour  $[\gamma]\in\mathcal{P}/\mathcal{G}$ by associating with $\pi(\gamma) = [\gamma]$ the parameterized curve $s([\gamma])\in\mathcal{P}$. It also provides a (non-linear) projection $p: \mathcal{P}\rightarrow s(\mathcal{P}/\mathcal{G})$, i.e.,~satisfying $p^2 = p$, given by
\begin{equation}
    p(\gamma) = s([\gamma]).
\end{equation}
\end{Definition}

\subsection{Examples of Curvature-Weighted Canonical~Parameterizations}\label{sec_curvature_weighted}

In Definition~\ref{prop_arclength}, the~parameterization proportional to arc-length with parameter space $\mathbb{R}/\mathbb{Z} = \{t \in [0,1], 0\sim 1\}$ is defined, and~the corresponding submanifold $\mathcal{A}\subset \mathcal{P}$ is given in \eqref{A}. In~\cite{TumCan}, we have introduced the parameterization proportional to curvature-length,  as~well as a variant called the parameterization proportional to curvarc-length. In~fact, these particular procedures to automatically parameterize curves belong to a one-parameter family of canonical parameterizations, and~we recall their construction below (see Equation~\eqref{ulambda}). This family provides an interpolation between the parameterization proportional to curvature-length ($\lambda = 0$),  the~parameterization proportional to curvarc-length ($\lambda = 1$), and~converges to the parameterization proportional to arc-length when $\lambda\rightarrow +\infty$ \cite{TumCan}. In~order to have a picture in mind (see
 Figure~\ref{fig_one_parameter_family}) where the contour of Emmy Noether is sampled according to five different parameterizations from this~family.
 \begin{figure}[h!]
\centering
\includegraphics[width=13 cm]{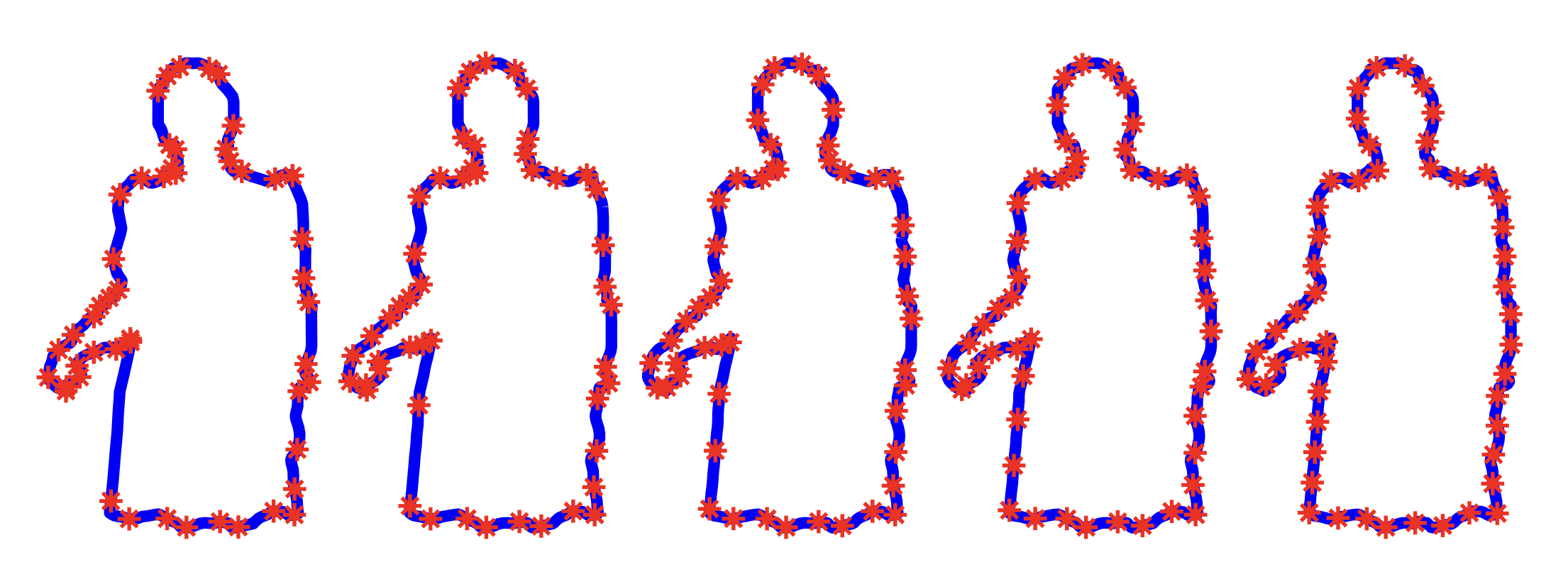}
\caption{\textbf{{A one-parameter family of canonical parameterizations:}} Each contour of Emmy Noether is parameterized in a unique way using Equation~\eqref{ulambda} for a given parameter $\lambda$. The~sample points are the images of a uniform sampling of the interval $[0;1]$. The~leftmost contour is parameterized proportionally to the curvature-length with parameter space $\mathbb{R}/\mathbb{Z} = \{t \in [0,1], 0\sim 1\}$ and corresponds to $\lambda = 0$.  For~this parameterization, sample points are concentrated on high-curvature portions of the curve, whereas flat pieces contain no sample points. The~rightmost 
contour is parameterized proportionally to arc-length with parameter space $\mathbb{R}/\mathbb{Z} = \{t \in [0,1], 0\sim 1\}$ and corresponds to $\lambda = +\infty$. In~this case, sample points are uniformly distributed along the contour. In~between, from~left to right, the~following parameters are used $\lambda = 0.3$, $\lambda = 1$, $\lambda = 2$ (see Equation~\eqref{ulambda}). }
\label{fig_one_parameter_family}
\end{figure}   

Equivalently, this one-parameter family of canonical parameterizations corresponds to a one-parameter family of sections $s_\lambda:\mathcal{P}/\mathcal{G}\rightarrow \mathcal{P}$, where  $s_{+\infty}(\mathcal{P}/\mathcal{G}) = \mathcal{A}$ (see Figure~\ref{fig_one_parameter_family}). These parameterizations are defined using the local differential invariant of curves given by the signed curvature $\kappa$.  The~signed curvature $\kappa$ is the rate of turning angle of the moving frame attached to a parameterized curve. A~visualization of this moving frame is illustrated in Figure~\ref{Noether}. 

More precisely, we introduce a one-parameter family of canonical reparameterizations of curves $\gamma\in\mathcal{P}$ as follows. For~a given $\lambda\in(0, +\infty)$, the~corresponding reparameterization of a curve $\gamma\in\mathcal{P}$ is given by $p_\lambda(\gamma) = \gamma\circ\Phi_\lambda^{-1}$, where $\Phi_\lambda$ depends on $\gamma$ through the following equation involving the signed curvature  $\kappa$ 
of $\gamma$:
\begin{equation}\label{ulambda}
\Phi_\lambda(s) = \frac{\int_0^s \left(\lambda \operatorname{Length}(\gamma) + |\kappa(\gamma(s))|\right) \|\gamma'(s)\| ds}{\int_0^1 \left(\lambda \operatorname{Length}(\gamma) + |\kappa(\gamma(s))|\right) \|\gamma'(s)\| ds}, \quad \lambda >  0.
\end{equation}

Note that the function $s\mapsto \int_0^s \left(\lambda \operatorname{Length}(\gamma) + |\kappa(\gamma(s))|\right) \|\gamma'(s)\| ds$ is strictly increasing when $\lambda>0$, or~when $[\gamma]$ does not contain flat pieces. In~these cases, $\Phi_\lambda$ is an orientation-preserving diffeomorphism of $\mathbb{R}/\mathbb{Z}$ fixing $0\in\mathbb{R}/\mathbb{Z}$.  In~the case $\lambda = 0$ and $\kappa = 0$ on some non-empty interval, the~map $\Phi_0$ defined by
\begin{equation}
\Phi_0(s) = \frac{\int_0^s  |\kappa(\gamma(s))| \|\gamma'(s)\| ds}{\int_0^1  |\kappa(\gamma(s))|\|\gamma'(s)\| ds}, 
\end{equation}
is not injective and its graph presents horizontal portions.
Consequently, $\Phi_0$ is not a diffeomorphism, but~it belongs to the semi-group of generalized
reparametrizations~\cite{Bruveris}. In~other words, $\Phi_0$ is the limit of the  diffeomorphisms $\Phi_\lambda$ when $\lambda \rightarrow 0$, and~$p_0(\gamma)$ can be defined as the limit of $p_\lambda(\gamma)$ in an appropriate~topology.

\begin{Remark}
    In Equation (6)~\cite{TumCan}, another family of curvature-weighted parameterizations was introduced to assign a prescribed anatomical location to sample points on bone contours extracted from X-ray scans. It was used to measure the evolution of Rheumatoid Arthritis in a consistent way.
\end{Remark}

\subsection{Different Ways to Define a Riemannian Metric on Unparameterized~Curves}

In~\cite{TumPre2}, the~authors present three different methods for quantifying dissimilarities in quotient spaces based on Riemannian geometry. These methods consist of defining a Riemannian metric on the quotient space $\mathcal{P}/\mathcal{G}$, which allows us to compute the length of paths in $\mathcal{P}/\mathcal{G}$. The~distance between two points $[\gamma_1]$ and $[\gamma_2]$ in $\mathcal{P}/\mathcal{G}$ (hence between two contours in the plane) is then defined as the infimum of the length of all paths connecting $[\gamma_1]$ to $[\gamma_2]$. We recall, briefly, these three points of~view.

\subsubsection{Quotient~Metric} 
The first method consists of endowing the space $\mathcal{P}$ with a $\mathcal{G}$-invariant Riemannian metric. In~this case, the~Riemannian metric on $\mathcal{P}$ descends to a Riemannian metric on the quotient space, called the quotient metric. A~large body of literature is devoted to this method (see~\cite{Mennucci_book,Sundaramoorthi,Overview} and the references therein). 
For this~method,
\begin{itemize}
    \item[(i)] Computing the distance between two points $[\gamma_1]$ and $[\gamma_2]$ relies on two optimization steps: First, the computation of the minimal path between  $\gamma_1$ and a element in the orbit of $\gamma_2$. Second, the optimization over the infinite-dimensional group of reparameterizations acting on $\gamma_2$.
    \item[(ii)]  The Riemannian metric on $\mathcal{P}$ is, in general, difficult to adjust to applications since the horizontal space may be difficult to compute.
    \item[(iii)]  The added dimensions (infinitely many) that are going from $\mathcal{P}/\mathcal{G}$ to $\mathcal{P}$ are dimensions that are irrelevant for the analysis of data living in the quotient space, but they~need to be taken into account, particularly in the second optimization step.
\end{itemize}
A class of reparameterization-invariant Riemannian metrics on curves, called elastic metrics, was introduced in~\cite{MioSrivastavaJoshi}. It corresponds to a $2$-parameter family of Riemannian metrics $G^{a,b}$  penalizing bending as well as stretching. In~\cite{Srivastava2011b},  it was shown that, for~a certain relation between the parameters, the~resulting metric is flat on parameterized open curves. A~similar method for simplifying the analysis of plane curves was introduced in~\cite{Younes2008}. 
These results have been generalized in~\cite{Bauer et al.}, where the authors introduced another family of metrics, including the  metrics from~\cite{MioSrivastavaJoshi,Younes2008}, which can be described using the restrictions of flat metrics to some cones. The~flattening map has been significantly simplified in~\cite{NeedhamKurtek} and the previous cones interpreted as Regge cones.
In~\cite{Lahiri}, a~precise algorithm for the matching problem of piecewise linear curves is implemented, giving a tool to compare contours in a meaningful way. For~other parameter values, the~$F^{a,b}$ transform introduced in~\cite{NeedhamKurtek} allows us to extend the precise algorithm of~\cite{Lahiri} to arbitrary parameter values $(a, b)$. Approximations of these algorithms using neural networks were implemented in~\cite{Hartman}. We believe that the results obtained do not justify the choice of these computationally intensive designs and are looking for more sustainable~solutions.

\subsubsection{Immersion~Metric} 
The second method consists of identifying the quotient space with the range of a smooth section $s: \mathcal{P}/\mathcal{G}\rightarrow \mathcal{P}$ and endowing the submanifold $s(\mathcal{P}/\mathcal{G})\subset \mathcal{P}$ with a Riemannian metric, such as those induced by a Riemannian metric on $\mathcal{P}$. In~this case, the~Riemannian metric on $\mathcal{P}$ does not need to be $\mathcal{G}$-invariant. For~this~method, 
\begin{itemize}
    \item[(i)] Computing the distance between two points $[\gamma_1]$ and $[\gamma_2]$ relies on one optimization step with constraint: it consists of minimizing the length of  paths constrained to remain in the submanifold $s(\mathcal{P}/\mathcal{G})\subset \mathcal{P}$.
    \item[(ii)]  The dimension of the space is preserved, since the quotient space $\mathcal{P}/\mathcal{G}$ and the range of the section $s$ are diffeomorphic.
    \item[(iii)] The section $s$ can be adapted to applications (we will see some optimization for sections $s$ in the present paper).
\end{itemize}
Let us mention that, since the quotient space $\mathcal{P}/\mathcal{G}$ and the range of any section $s:\mathcal{P}/\mathcal{G}\rightarrow \mathcal{P}$ are diffeomorphic, any quotient metric on $\mathcal{P}/\mathcal{G}$ can be push-forward to the range $s(\mathcal{P}/\mathcal{G})$ of any section $s$. In~\cite{TumPre}, the~authors have transported a particular family of quotient metrics, called elastic metrics, to~the space of arc-length parameterized~curves.

\subsubsection{Gauge-Invariant~Metric} 
The third method was introduced in~\cite{Tpami} (see also~\cite{Notices}) and consists of defining a non-negative  metric on $\mathcal{P}$ (i.e., a non-negative symmetric bilinear form on the tangent bundle $T\mathcal{P}$), called a gauge-invariant metric, whose kernel coincides exactly with the direction of the fibers of the canonical projection $\pi:\mathcal{P}\rightarrow \mathcal{P}/\mathcal{G}$, hence descending to a non-degenerate Riemannian metric on the quotient space. The~idea behind this construction is that the vertical directions of the fiber bundle $\pi:\mathcal{P}\rightarrow \mathcal{P}/\mathcal{G}$ are irrelevant for the analysis of the data in the quotient space $\mathcal{P}/\mathcal{G}$; therefore, they should not interfere in the computation of distances in the quotient space. For~this~method,
\begin{itemize}
    \item[(i)] The dimensions irrelevant to the analysis of the quotient space do not play any role, since they do not contribute to the cost function. 
    \item[(ii)] A reparameterization of curves can be performed on the fly without affecting the minimization algorithms. 
    \item[(iii)] During a path-straightening algorithm for determining a geodesic in the quotient space, the~paths can be lifted to $\mathcal{P}$ and reparameterized with time-dependent reparameterizations without affecting downstream analysis, allowing for more robust algorithms to be designed and improving their convergence. 
\end{itemize}
An example of application of this method to curves for action recognition is given in~\cite{Curves}.

\subsection{The Geodesic Distance Function Associated with a Riemannian~Metric}

Recall that the geodesic distance between two points in a Riemannian manifold is defined as the infimum of the lengths of curves connecting these two points.  For~a finite-dimensional manifold, this distance is non-degenerate and allows one to separate points. In~other words, the~geodesic distance between two points in a finite-dimensional Riemannian manifold is zero if and only if these two points~coincide.

In an infinite-dimensional setting, the~geodesic distance function associated with a Riemannian metric can be degenerate. The~first example of this infinite-dimensional phenomenon was explicitly given in~\cite{MM}. In~this paper, the~authors considered  the reparameterization-invariant $L^2$-Riemannian metric on the space of parameterized $2D$-curves, and~the induced quotient metric on the space of unparameterized $2D$-curves. They proved that the quotient metric admits a vanishing geodesic distance function. In~other words, the~geodesic distance between any pair of curves is~zero. 

Clearly, when the distance function is degenerate, it cannot be used to measure the dissimilarities between pairs of points in the manifold. For~this reason, as well as to avoid computationally costly optimization steps, we propose in this paper another strategy to measure the dissimilarity between contours in the~plane.

\subsection{Proposed Distance Between Oriented~Contours}\label{section_distance}

Recall that $\mathcal{P}$ defined in \eqref{P} is the space of embedded closed $2D$-curves. As~a space of smooth functions on the compact manifold $\mathbb{S}^1$ with values in $\mathbb{R}^2$, it is contained in the Hilbert space of square-integrable functions on $\mathbb{S}^1$ with values in $\mathbb{R}^2$, denoted by $L^2(\mathbb{S}^1, \mathbb{R}^2)$. Recall that the scalar product in $L^2(\mathbb{S}^1, \mathbb{R}^2)$ is given by
\begin{equation}\label{L2}
    \langle f, g\rangle_{L^2} = \int_{\mathbb{S}^1} f(t)\cdot g(t) dt,
\end{equation}
where the dot denotes the scalar product on $\mathbb{R}^2$. The~corresponding norm is given by
\begin{equation}\label{L2norm}
    \| f\|_{L^2} = \left(\int_{\mathbb{S}^1} \|f(t)\|^2 dt\right)^{\frac{1}{2}}.
\end{equation}

Since the scalar product \eqref{L2} is not invariant by the group of reparameterizations $\mathcal{G}$, it cannot be used directly to measure the dissimilarity between oriented contours, since the result would depend on the way the contours are parameterized. However, if~we fix the way contours are parameterized by choosing a canonical parameterization $s:\mathcal{P}/\mathcal{G}\rightarrow \mathcal{P}$, then any oriented contour $[\gamma]$ is associated with a unique function $s([\gamma])$ in $L^2(\mathbb{S}^1, \mathbb{R}^2)$, and~we can measure the distance between $[\gamma_1]$ and $[\gamma_2]$ as
\begin{equation}\label{d2}
d_s([\gamma_1], [\gamma_2]) = \|s([\gamma_1]) - s([\gamma_2])\|_{L^2}.
\end{equation}

In other words, the~$L^2$-distance is restricted to the subset $s(\mathcal{P}/\mathcal{G})$, which is in one-to-one correspondence with the quotient space $\mathcal{P}/\mathcal{G}$ consisting of oriented contours.
The distance on the space of contours $\mathcal{P}/\mathcal{G}$ given by \eqref{d2} is non-degenerate:
\begin{Proposition}\label{distance_non_zero}
For any section $s:\mathcal{P}/\mathcal{G}\rightarrow \mathcal{P}$, and~any oriented contours $[\gamma_1]$ and $[\gamma_2]$ in $\mathcal{P}/\mathcal{G}$, one has
\begin{equation}\label{ds}
    d_s([\gamma_1], [\gamma_2]) = 0 \Leftrightarrow [\gamma_1] = [\gamma_2].
\end{equation}
\end{Proposition}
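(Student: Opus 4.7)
The proof is essentially a matter of unpacking definitions, so my plan is to verify each implication of the equivalence in turn, with the key ingredient being the injectivity of any section.

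\textbf{Easy direction ($\Leftarrow$).} First I would observe that if $[\gamma_1] = [\gamma_2]$, then since $s$ is a well-defined map $\mathcal{P}/\mathcal{G} \rightarrow \mathcal{P}$, we have $s([\gamma_1]) = s([\gamma_2])$ as elements of $\mathcal{P} \subset L^2(\mathbb{S}^1, \mathbb{R}^2)$, and therefore $d_s([\gamma_1], [\gamma_2]) = \|s([\gamma_1]) - s([\gamma_2])\|_{L^2} = 0$.

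\textbf{Nontrivial direction ($\Rightarrow$).} Assume $d_s([\gamma_1], [\gamma_2]) = 0$. Since the $L^2$-norm defined in \eqref{L2norm} is a genuine norm on the Hilbert space $L^2(\mathbb{S}^1, \mathbb{R}^2)$, this forces $s([\gamma_1]) = s([\gamma_2])$ as elements of $L^2(\mathbb{S}^1, \mathbb{R}^2)$, i.e.\ almost everywhere on $\mathbb{S}^1$. However, both $s([\gamma_1])$ and $s([\gamma_2])$ lie in $\mathcal{P}$ and are in particular continuous (indeed smooth) functions on $\mathbb{S}^1$; two continuous functions that agree almost everywhere agree everywhere, so $s([\gamma_1]) = s([\gamma_2])$ as elements of $\mathcal{P}$.

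\textbf{Conclusion via the section property.} The final step invokes Definition~\ref{def_section}: since $s$ is a section of $\pi$, we have $\pi \circ s = \mathrm{Id}_{\mathcal{P}/\mathcal{G}}$, which makes $s$ injective. Applying $\pi$ to the equality $s([\gamma_1]) = s([\gamma_2])$ yields
\begin{equation}
[\gamma_1] = \pi(s([\gamma_1])) = \pi(s([\gamma_2])) = [\gamma_2],
\end{equation}
as desired. The ``main obstacle'' is really just the transition from ``equal in $L^2$'' to ``equal as elements of $\mathcal{P}$'', which is immediate from continuity; no estimate or geometric argument is needed, because all the work has been pushed into the injectivity of the section.
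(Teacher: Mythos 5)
Your proposal is correct and follows essentially the same argument as the paper: vanishing $L^2$-norm gives equality almost everywhere, smoothness/continuity upgrades this to equality everywhere in $\mathcal{P}$, and the section identity $\pi\circ s = \mathrm{Id}_{\mathcal{P}/\mathcal{G}}$ then forces $[\gamma_1]=[\gamma_2]$. The only difference is that you spell out the trivial converse direction, which the paper dismisses in one line.
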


\begin{proof}[Proof of Proposition~\ref{distance_non_zero}]
    \textls[-12]{Suppose that $d_s([\gamma_1], [\gamma_2]) = 0$. By~definition \eqref{d2},   {$\|s([\gamma_1]) - s([\gamma_2])\|_{L^2}$$ = 0$}.} Since $L^2(\mathbb{S}^1, \mathbb{R}^2)$ is a Hilbert space, this implies that $s([\gamma_1]) = s([\gamma_2])$ as elements in $L^2(\mathbb{S}^1, \mathbb{R}^2)$, and is thus almost everywhere. Since both $s([\gamma_1])$ and $s([\gamma_2])$ are smooth functions, one has $s([\gamma_1])(t) = s([\gamma_2])(t)$ for any $t\in \mathbb{R}/\mathbb{Z}$.  Consequently, 
    $\pi(s([\gamma_1])) = \pi(s([\gamma_2]))$. But~by the Definition~\ref{def_section} of a section, $\pi\circ s = \textrm{Id}_{\mathcal{P}/\mathcal{G}}$. Hence, $[\gamma_1] = [\gamma_2]$.
    The other implication is trivial.
\end{proof}

\begin{Proposition}\label{triangular_inequality}
    For any smooth section $s: \mathcal{P}/\mathcal{G}\rightarrow \mathcal{P}$,  $d_s:\mathcal{P}/\mathcal{G}\times \mathcal{P}/\mathcal{G}\rightarrow [0, +\infty)$ defined by \eqref{d2} satisfies the triangular inequality. 
\end{Proposition}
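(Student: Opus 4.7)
The plan is to observe that $d_s$ is nothing but the pullback, via the injective map $s:\mathcal{P}/\mathcal{G}\to L^2(\mathbb{S}^1,\mathbb{R}^2)$, of the metric on $L^2(\mathbb{S}^1,\mathbb{R}^2)$ induced by the norm $\|\cdot\|_{L^2}$ defined in \eqref{L2norm}. Since $L^2(\mathbb{S}^1,\mathbb{R}^2)$ is a normed (in fact Hilbert) space, its norm satisfies the triangular inequality $\|f-h\|_{L^2}\leq \|f-g\|_{L^2}+\|g-h\|_{L^2}$ for all $f,g,h$, so the claim will reduce to a one-line computation applying this inequality to $f=s([\gamma_1])$, $g=s([\gamma_2])$, $h=s([\gamma_3])$.

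Concretely, I would fix three oriented contours $[\gamma_1],[\gamma_2],[\gamma_3]\in\mathcal{P}/\mathcal{G}$ and note that by Proposition~\ref{distance_non_zero} (or directly by \eqref{d2}) the values $d_s([\gamma_i],[\gamma_j])$ are finite non-negative real numbers since $s([\gamma_i])\in\mathcal{P}\subset L^2(\mathbb{S}^1,\mathbb{R}^2)$. Writing $f_i:=s([\gamma_i])\in L^2(\mathbb{S}^1,\mathbb{R}^2)$, the triangular inequality for the $L^2$-norm (which itself follows from the Cauchy--Schwarz inequality applied to the inner product \eqref{L2}) yields
\begin{equation*}
d_s([\gamma_1],[\gamma_3]) = \|f_1-f_3\|_{L^2} \leq \|f_1-f_2\|_{L^2} + \|f_2-f_3\|_{L^2} = d_s([\gamma_1],[\gamma_2]) + d_s([\gamma_2],[\gamma_3]),
\end{equation*}
which is exactly the required inequality.

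There is essentially no obstacle here: the smoothness of $s$ is not used, nor is its relation to the bundle projection $\pi$; only the fact that $s$ takes values in the normed space $L^2(\mathbb{S}^1,\mathbb{R}^2)$ matters. In fact, combined with Proposition~\ref{distance_non_zero}, this argument shows that $d_s$ is a genuine metric on $\mathcal{P}/\mathcal{G}$, obtained by transporting the $L^2$-metric to the quotient via the section.
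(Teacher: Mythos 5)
Your proof is correct and is essentially identical to the paper's: both simply apply the triangle inequality for the $L^2$-norm to the three elements $s([\gamma_1])$, $s([\gamma_2])$, $s([\gamma_3])$ of $L^2(\mathbb{S}^1,\mathbb{R}^2)$. Your additional remarks (that smoothness of $s$ plays no role, and that together with Proposition~\ref{distance_non_zero} one gets a genuine metric) are accurate and match the paper's subsequent remark.
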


\begin{proof}
    Consider any smooth section $s: \mathcal{P}/\mathcal{G}\rightarrow \mathcal{P}$, as well as the contours $[\gamma_1], [\gamma_2]$ and $[\gamma_3]$ in $\mathcal{P}/\mathcal{G}$.
    One has
    \[
    \begin{array}{ll}
d_s([\gamma_1], [\gamma_3]) & = \|s([\gamma_1]) - s([\gamma_3])\|_{L^2} \leq \|s([\gamma_1]) - s([\gamma_2])\|_{L^2} + \|s([\gamma_2]) - s([\gamma_3])\|_{L^2} \\ & \leq d_s([\gamma_1], [\gamma_2])+ d_s([\gamma_2], [\gamma_3]),
\end{array}
    \]
where we used the triangle inequality in $L^2(\mathbb{S}^1, \mathbb{R}^2)$.
\end{proof}
\begin{Remark}
    It follows from Proposition~\ref{distance_non_zero} and \ref{triangular_inequality} that $d_s$ is indeed a distance function on the quotient space $\mathcal{P}/\mathcal{G}$, i.e.,~it is non-negative, symmetric, non-degenerate, and satisfies the triangle inequality.
\end{Remark}

\begin{Remark}
Propositions~\ref{distance_non_zero} and \ref{triangular_inequality} can be generalized to any norm on the space of functions from $\mathbb{S}^1$ to $\mathbb{R}^2$. The~$L^2$-norm was chosen since it is well suited for the datasets we are considering in Sections~\ref{results} and \ref{Testing}. For~instance, a~leaf with peduncle and a leaf without peduncle belonging to the same class of leaves, see Section~\ref{section_scaling}, are close in distance when induced by the $L^2$-norm but distant if we use the $L_\infty$-norm instead.
\end{Remark}

\subsection{Metric~Learning}\label{sec:metric_learning}
Metric learning is a branch of Geometric Learning devoted to learning a distance function from a dataset. It emerged from the observation that the Euclidean distance of the ambient space in which the dataset is encoded may not be the best choice for measuring distances. Application-driven metric learning aims to design a distance function that measures similarities between sample points in a pertinent way for the application at~hand. 

In the present paper, we propose a metric learning algorithm based on an optimization over the section $s$. 
The distance defined in \eqref{d2} clearly depends on the choice of section $s:\mathcal{P}/\mathcal{G}\rightarrow \mathcal{P}$. Given a contour classification task, we can optimize the section $s$ to obtain the best separation between classes on the training set. The~quality of a clustering in a metric space can be measured using different validation indices (see Section~\ref{validation_indices}), such as the Dunn index (Equation~\eqref{Dunn}). In~Section~\ref{Section_clock_parameterization}, we present a $2$-parameter family of sections $s_{\lambda, n}$ that is used to define distance functions on contours using Equation~\eqref{d2}. The~optimization of the corresponding cluster validation indices is performed in Section~\ref{section_geometric_learning} for the leaf dataset.
The improvement of the classification performance is analyzed in Section~\ref{Testing}.

In the finite-dimensional context, the~field of supervised PAC (Probably Approximately Correct) learning provides theoretical guaranties that explain why and when supervised learning algorithms work. 
For PAC-Bayes guaranties to learning settings with non-compact, finite-dimensional symmetries, we refer the reader to the recent paper~\cite{Beck}. As~far as we know, such a Bayesian approach has not been investigated for infinite-dimensional groups of~symmetries.

\subsection{Validation Indices of a~Clustering}\label{validation_indices}

In order to quantify how the choice of different sections influences the distances between samples, we use two cluster validation indices, the~Dunn index (\cref{subsection_Dunn}) and the Davies Bouldin index (\cref{Davies Bouldin}). A~comparison of these two indices is made in Section~\ref{Testing}. For~a discussion and comparison of more general cluster validation techniques, we refer the reader to~\cite{Bezdek,BolAzu}. 

\subsubsection{Dunn Index of a~Clustering}\label{subsection_Dunn}
In order to measure clustering efficiency in the algorithms we describe below, we use the Dunn index \cite{Dunn}, which measures the ratio between the minimal interclass distance to the maximal intraclass distance. A~high Dunn index characterizes dense and well-separated clusters, with~a small variance between members of a cluster and different clusters sufficiently far apart, as~compared to the within-cluster~variance.

The Dunn index is computed as follows. For~each class $C_k$, $1\leq k\leq K$ ($K$ being the number of classes), we compute the centroid $c_k$ of class $C_k$  as the mean of this class. 
In practice, the~average of the positions of the points along the contours 
gives the average shape.
The distance between classes $D_{\textrm{inter}}(k_1,k_2)$ is calculated as the distance between the centroid $c_{k_1}$  of class $C_{k_1}$ and the centroid and $c_{k_2}$ of class $C_{k_2}$
\begin{equation}
    D_{\textrm{inter}}(k_1, k_2) = d_{s}(c_{k_1}, c_{k_2}),
\end{equation}
where $d_s$ is defined in Equation~\eqref{d2} for a given section $s: \mathcal{P}/\mathcal{G}\rightarrow \mathcal{P}$ of the fiber bundle of parameterized contours (we will start with the section of arc-length parameterized contours, and~optimize over a two-parameter family of sections in Section~\ref{section_geometric_learning}). 
The distance between classes $D_{\textrm{intra}}(k)$ is measured as the maximum distance between any pair of elements in class $C_k$:
\begin{equation}
    D_{\textrm{intra}}(k) = \max_{i, j\in C_k} d_s(i,j).
\end{equation}
The Dunn index is defined as follows, with $K$ being the number of classes:
\begin{equation}\label{Dunn}
    \operatorname{Dunn}_{\lambda, n} = \frac{\min_{1\leq k_1<k_2\leq K}D_{\textrm{inter}}(k_1,k_2)}{\max_{1\leq k\leq K} D_{\textrm{intra}}(k)}. 
\end{equation}

\subsubsection{Davies Bouldin Index of a~Clustering}\label{Davies Bouldin}

An alternative measure of clustering efficiency is the \textit{{Davies Bouldin index}}~\cite{Davies_Bouldin} that measures the maximal ratio between the spread of two classes and the distance between their centroids. The~Davies Bouldin index varies between $0$ and $+\infty$, where a low index corresponds to a better classification. 
As with the Dunn index, for~each class $C_k$, $1\leq k\leq K$ ($K$ being the number of classes), the~centroid $c_k$ of class $C_k$ is computed  as the mean of this class. Then, the mean distance $\bar{\delta_k}$ of the elements of the class $C_k$ to their centroid $c_k$ is computed as
\begin{equation}
\bar{\delta_k} = \frac{1}{|C_k|} \sum_{i\in C_k}d_s(i, c_k).
\end{equation}
Finally, the~Davies Bouldin index  $DB_{\lambda, n}$ is defined as follows, with $K$ being the number of classes:
\begin{equation}\label{DB_index}
DB_{\lambda, n} = \frac{1}{K}\sum_{k = 1}^K\max_{k'\neq k}\left(\frac{\bar{\delta_k'}+ \bar{\delta}_k}{d_s(c_k', c_k)}\right).
\end{equation}
Due to the averaging of the distances to a centroid over all elements of a class, the~Davies Bouldin index is more stable than the Dunn index in the presence of outliers (see Section~\ref{sec:Flavia}). On~the other hand, the~Dunn index can help detect outliers. By~extracting from the dataset the pairs of samples from the same class maximizing the intraclass distance and the pair of samples from different classes that minimizes the interclass distance (see Section~\ref{section_geometric_learning}), one can spot some~inconsistencies.

\section{Illustration of the~Methodology}\label{results}

\subsection{Database and Pre-Processing~Steps}\label{section_normalization}

\subsubsection{Database} \label{section_dataset}

We used the Swedish leaves dataset from the Link\"opling University, which can be freely downloaded from \href{https://www.cvl.isy.liu.se/en/research/datasets/swedish-leaf/}{https://www.cvl.isy.liu.se/en/research/datasets/swedish-leaf/} (accessed on 8 September 2025).
 This dataset consists of pictures of leaves organized into $15$ classes, with each class containing 75~leaves of the same variety. An~element of each class is illustrated in Figure~\ref{fig0}a, and~the names of the corresponding varieties are listed in Figure~\ref{fig0}b. In~a preliminary step, we extract the contours of the leaves by transforming the pictures into black and white imprints, and~then extract the boundaries of the resulting shapes with an appropriate algorithm (e.g., \texttt{bwboundaries} in Matlab). The~resulting contours are illustrated in Figure~\ref{fig0}c, and~consist of an ordered set of points along the boundary of the leaves. This ordering gives us an initial parameterization $\gamma$ of each~contour.
 
\begin{figure}[h!]
\centering
\subfloat[\centering]{\includegraphics[width = .3\textwidth]{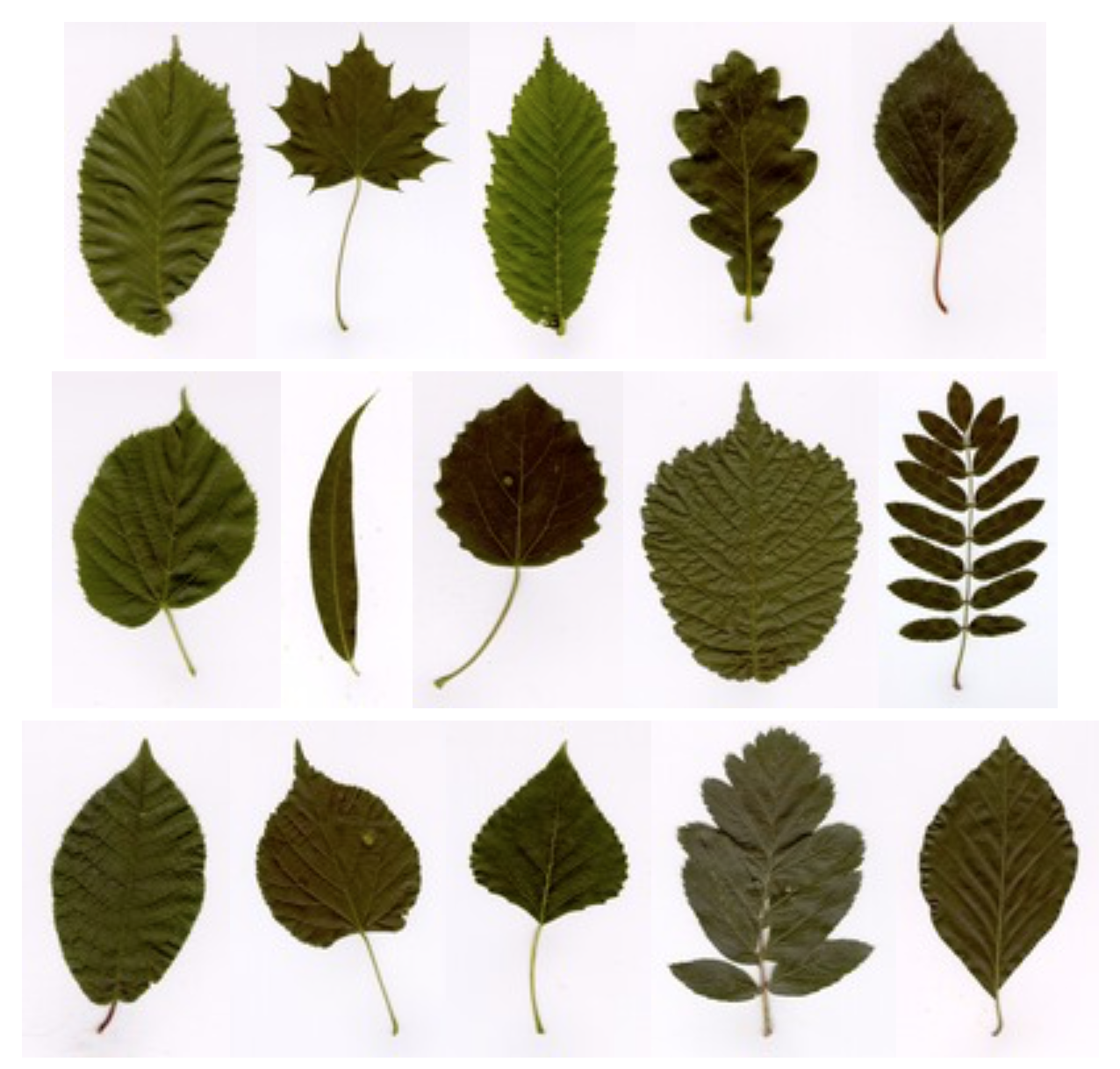}}
\subfloat[\centering]{\includegraphics[width = .3\textwidth]{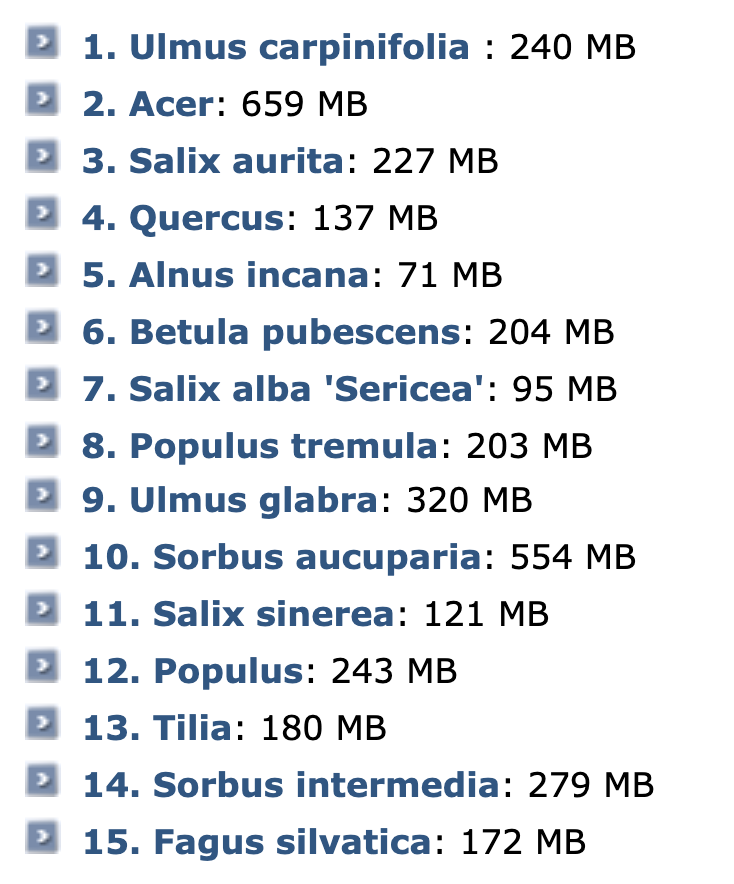}}\\
\subfloat[\centering]{\includegraphics[height = 5cm]{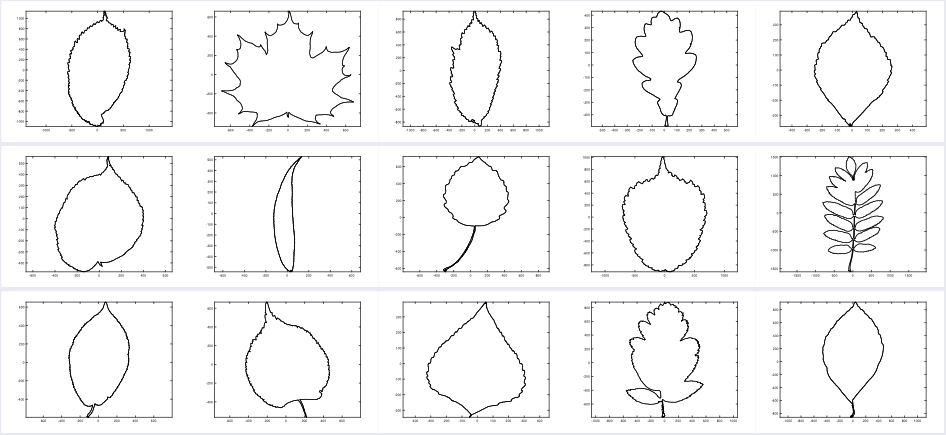}}

\caption{\textbf{Dataset of Swedish leaves from the Link\"opling University dataset} \href{https://www.cvl.isy.liu.se/en/research/datasets/swedish-leaf/}{https://www.cvl.isy.liu.se/en/research/datasets/swedish-leaf/} (accessed on 8 September 2025). (\textbf{a}) A sample image from each class of leaves is depicted (the classes are ordered from left to right and top to bottom) (\textbf{b}) corresponding classes  (\textbf{c}) extracted contours using Matlab's function \texttt{bwboundaries} on binarized images.}
\label{fig0}
\end{figure} 

We divide the resulting set of contours into a training set, containing $50$ contours from each class, as well as~a testing set containing the remaining contours. \textbf{{In particular, the~training set and the testing set are disjointed.}}

\subsubsection{Standardizing the Direction of~Travel}\label{section_counterclockwise}

The initial parameterizations of the contours obtained from the boundary extraction algorithm explained in Section~\ref{section_dataset} induce an orientation, leading to contours following clockwise or counterclockwise. As~a first normalization step, we check if the contours are traveling counterclockwise, and~flip the parameterization of those contours following clockwise. 
In order to automatically detect the orientation of a given contour, we compute the signed area enclosed by the contour.  A~positive signed area corresponds to a contour that traveled counterclockwise, and~a negative area corresponds to a contour that traveled clockwise.
The signed area can be computed using Stokes' Theorem by integration along the contour of a leaf:
\begin{equation} \label{equation_area}
    \operatorname{Area}(\gamma) = \int_\gamma x dy
\end{equation}
In practice, for~the dataset of Swedisch leaves, we did not encounter any contour following clockwise. 
The Dunn index defined by Equation~\eqref{Dunn}, calculated on the training set containing $50$~leaves of each of the $K = 15$ classes, is equal to $0.0286$ when all contours have traveled counterclockwise. It decreases to $0.0127$ when half of the contours chosen randomly have traveled clockwise, and~the other half are traveled counterclockwise. To~have a visual representation of the distance distribution of the leaves according to the distance function given by~\eqref{d2} with respect to the section $s$ consisting of arc-length parameterized contours, we use the  \texttt{tsne} algorithm. The~resulting distribution of leaves in $2$ dimensions with random direction of travel, as well as~for contours that traveled counterclockwise, is given in Figure~\ref{fig_orientation}. 

\begin{figure}[h!]
\centering
\hspace{2cm}
\subfloat[\centering]{\includegraphics[height=7cm,trim= 2cm 2cm 4cm 3cm, clip]{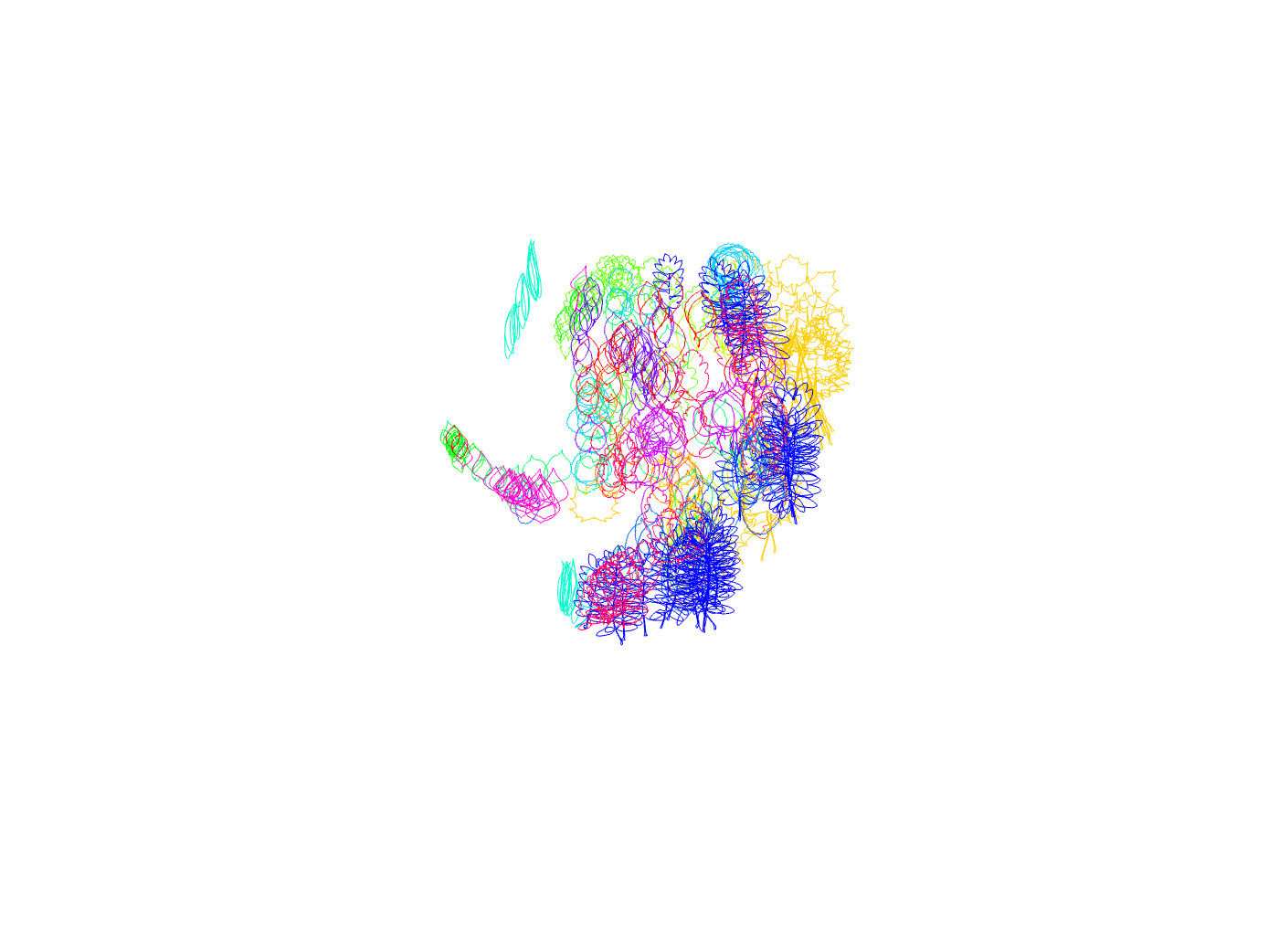}}
\subfloat[\centering]{\includegraphics[height=6cm,trim= 4cm 1cm 2cm 1.7cm,clip]{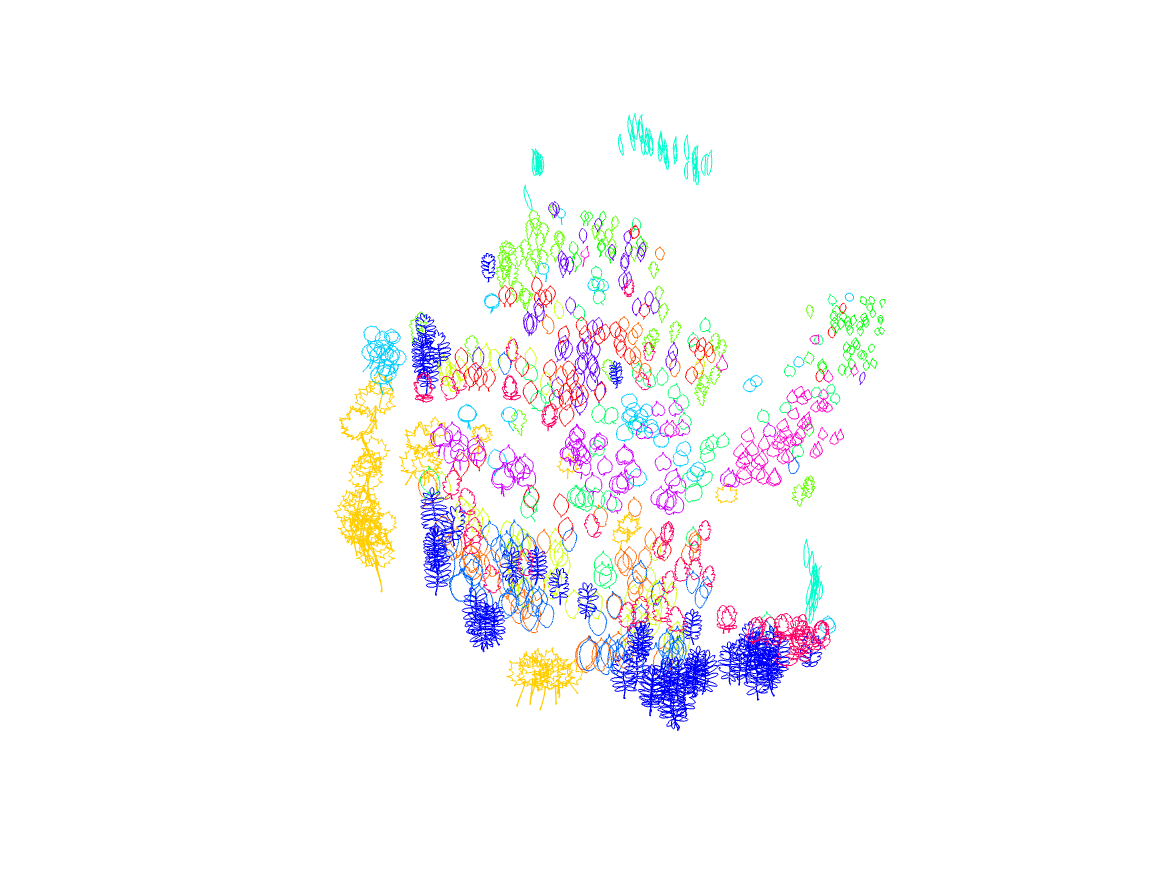}}\\
\caption{\textbf{{Normalization of the orientation variability}}.  Two-dimensional representation of the distance distribution along the dataset using \texttt{tsne} algorithm (\textbf{a}). Before normalization of orientation (half of the contours are traveling clockwise, the~other half counterclockwise), the~Dunn index equals $0.0127$. (\textbf{b}) After orientation normalization (all the contours are traveled counterclockwise), the~Dunn index increases to $0.0286$.} \label{fig_orientation}
\end{figure}

\subsubsection{Standardizing the Starting Point of~Parameterizations}\label{section_rot_parameter}

Since the contour of a leaf is represented by an ordered set consisting of finitely many sample points along the contour, the~starting point of this discretization induces variability that we need to take into account. In~the continuous case, this amounts to standardizing the position of the starting point of the parameterization of contours. This corresponds to the normalization with respect to rotation in parameter space $\mathbb{S}^1$, i.e.,~with respect to the subgroup of rotations $\operatorname{Rot}(\mathbb{S}^1)\subset \operatorname{Diff}^+(\mathbb{S}^1)$, where $\operatorname{Diff}^+(\mathbb{S}^1)$ is the group of orientation-preserving~reparameterizations.

For the dataset of leaves at hand, we detect automatically the point of each contour with the largest vertical component  (which was unique for all contours) and reorder the sample points in such a way that this particular point becomes the starting point. 
In Figure~\ref{fig_starting_point}, we illustrate the distance distribution using the \texttt{tsne} algorithm before and after normalization of the starting points. The~starting points are showcased as black dots along the contours. The~Dunn index increases from $0.0286$ to $0.0328$ after this normalization~step.

\begin{figure}[h!]
\centering
\subfloat[\centering]{\includegraphics[width=8cm,trim= 5cm 1cm 2cm 1cm, clip]{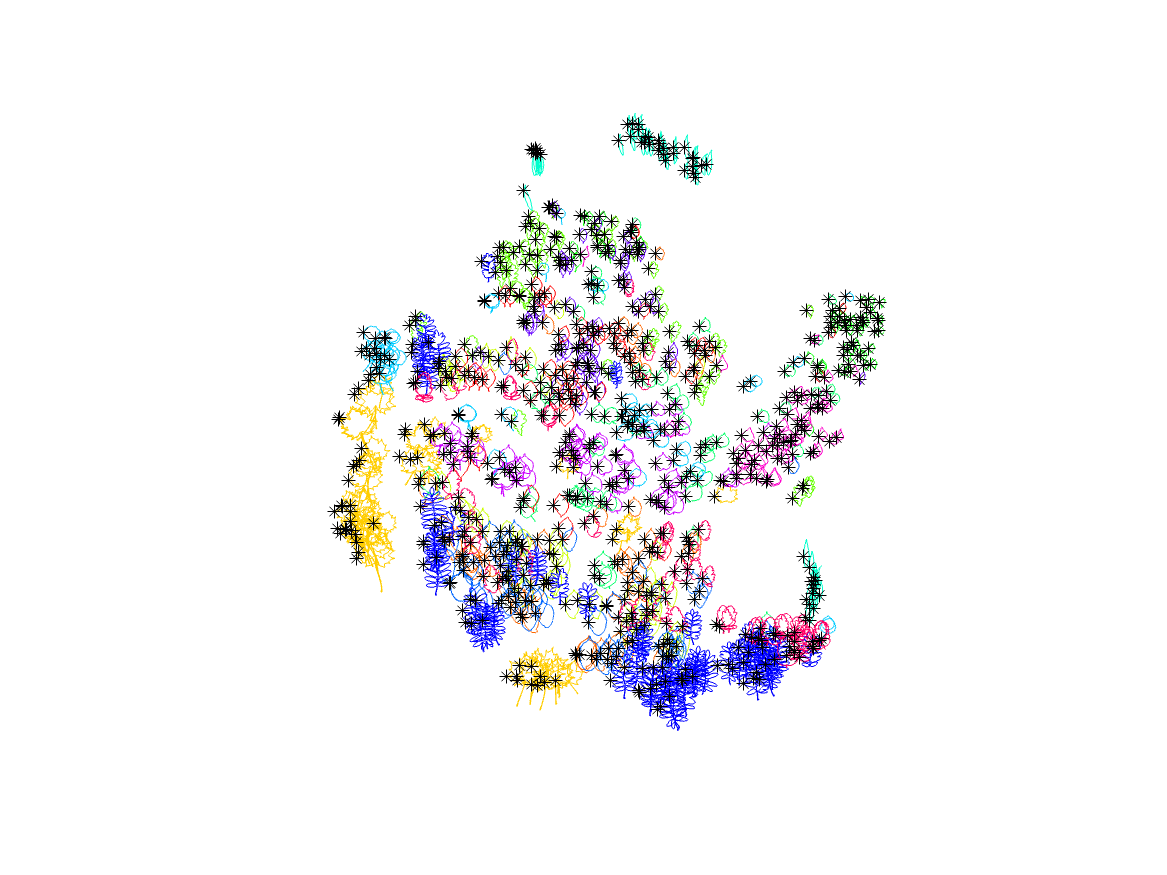}}
\subfloat[\centering]{\includegraphics[width=7.5cm,trim= 2cm 1cm 1cm 4cm, clip]{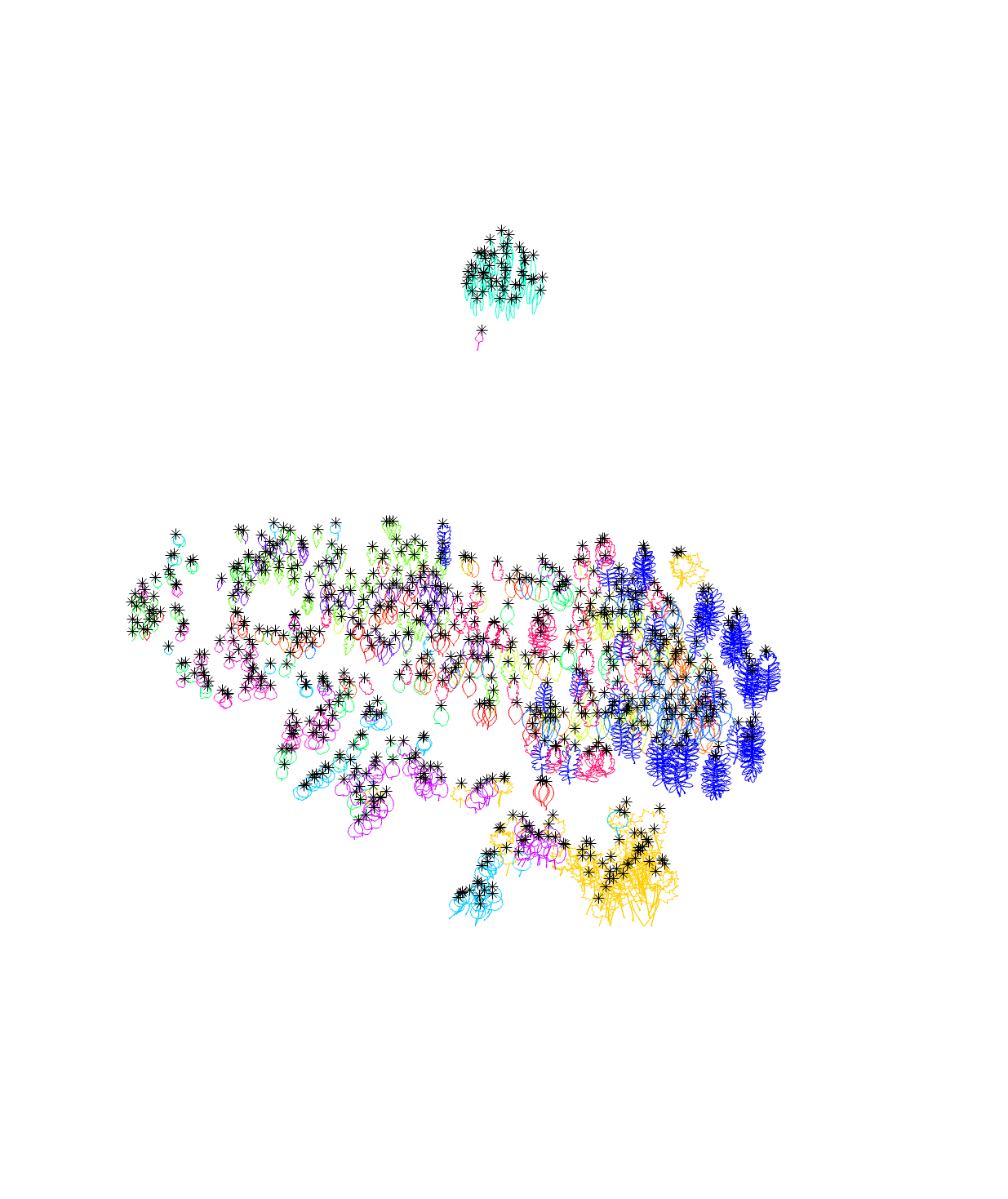}}\\
\caption{\textbf{{Normalization of the starting point variability}}.  Two-dimensional representation of the distance distribution along the dataset using \texttt{tsne} algorithm. (\textbf{a}) Before normalization of the starting points, the Dunn index equals $0.0286$. (\textbf{b}) After starting point normalization, the~Dunn index increases to $0.0328$. The~starting points are depicted as black~dots.} \label{fig_starting_point}
\end{figure}

\subsubsection{Standardizing the Scale~Variability}\label{section_scaling}
The dataset contains leaves of different sizes, as~can be seen in Figure~\ref{fig2}a on 7 samples of Acer leaves. In~order to recognize the class of a leaf irrespective of its size, we need to eliminate the variability of the scale. We tested two normalization procedures:

\begin{itemize}
\item[(a)] \textbf{{Normalization of the length of contours:}} In this normalization method, we first compute the contour length of each leaf and then divide the initial parameterization by this length. 
\item[(b)] \textbf{{Normalization of the enclosed area:}} Each contour is a Jordan curve in the plane and encloses a domain in the plane that corresponds to the surface of the corresponding leaf. In~this normalization step, we compute the area of each leaf using Equation~\eqref{equation_area} and renormalize the initial parameterization to have a unit area by dividing the parameterization by the square-root of (the absolute value of) the area.
\end{itemize}

As can be seen in Figure~\ref{fig2}b,c, normalization to unit-length induces greater intraclass variability compared to normalization to unit-enclosed area.
This is mainly due to the fact that, in the same class,  leaves with peduncles as well as  leaves without peduncles are present. Normalization by unit-length is heavily affected by the presence or absence of a peduncle. In~contrast, the~normalization to curves with unit-enclosed area is not affected by the presence or absence of peduncles, as~peduncles barely contribute to the~area. 

\begin{figure}[h!]
\centering
\subfloat[\centering]{\includegraphics[width=11cm]{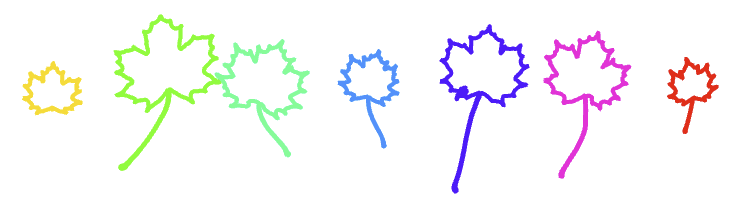}}\\
\subfloat[\centering]{\includegraphics[width=11cm]{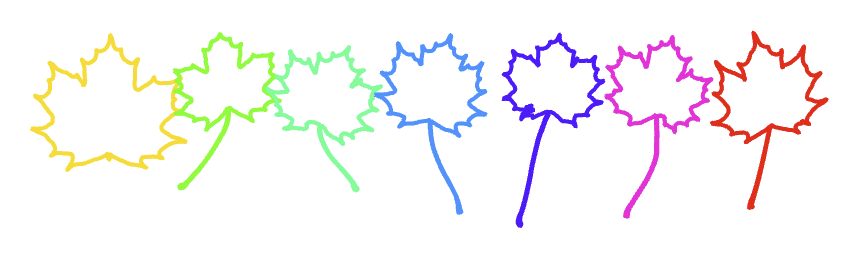}}\\
\subfloat[\centering]{\includegraphics[width=11cm]{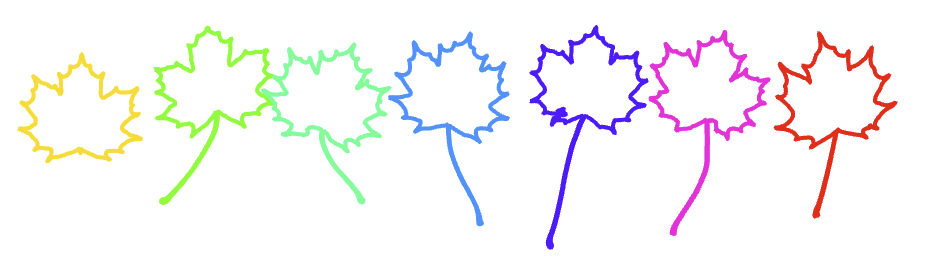}}

\caption{\textbf{{Normalization of the scale variability}}. Seven Acer leaves from the Swedish leaves dataset are used to illustrate two different normalizations of scaling.   (\textbf{a}) Initial contours.    (\textbf{b}) Each contour is rescaled in such a way that the length of the contour is equal to one. This scaling method has the effect of enlarging significantly the first leaf without peduncle. (\textbf{c})  Each contour is rescaled in such a way that the area enclosed by the contour is equal to one. For~this scaling method, the~leaves appear with the similar proportions.}
\label{fig2}
\end{figure}

Despite this fact, the~Dunn index increases to $0.0587$ after normalization by unit-length, and~only to $0.0381$ after normalization by unit area. This is due to the fact that the interclass distance increases more when normalization by the length is used, due to the characteristic boundary shape of different varieties of leaves. This can be seen in Figure~\ref{fig_scale}. In~the sequel, we therefore select the normalization by~unit-length.

\begin{figure}[h!]
\centering
\hspace{2cm}
\subfloat[\centering]{\includegraphics[width=6cm]{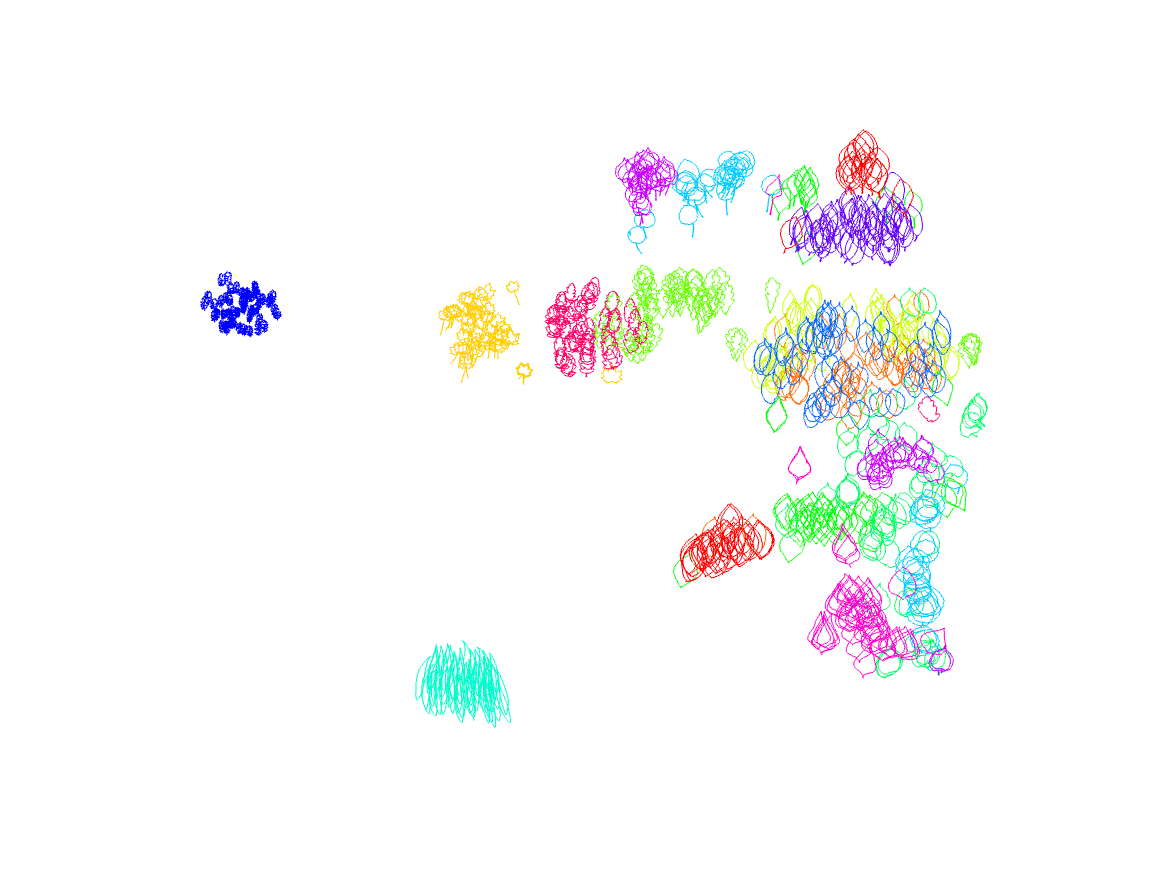}}
\subfloat[\centering]{\includegraphics[width=6cm]{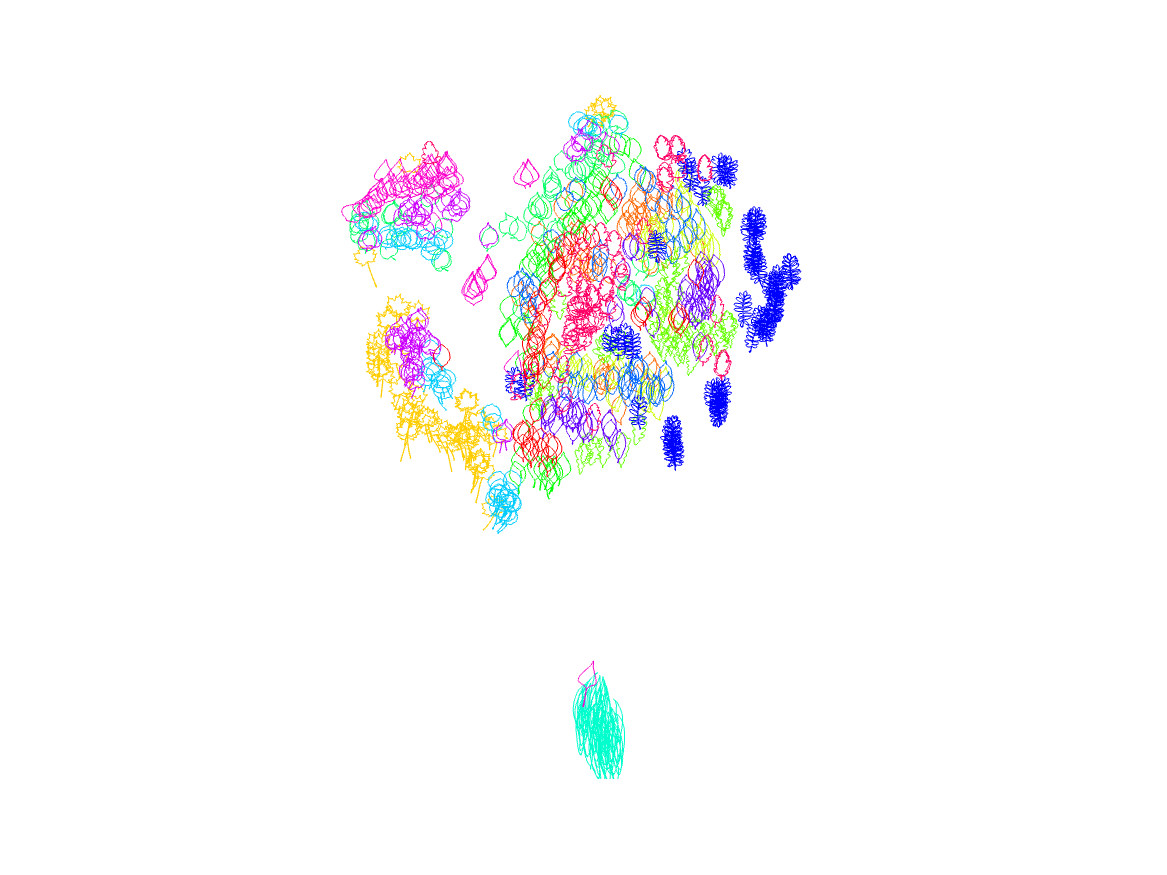}}\\

\caption{\textbf{{Normalization of the scale variability}}.  Two-dimensional representation of the distance distribution along the dataset using \texttt{tsne} algorithm. (\textbf{a}) After normalization to unit-length curves, the Dunn increases to $0.0587$. (\textbf{b}) After normalization to curves enclosing a unit area, the Dunn index increases to $0.0381$.} \label{fig_scale}
\end{figure}

\subsubsection{Standardizing the Position in~Space}\label{section_translation}
The shape of a leaf is invariant by translation in space.
We have tested three normalization procedures that can be used to eliminate the variability in~positions.

\begin{itemize}
\item[(a)] \textbf{{Starting point at the origin:}} for this normalization method, we simply substract the coordinates of the first point visited by the initial parameterized contour, leading to a parameterized curve starting at $(0,0)\in\mathbb{R}^2$.
\item[(b)] \textbf{{Center of mass of the contour at the origin:}} in this normalization method, we compute the coordinates $(\bar{x}, \bar{y})$ of the center of mass of the contour as the mean of the coordinated of points visited by the initial parameterization $\gamma(s) = (x(s), y(s))$:
\begin{equation}\label{center_line}
    \begin{array}{l}
    \bar{x} = \frac{1}{\operatorname{Length}(\gamma)}\int_0^1 x(s) \|\gamma'(s)\|ds\\
    \bar{y} = \frac{1}{\operatorname{Length}(\gamma)}\int_0^1 y(s) \|\gamma'(s)\|ds\\
    \end{array}
\end{equation}
and then we substract the coordinates of this center of mass from the initial parameterization.
\item[(c)] \textbf{{Center of gravity of the enclosed area at the origin:}} in this normalization method, we compute the coordinated $(\hat{x}, \hat{y})$ of the center of gravity of the area enclosed by the contour (i.e., of the surface of the corresponding leaf) by using Stokes theorem:
\begin{equation}\label{center_line_Stokes}
    \begin{array}{l}
    \hat{x} = \frac{1}{2\operatorname{area}(\gamma)}\int_\gamma x^2 dy\\
    \hat{y} = -\frac{1}{2\operatorname{area}(\gamma)}\int_\gamma y^2 dx\\
    \end{array}
\end{equation}
and then we substract the coordinates of this center of gravity from the initial parameterization.
\end{itemize}

As can be seen in Figure~\ref{fig3}a, on seven Acer leaves from the Swedish dataset, the~positions of the first points (in black), the~centers of mass of the contours (in orange), and~the centers of gravity of the enclosed areas (in purple) are different. In~this experiment, the~initial parameterization is counterclockwise, the~starting point of each parameterized curve coincides with the point of the contour with the largest vertical coordinate (see Section~\ref{section_rot_parameter}) and the scaling is by unit-length.

\begin{figure}[h!]
\centering

\subfloat[\centering]{\includegraphics[width=16.0cm, trim = 0cm 7cm 1cm 7cm, clip]{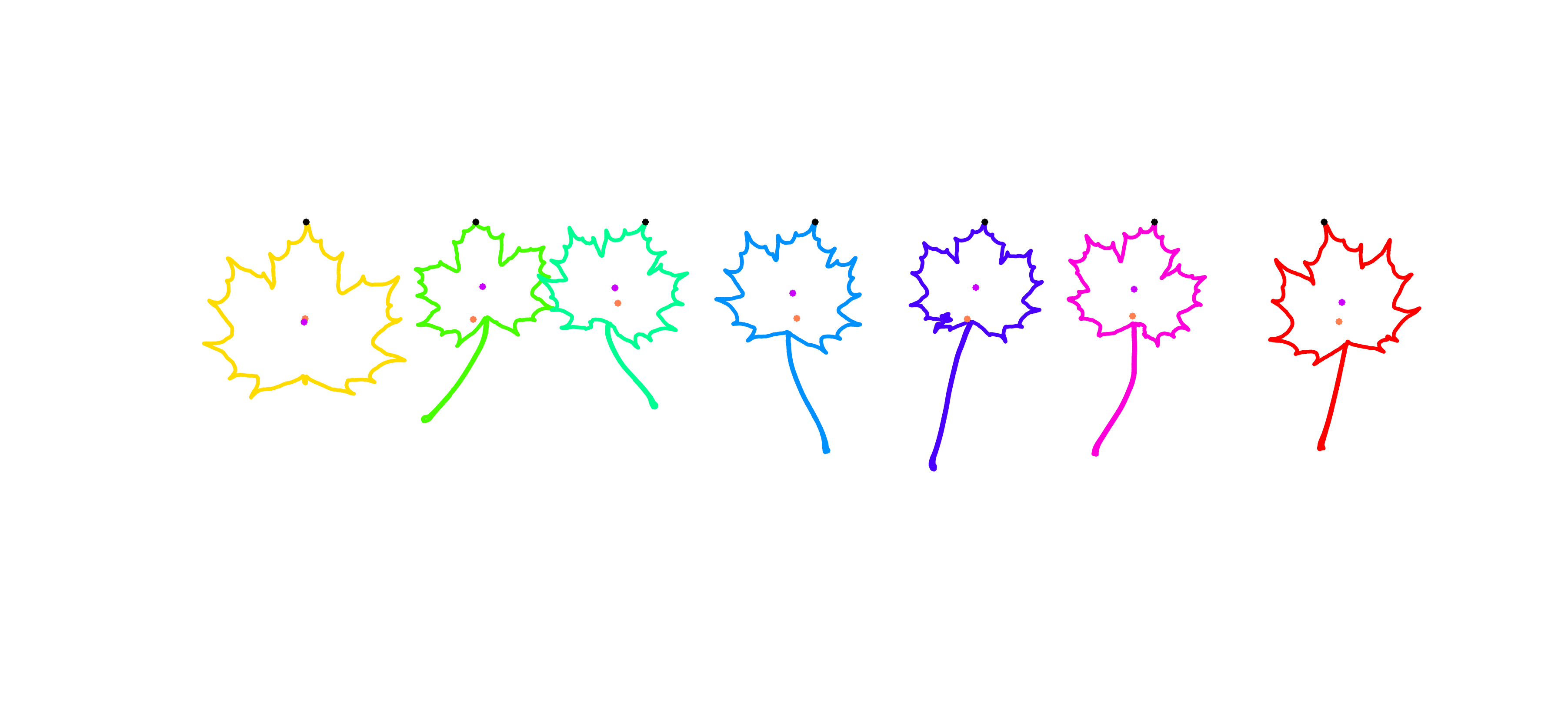}}\\

\subfloat[\centering]{\includegraphics[height=3cm,trim = 0cm 1cm 4cm 1cm, clip]{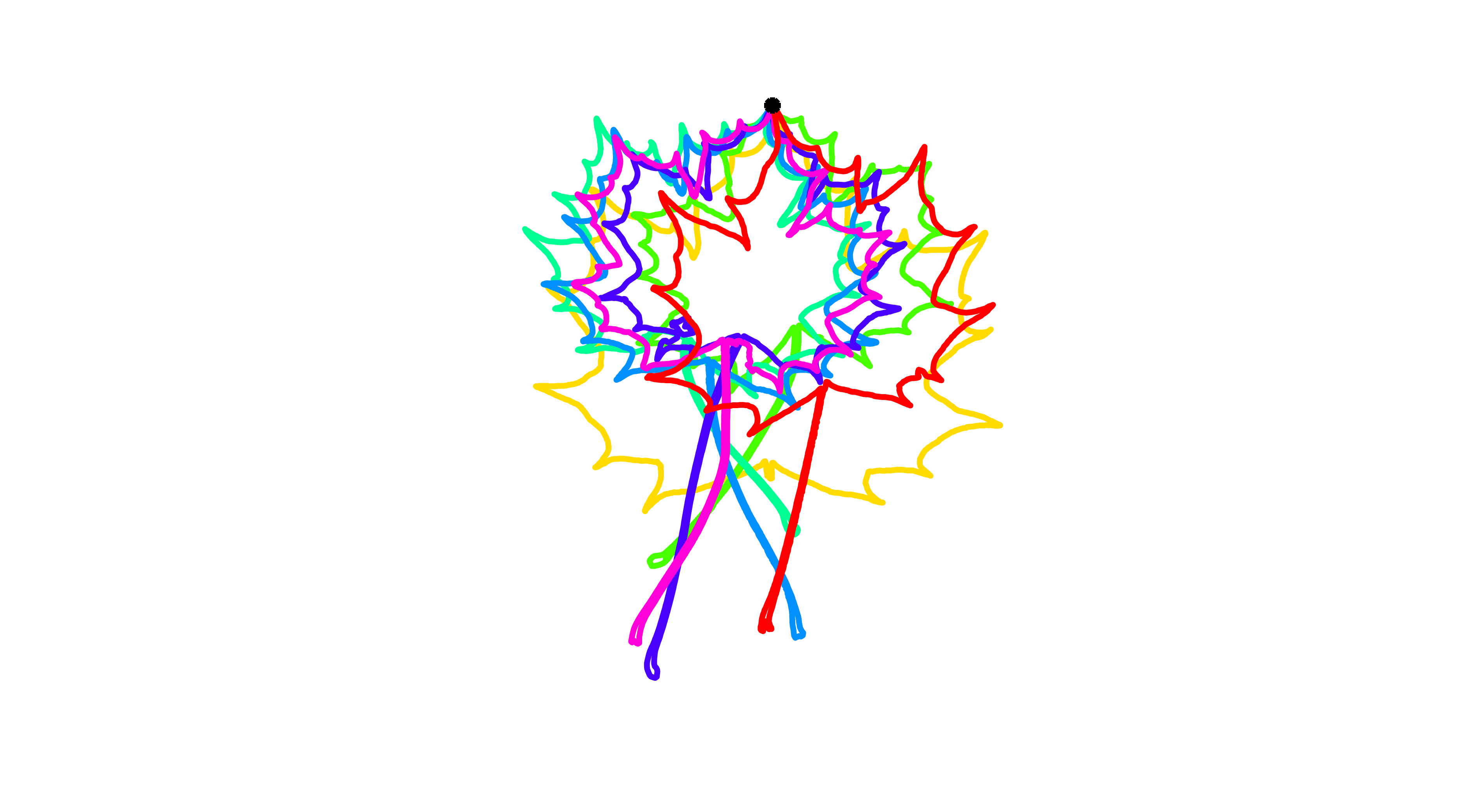}}
\subfloat[\centering]{\includegraphics[height=3cm, trim = 4cm 1cm 4cm 1cm, clip]{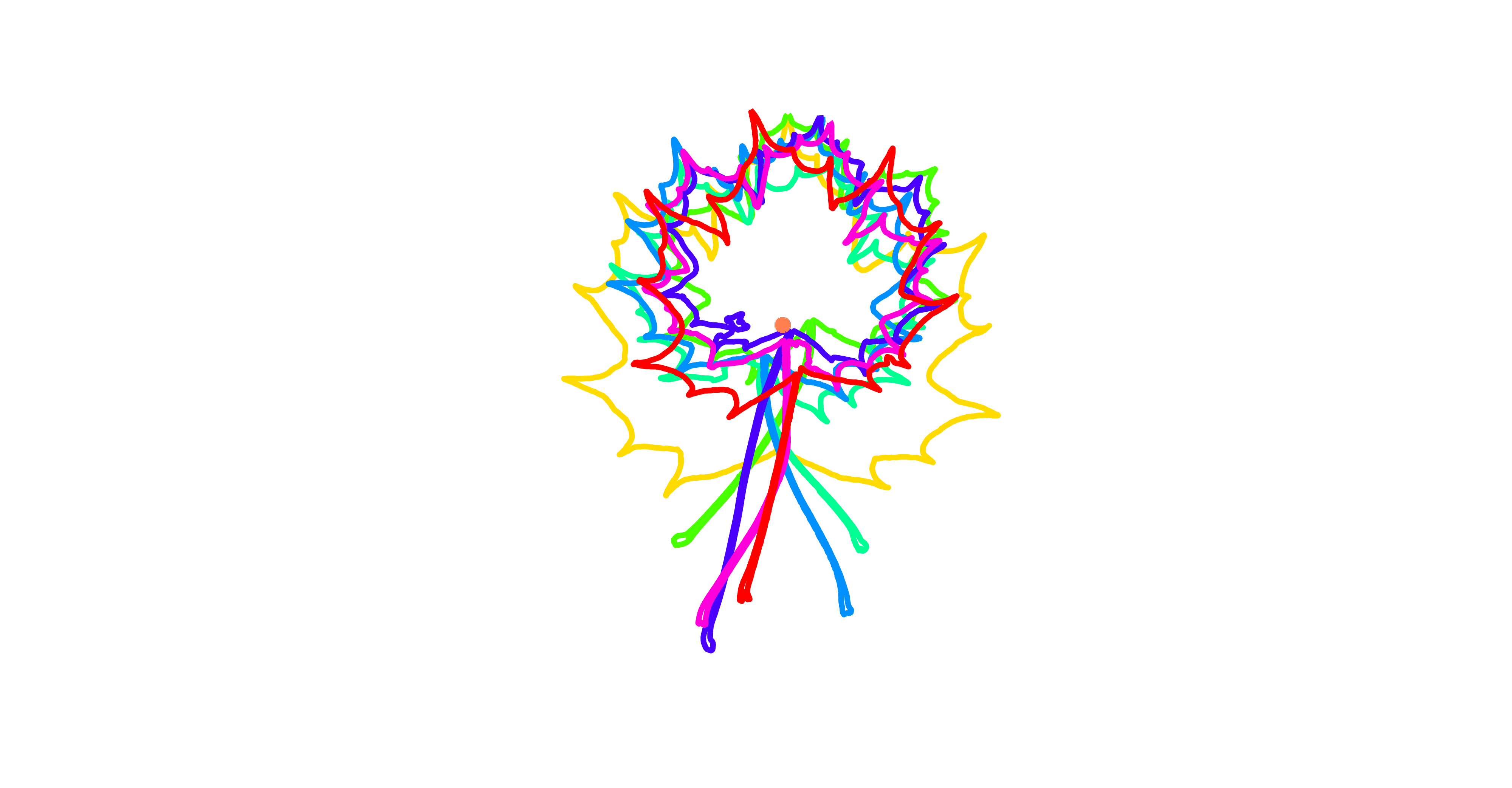}}
\subfloat[\centering]{\includegraphics[height=3cm, trim = 4cm 1cm 3cm 1cm, clip]{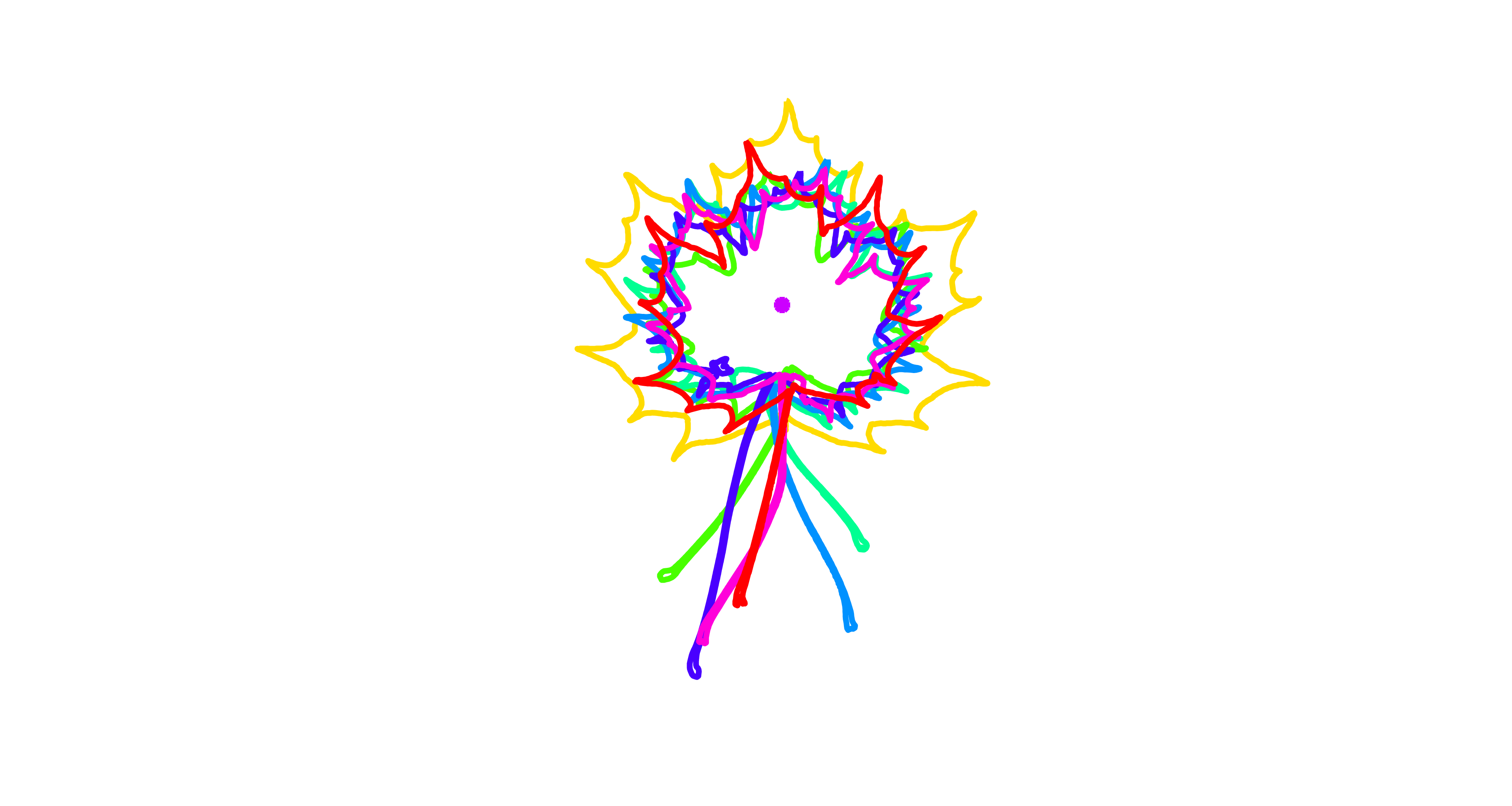}}

\caption{\textbf{{Normalization of the position in space}}. Seven Acer leaves from the Swedish leaves dataset are used to illustrate three different methods to normalize the position of contours in space. (\textbf{a}) Initial contours. Each black dot corresponds to the starting point of the parameterization and has been selected as the point of the contour with largest vertical coordinate. Each orange point corresponds to the center of mass of the contour. Each purple point corresponds to the center of gravity of the enclosed area. One can see that the length of the peduncle influences the position of the center of mass of the contour, but it~has little effect on the position of the center of gravity of the enclosed area.  (\textbf{b}) Each contour is translated in such a way that the starting point of the parameterization  of the contour is at the origin. (\textbf{c}) Each contour is translated in such a way that the center of mass of the contour is at the origin.  (\textbf{d}) Each contour is translated in such a way that the center of gravity of the enclosed area is at the origin.}
\label{fig3}
\end{figure}

In Figure~\ref{fig3}b, the~contours of the Acer leaves are centered so that the first point of their parameterization coincides with the origin. The~corresponding clustering of the training set after this normalization can be visualized in Figure~\ref{fig_centered}a. The~Dunn index decreased by this normalization process from $0.0587$ (see Section~\ref{section_translation}) to $0.0482$. 

In Figure~\ref{fig3}c, the~contours are centered so that the center of mass of the contours is at the origin. The~corresponding clustering of the training set after this normalization can be visualized in Figure~\ref{fig_centered}b. The~Dunn index slightly decreases after this normalization process from $0.0587$ to $0.0573$. 

In Figure~\ref{fig3}d, the~contours are centered so that the center of gravity of the enclosed area is at the origin.
One can see that the length of the peduncle influences the position of the center of mass of the contour, but~not the position of the center of gravity of the enclosed area, leading to a better alignment of the contours. 
The corresponding clustering of the training set after this normalization can be visualized in Figure~\ref{fig_centered}c. The~Dunn index increases after this normalization process from $0.0587$ to $0.0702$. 
Therefore, in~what follows, the~center of the enclosed area is used to center~contours.

\begin{figure}[h!]
\centering
\subfloat[\centering]{\includegraphics[width=6cm]{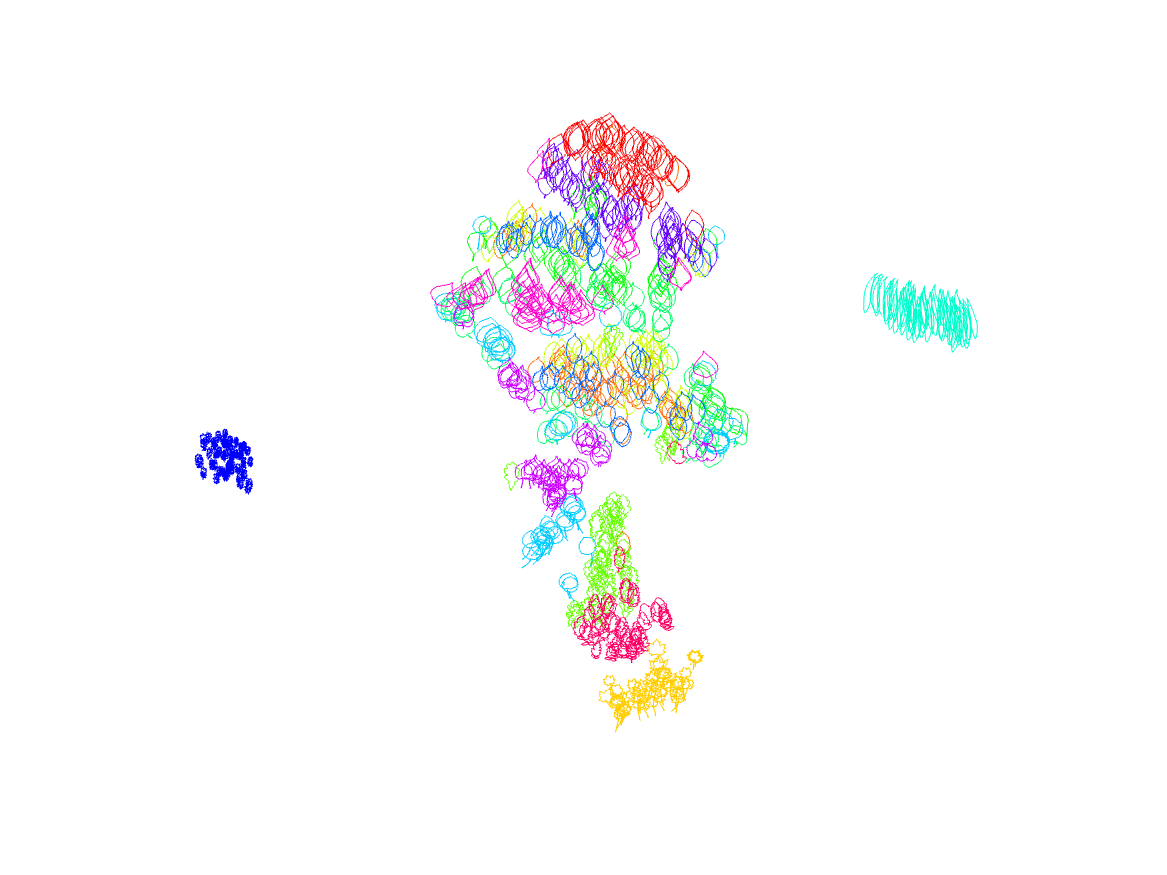}}
\subfloat[\centering]{\includegraphics[width=6cm]{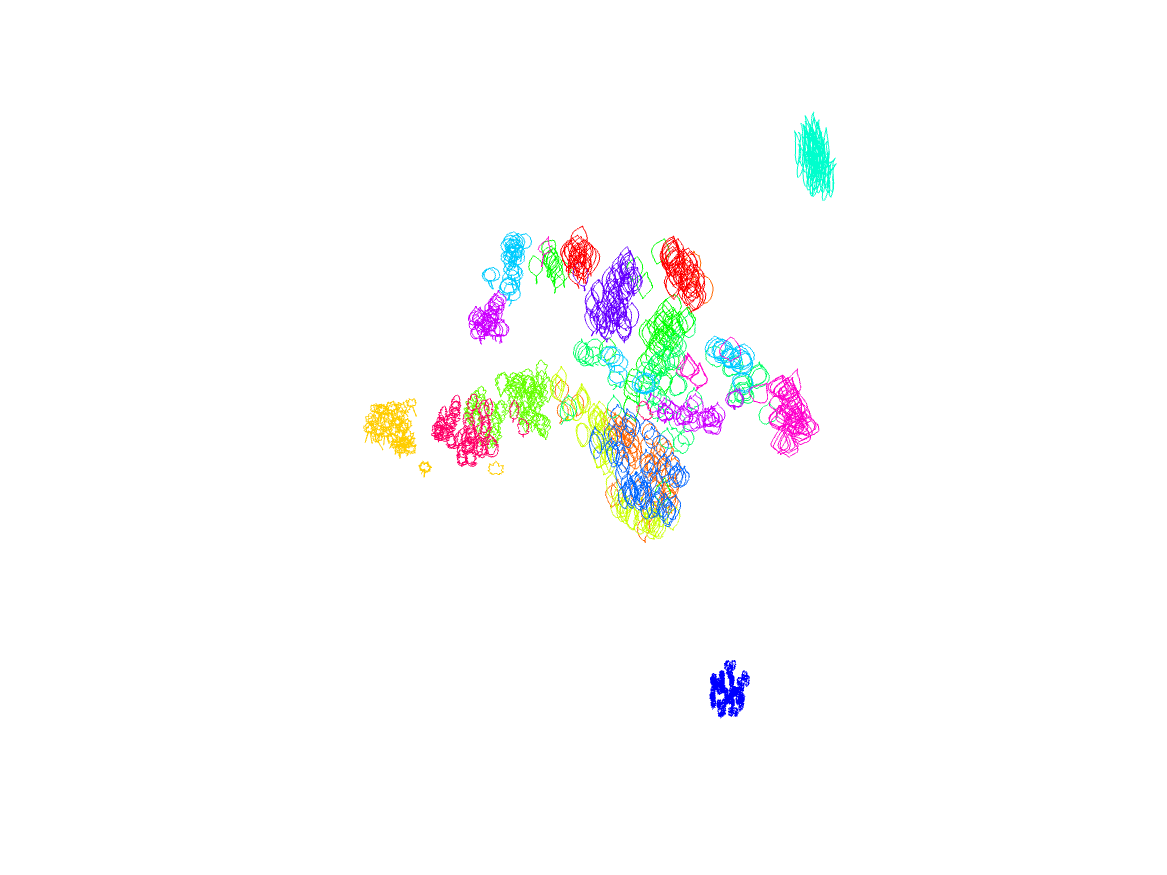}}
\subfloat[\centering]{\includegraphics[width=6cm]{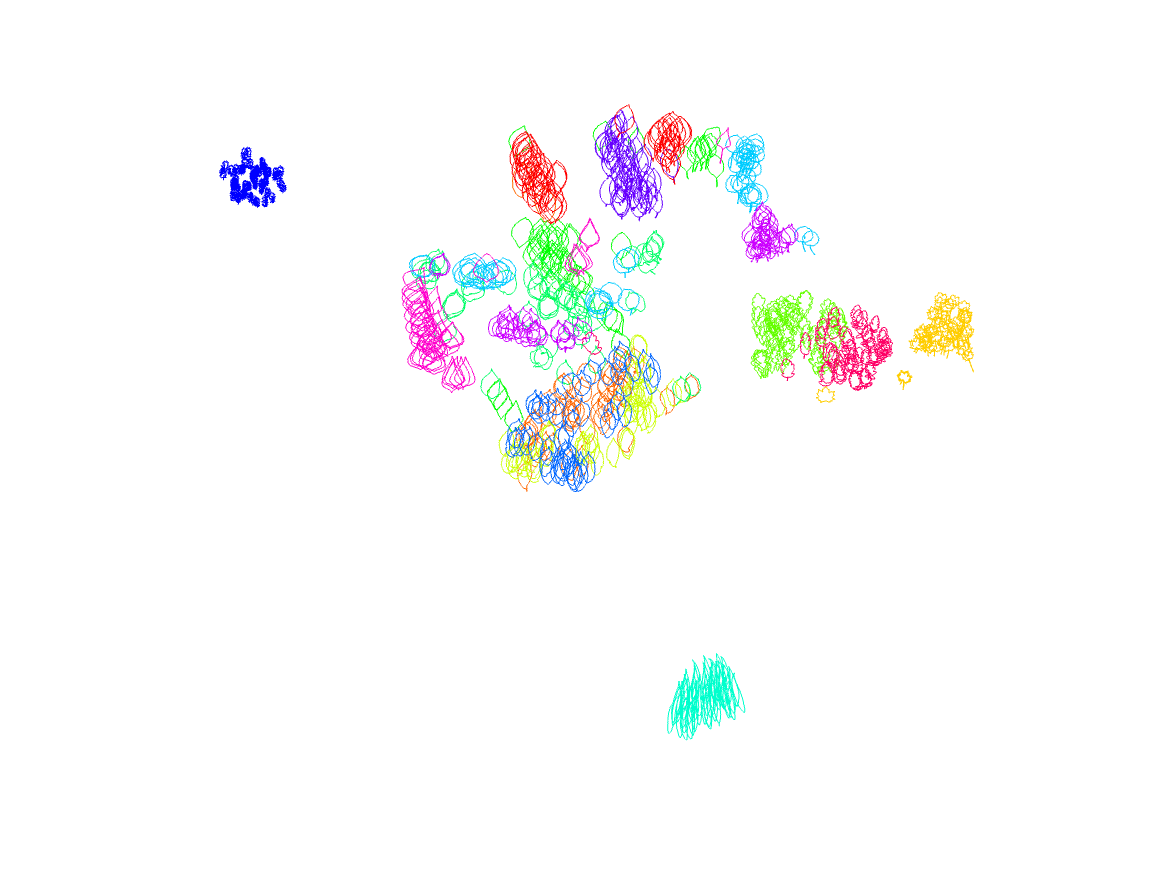}}

\caption{\textbf{{Normalization of the position in space}}.  Two-dimensional representation of the distance distribution along the dataset using \texttt{tsne} algorithm. (\textbf{a}) After centering the curves to the same starting point, the~Dunn index decreases from $0.0587$ to $0.0482$. (\textbf{b}) After centering the curves to have the center of mass at the origin, the~Dunn index decreases from $0.0587$ to $0.0573$. (\textbf{c}) After centering the curves to have the center of gravity of enclosed area at the origin, the~Dunn index increases from $0.0587$ to~$0.0702$.}\label{fig_centered}
\end{figure}

\subsubsection{Standardizing the Orientation in~Space}\label{section_orientation}

The leaves in the dataset we are considering have different orientations in space and need to be rotated in a consistent way to eliminate the orientation variability. We have tested two normalization procedures to align the orientations through the~dataset.
\begin{itemize}
    \item[(a)] \textbf{{Axes of the approximating ellipse aligned:}}  Each contour is rotated so that the ellipse that best approximates the contour has its minor axis along the horizontal axis, and~its major axis vertically. We did not encounter contours with equal minor and major axes. 
    \item[(b)] \textbf{{Segment that joins the tip of the leaf to the center of the enclosed area is placed vertically:}} Each contour is rotated so as to position the center of the enclosed area vertically below the highest point of the contour.
\end{itemize}

The first normalization method does not lead to good results because of the presence of leaves with a peduncle and leaves without a peduncle in the same class. As~can be seen in Figure~\ref{fig4_rotation}b on the example of Acer leaves, the~alignment of the major and minor axis of the approximating ellipse leads to inconsistent orientation of the leaf without peduncle with respect to the other leaves. After~this normalization procedure, the~Dunn index decreases from $0.0702$ to $0.0268$. The corresponding clustering can be visualized in Figure~\ref{fig_rotated}a.

The second normalization method gives better results (see Figure~\ref{fig4_rotation}c), although~the Dunn index decreases slightly from $0.0702$ to $0.0636$. We will choose this second normalization method, in~order to normalize the orientation variability and obtain consistent classification~results. The corresponding clustering can be visualized in Figure~\ref{fig_rotated}b.

\begin{figure}[h!]
\centering
\subfloat[\centering]{\includegraphics[width=12.0cm]{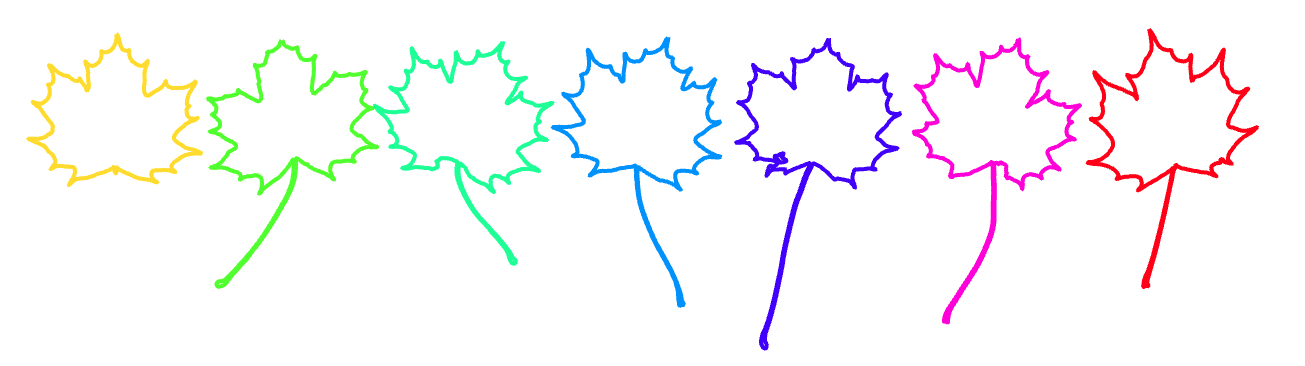}}\\
\subfloat[\centering]{\includegraphics[width=12.0cm]{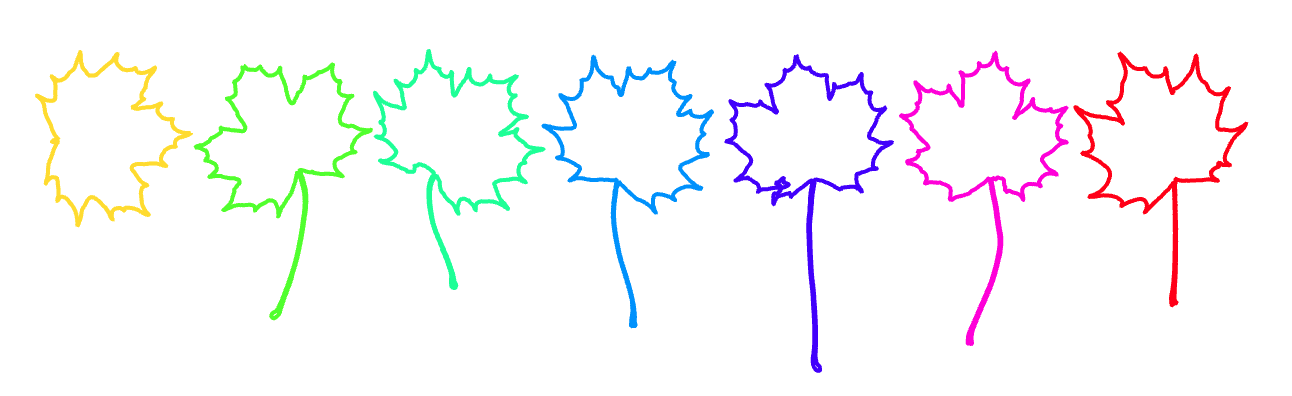}}\\
\subfloat[\centering]{\includegraphics[width=12.0cm]{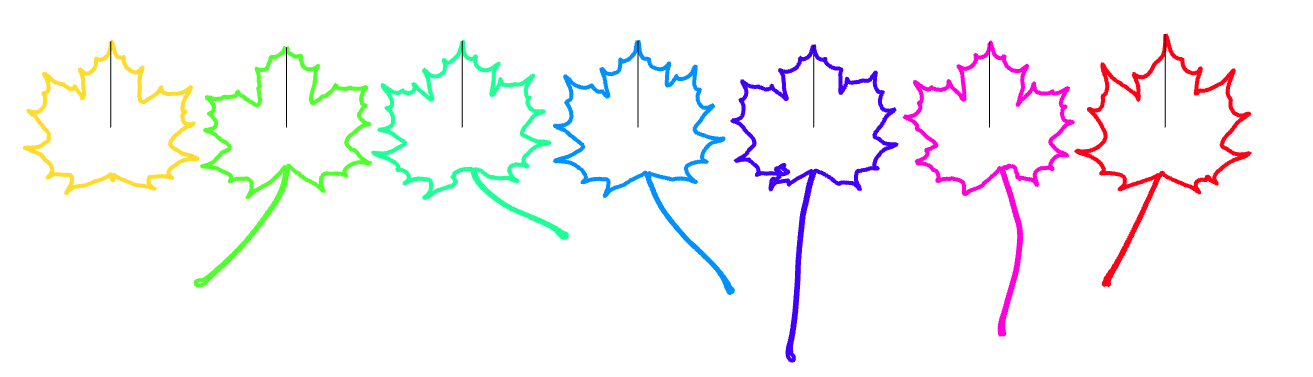}}
\caption{\textbf{{Normalization of the orientation variability}}. Seven Acer leaves are used to illustrate different methods to normalize the orientation in space in a consistent manner through the dataset.  (\textbf{a}) Initial contours. (\textbf{b})  Each contour is rotated in such a way that the approximating ellipse has its minor axis along the horizontal axis, and~its major axis vertically. Note that the first Acer leaf has an inconsistent orientation with respect to the other leaves with peduncles. (\textbf{c})  Each contour is rotated in such a way that the segment (in black)  joining the center of gravity to the first point is vertical. }
\label{fig4_rotation}
\end{figure}

\begin{figure}[h!]
\centering
\subfloat[\centering]{\includegraphics[width=6cm]{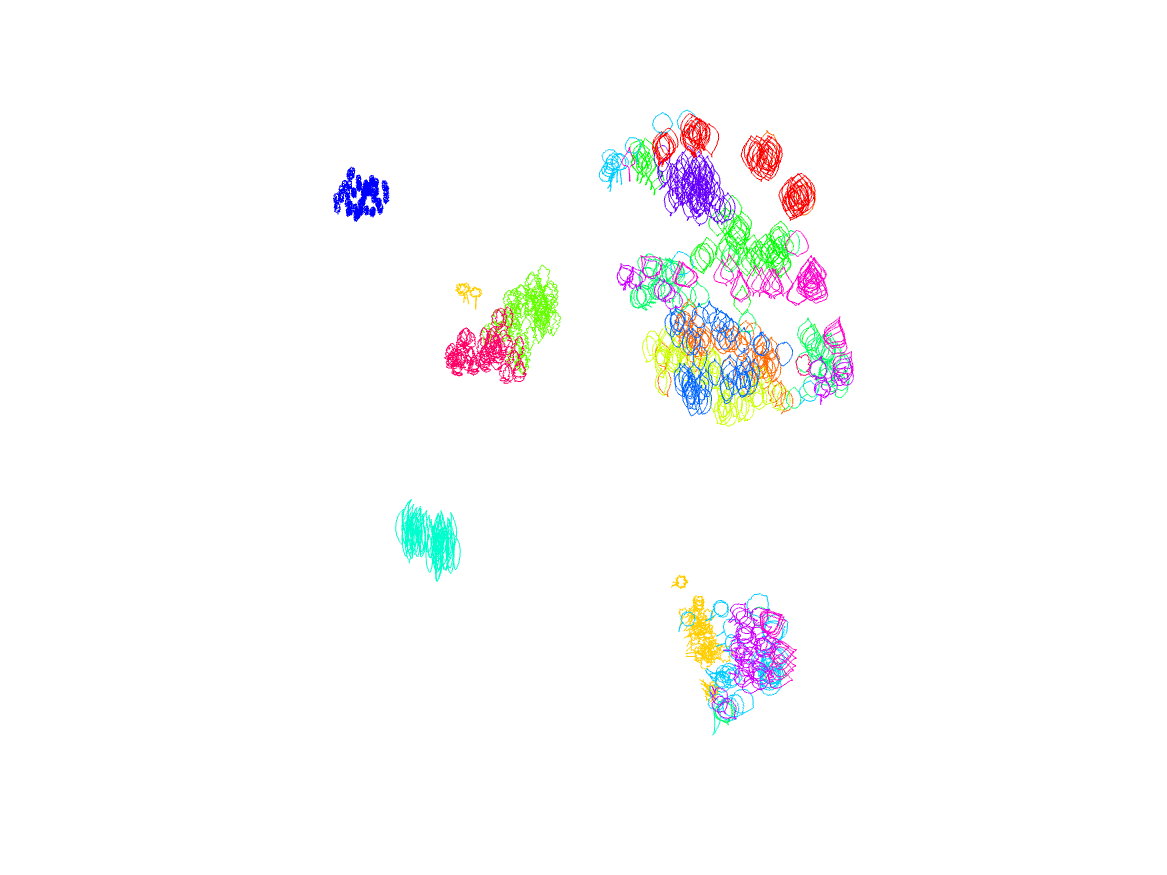}}
\subfloat[\centering]{\includegraphics[width=6cm]{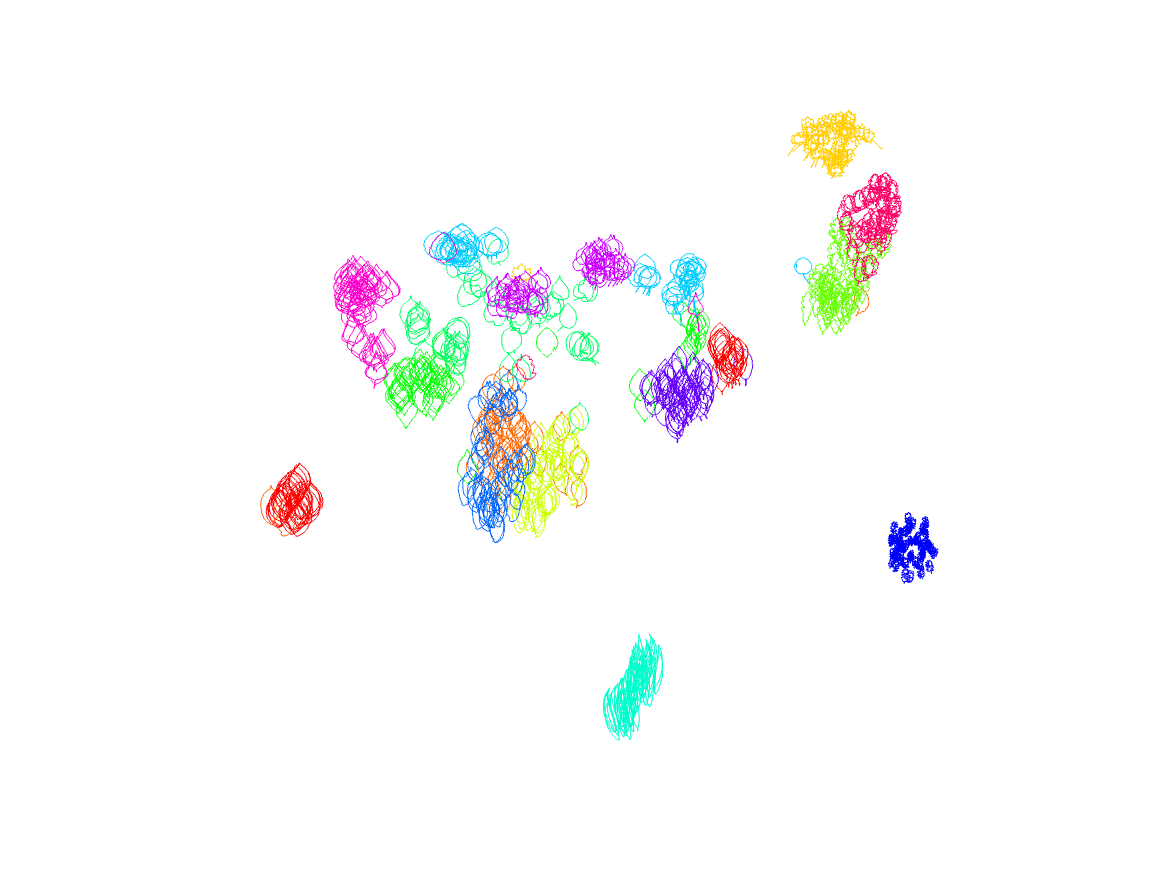}}

\caption{\textbf{{Normalization of the orientation in space}}.  Two-dimensional representation of the distance distribution along the dataset using \texttt{tsne} algorithm. (\textbf{a}) After rotation of the curves to have their approximating ellipse aligned with the axis, the~Dunn index decreases from $0.0702$ to $0.0294$. (\textbf{b})~After rotation of the curves so that the segment joining the center of gravity of the enclosed area and the tip of the leave is vertical, the~Dunn index decreases slightly from $0.0702$ to $0.0658$.} \label{fig_rotated}
\end{figure}

\subsubsection{Resulting Normalization over Finite-Dimensional Shape-Preserving~Groups}

The resulting normalization over the finite-dimensional shape-preserving group consisting of scalings, translations, rotations in space and rotations in parameter space is illustrated for different classes of leaves in Figure~\ref{fig_normalization}. 
Let us summarize here the normalization steps that were~selected:
\begin{itemize}
    \item {Counterclockwise travel along the curves} (Section~\ref{section_counterclockwise}).
    \item {Starting point at the tip of the leaves} (Section~\ref{section_rot_parameter}).
    \item {Unit-length curves} (Section~\ref{section_scaling}).
    \item {Center of gravity of the enclosed area at the origin} (Section~\ref{section_translation}).
    \item {Segment joining the tip of the leaf to the center of gravity vertical} (Section~\ref{section_orientation})
\end{itemize}

The remaining shape-preserving group is infinite-dimensional and consists of orientation-preserving reparameterizations fixing the starting (and ending) point. Mathematically, this group corresponds to the following subgroup of $\operatorname{Diff}^+(\mathbb{S}^1)$:
\[\operatorname{Diff}^+_0(\mathbb{S}^1) = \{\Phi \in \operatorname{Diff}^+(\mathbb{S}^1), \Phi(0) = 0\}.\]

\begin{figure}[h!]
\centering
\subfloat[\centering]{\includegraphics[width=3.0cm]{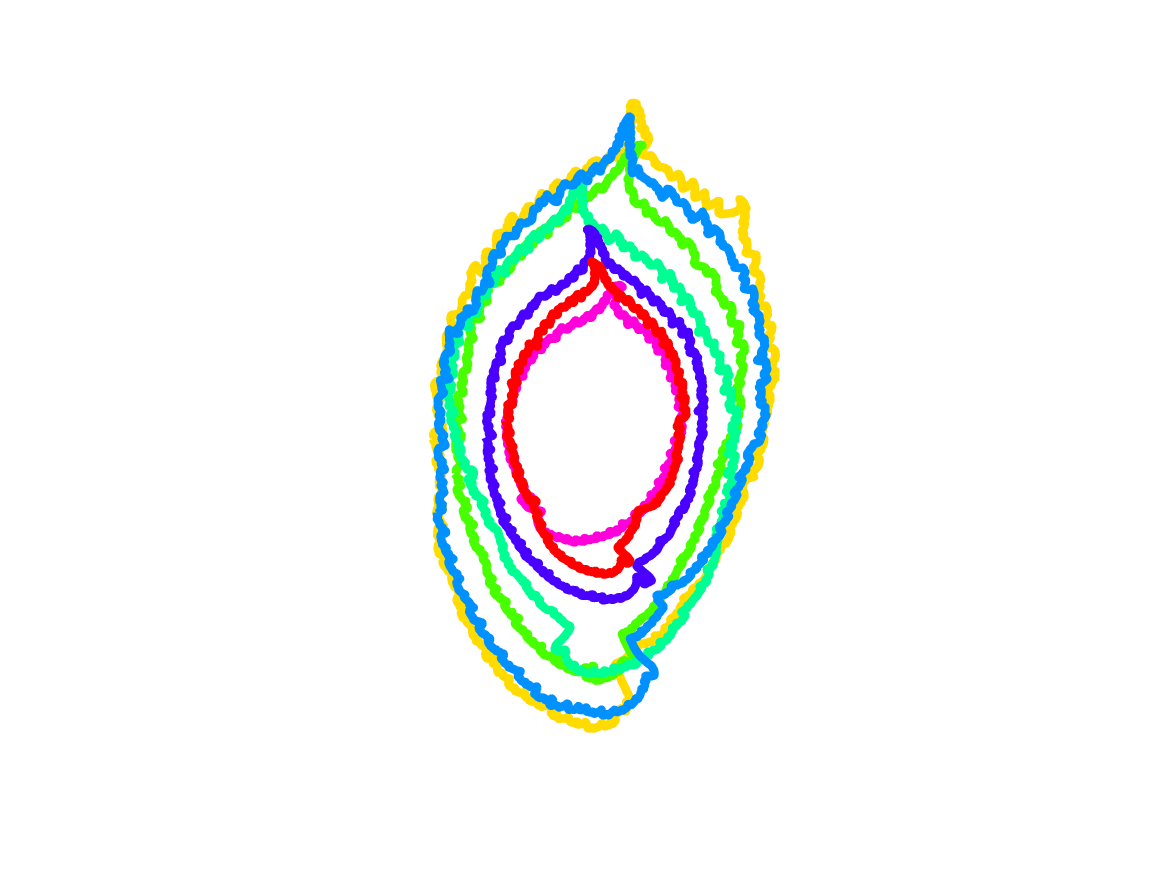}}
\subfloat[\centering]{\includegraphics[width=3.0cm]{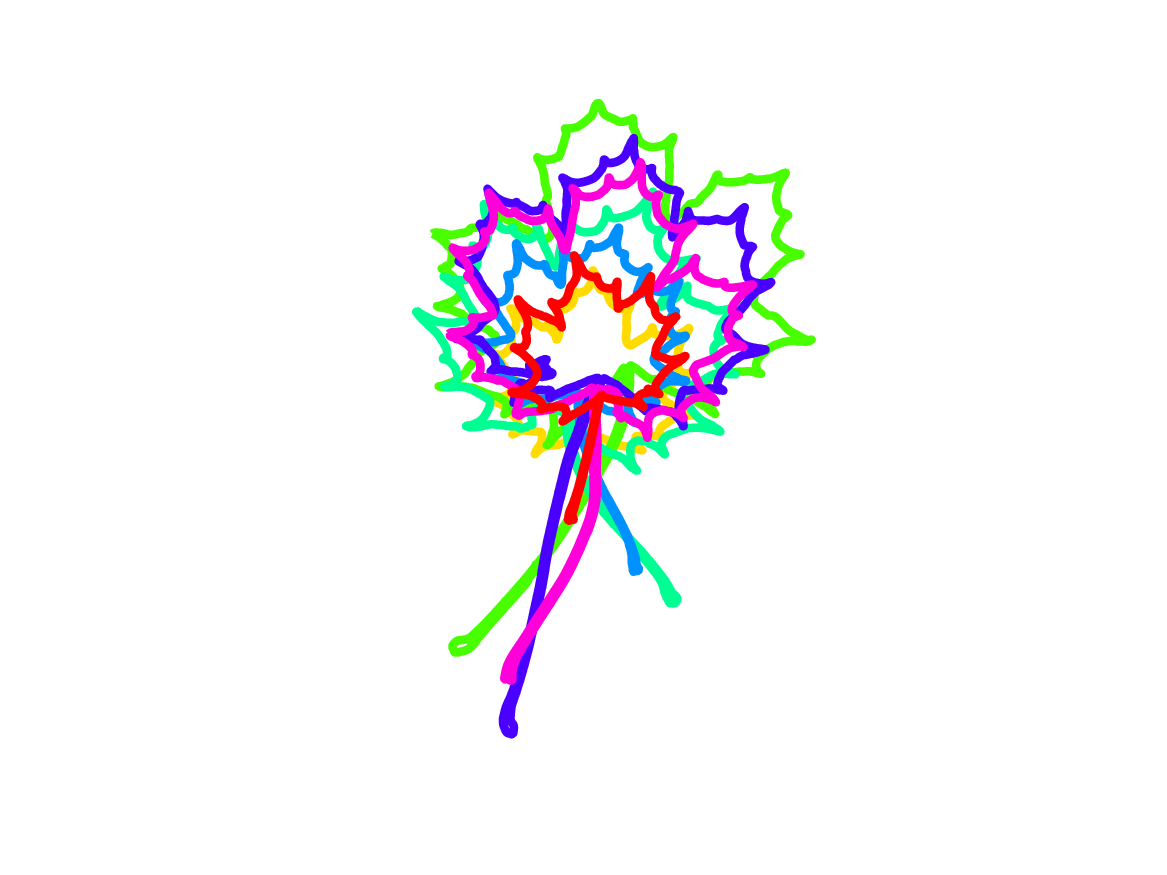}}
\subfloat[\centering]{\includegraphics[width=3.0cm]{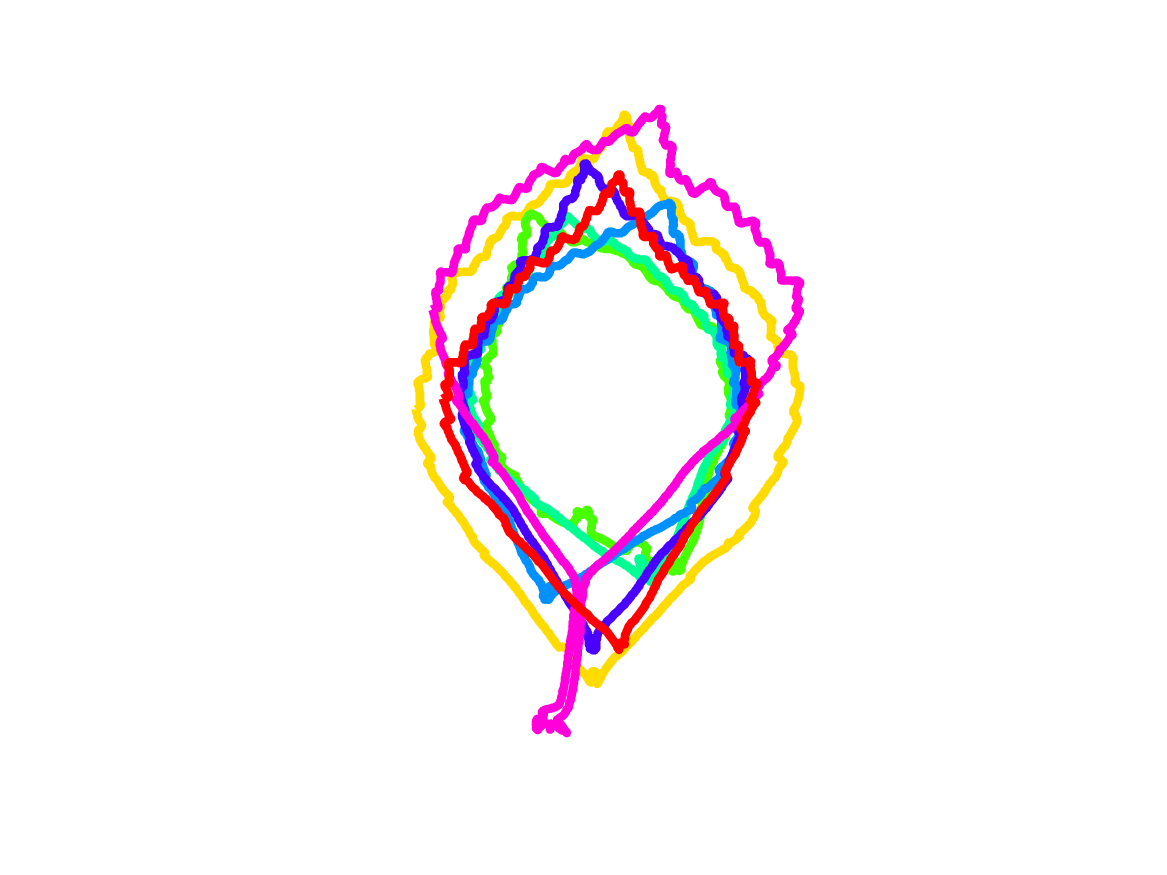}}
\subfloat[\centering]{\includegraphics[width=3.0cm]{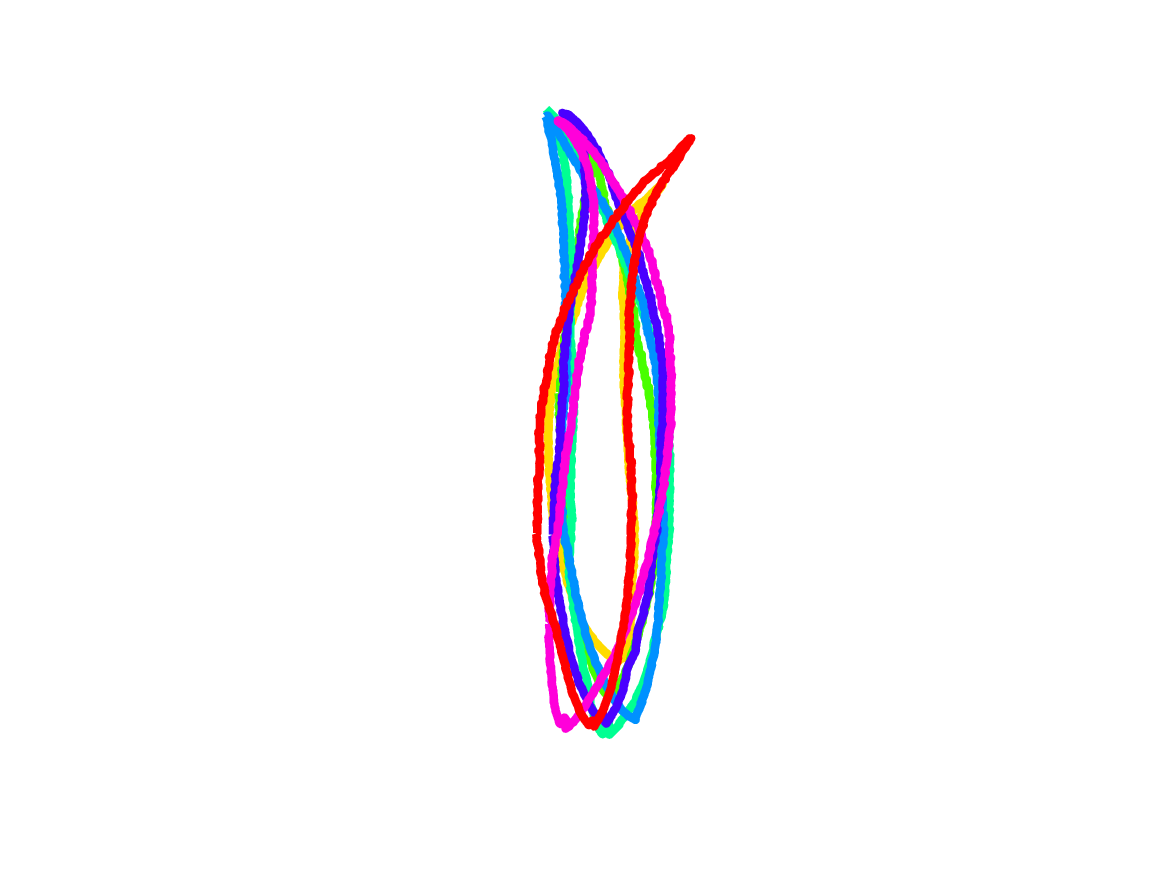}}
\subfloat[\centering]{\includegraphics[width=3.0cm]{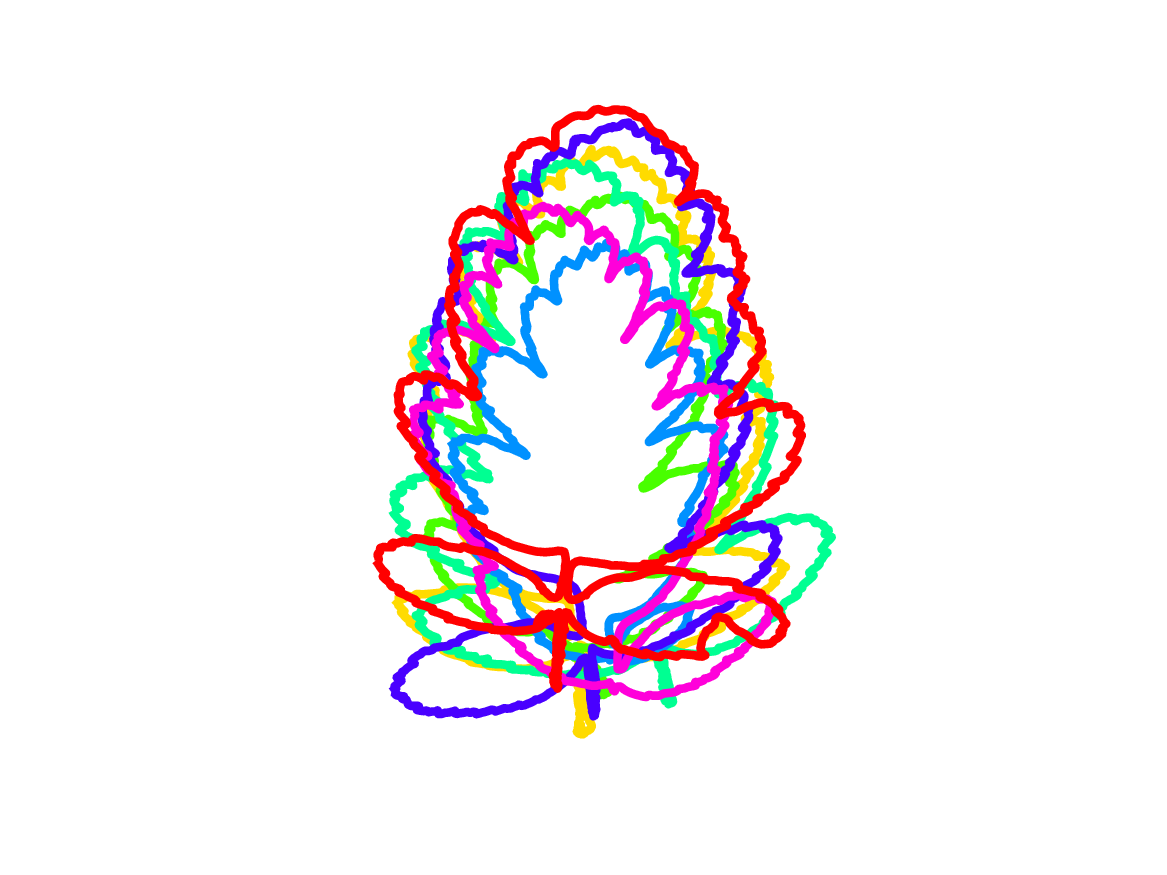}}\\
\subfloat[\centering]{\includegraphics[width=3.0cm]{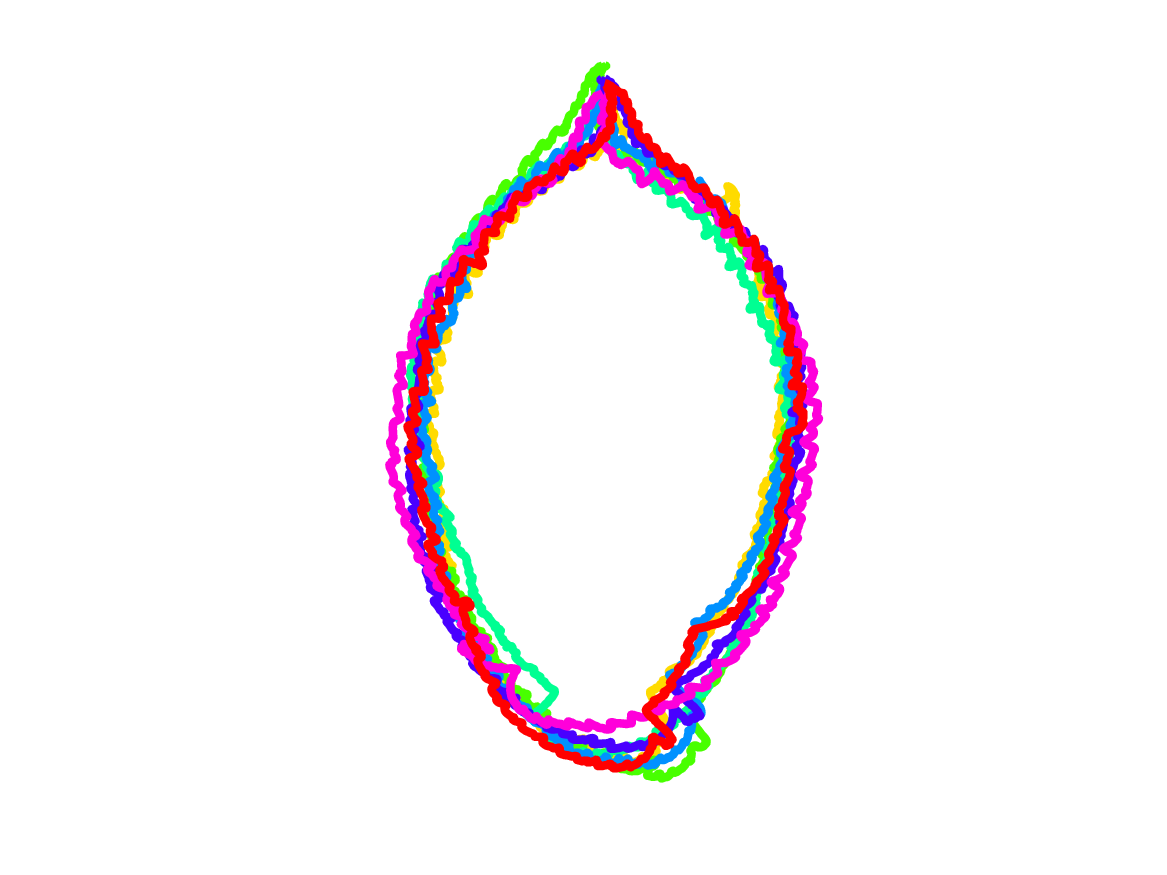}}
\subfloat[\centering]{\includegraphics[width=3.0cm]{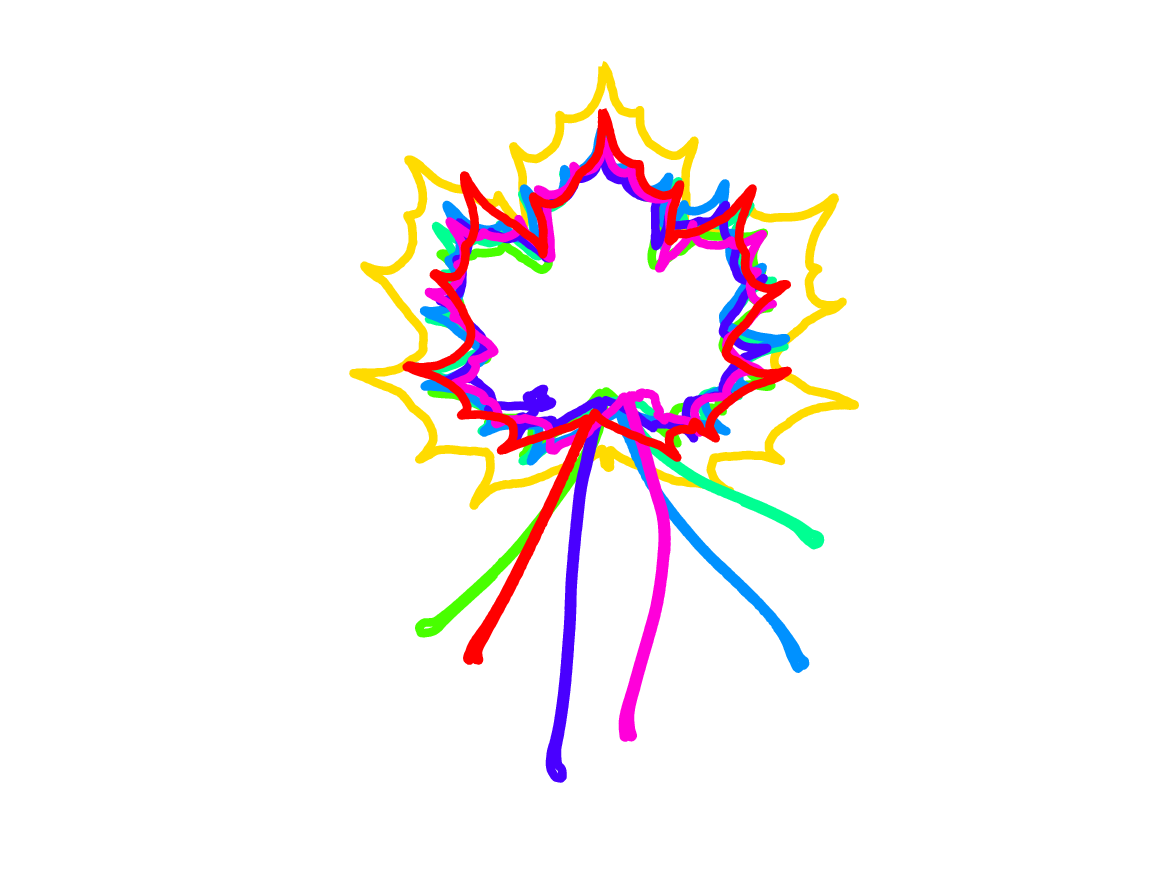}}
\subfloat[\centering]{\includegraphics[width=3.0cm]{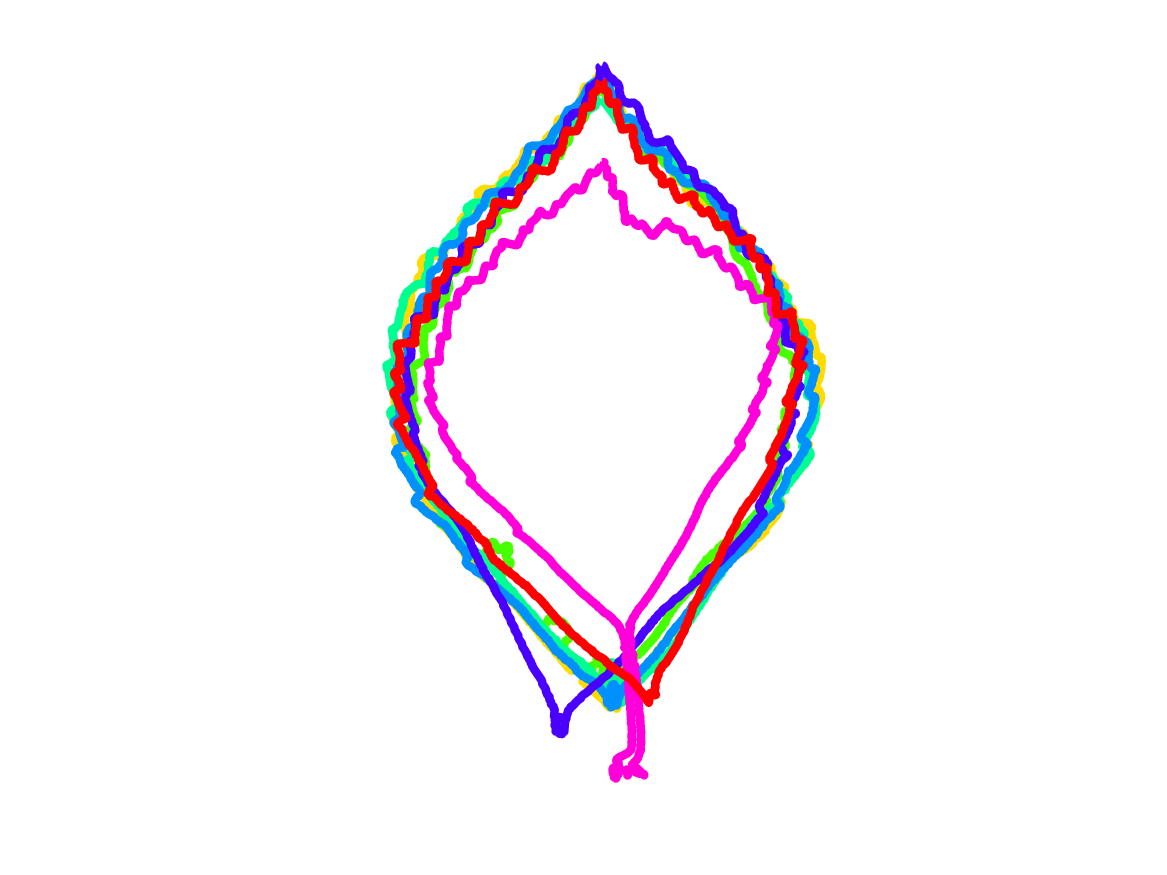}}
\subfloat[\centering]{\includegraphics[width=3.0cm]{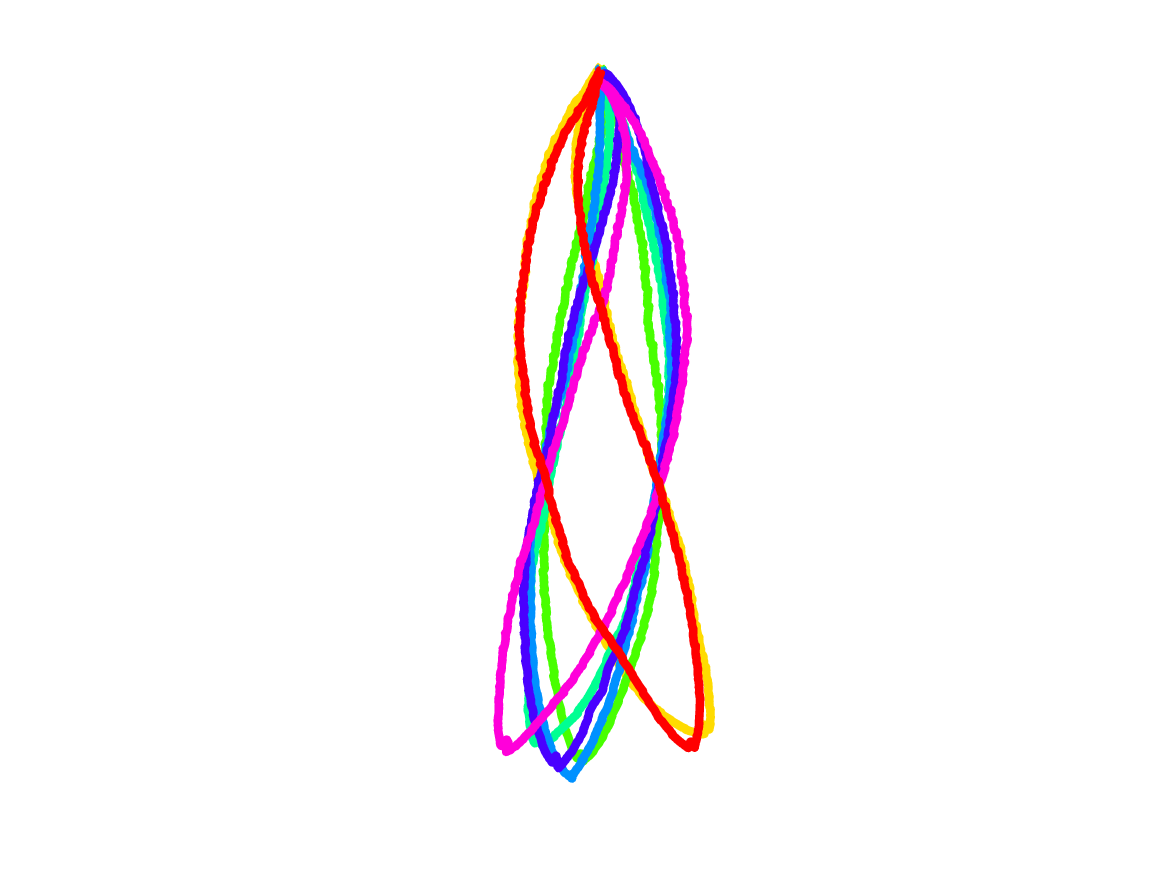}}
\subfloat[\centering]{\includegraphics[width=3.0cm]{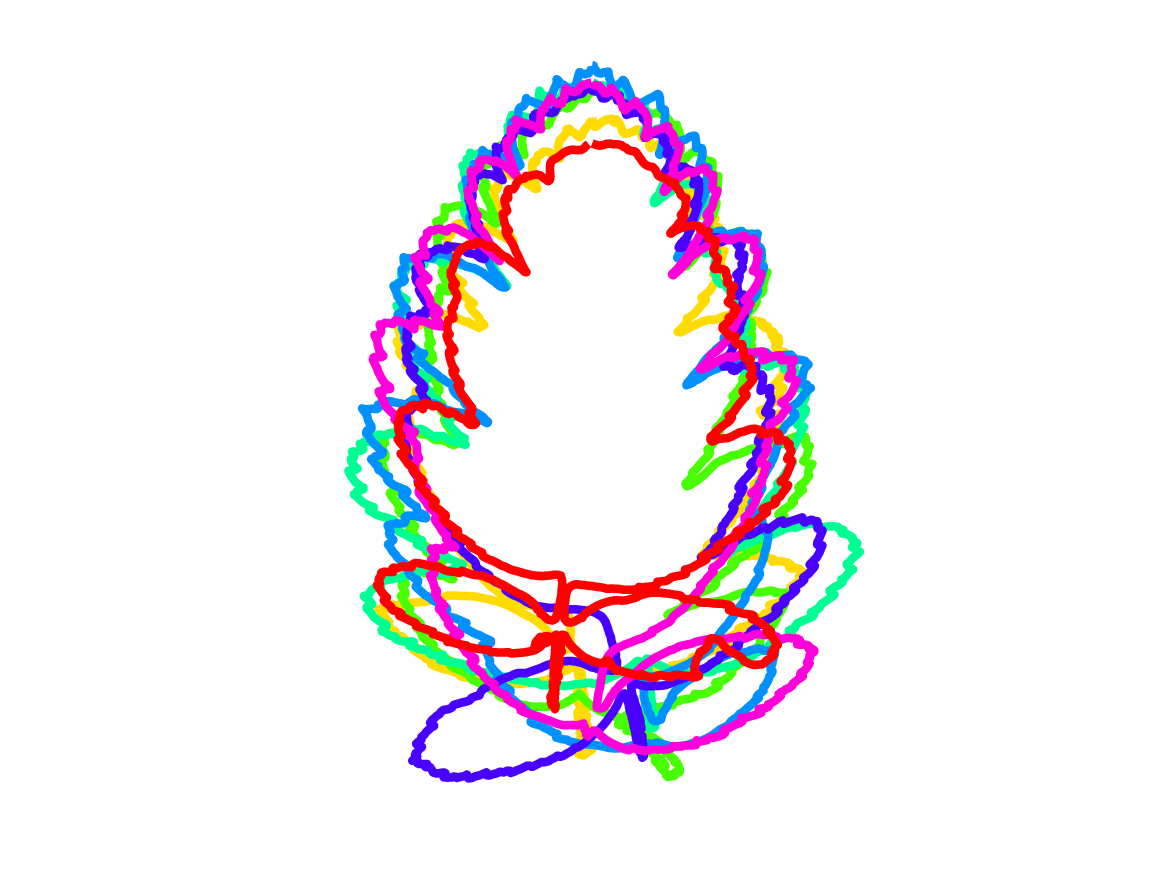}}\\

\caption{\textbf{{Resulting normalization over the group of scalings, translations, rotations in space, and rotation in parameter space}}. Several leaves of the same class are depicted before normalization (upper row (\textbf{a}--\textbf{e})) and after normalization (lower row (\textbf{f}--\textbf{j})). }
\label{fig_normalization}
\end{figure}

\subsection{A New $2$-Parameter Family of Canonical~Parameterizations}\label{Section_clock_parameterization}
\unskip
\subsubsection{Clock Parameterization of Jordan~Curves}\label{sec_just_clock}
In this section, we introduce a new canonical parameterization of simple plane curves, called the clock parameterization. We will make use of the analogy with a traditional clock to explain how this parameterization is constructed.  Suppose that we have \mbox{$720= 12\times 60$ points} to place along the contour of the Acer leaf depicted in Figure~\ref{fig_clock}a.  If~we place 720 points uniformly along the contour and cut the enclosed area as a pizza from its center of gravity to the points corresponding to a multiple of 60, then we obtain 12~pieces of different angles. This is illustrated in Figure~\ref{fig_clock}a by a color change with every 60~points. In~contrast, the~clock parameterization automatically places each point numbered by a multiple of 60 in such a way that the corresponding angle is precisely 360/12 degrees, hence at the positions of the hours on a traditional clock (see Figure~\ref{fig_clock}b). To~place these 12~keypoints at the hours positions, we compute the angle between the vertical line and the segment connecting the center of gravity to a point traveling along the contour at constant speed. The~graph of the angle function for the Acer leaf is illustrated in Figure~\ref{fig_clock}c. It allows us to detect the constant speed parameter of the first point reaching an angle multiple of 360/12 degrees.  In~Figure~\ref{fig_clock}c, the~horizontal lines are spaced every 360/12 degrees and hit the angle function graph precisely at these constant speed parameters.
Between two consecutive points that have these particular constant speed parameters, we distribute exactly 60 points uniformly along the portion of the curve between them. The~resulting reparameterization of the Acer leaf is such that each colored portion of the curve describes the same angle with respect to the center of gravity and contains exactly the same number of points. In~Figure~\ref{fig_clock}d--f, the same procedure is applied to the more challenging shape of Sorbus leaf. Note the difference in the density of points on the light blue portion and on the dark blue portion of the curve. This is due to the structure of compound of Sorbus leaves, which are made up of multiple leaflets arranged along a central~stalk.

In the previous procedure, the~number of subdivisions  of 360 was set to 12. For~each choice of the number $n$ of subdivisions, we obtain a different reparameterization procedure for Jordan curves. In~Figure~\ref{fig_clock_n}, we illustrate how the resulting parameterization of a curve depends on the number of subdivisions $n$. In~this case, we distribute 1000 points along the contour of an Acer leaf, this time with a peduncle. The~first row in Figure~\ref{fig_clock_n} corresponds to a parameterization with constant speed.  From~left to right, we use 20, 50, and~100~subdivisions to color the curve. The~corresponding clock parameterizations are depicted in Figure~\ref{fig_clock_n}d--f, respectively.  The~graph of the angle function with equally spaced horizontal lines is depicted in Figure~\ref{fig_clock_n}g--i for 20, 50, and 100~subdivisions of 360~degrees.
In contrast to the constant speed parameterization, for~the clock parameterization,  the~density of points along the peduncle decreases with the number of subdivisions. Indeed, while the number of subdivisions increases, the~angle formed by each colored piece of curve decreases. Since, on each colored piece of curve, we distribute the same number of points, the~density of points on the piece containing the long peduncle decreases~drastically. 

\begin{Remark}
    The clock parameterization is well defined as long as the center of gravity is within the interior of the contour. In~practice, this was generally the case, but~we encountered some leaves with a center of gravity outside the interior. In~these cases, the~center of gravity was replaced by a reference point nearby but located inside the leaf. There are many possible automatic procedures for doing~so:
    
\begin{itemize}
        \item After computing the closest point of the contour to the center of gravity, the~reference point is initialized at the center of gravity and moved in the direction of this closest point until its index with respect to the contour  increases from $0$ to $1$.
        \item The reference point is initialized at the center of gravity and moved in the direction of the tip of the leaf until its index with respect to the contour increases from $0$ to $1$.
        \item After computing the Delaunay triangulation for the contour and subsequently creating the Voronoi diagram, the~leaf is translated so that the closest Voronoi vertex is at the origin.
        \item After computing the closest point of the contour to the center of gravity, we consider triangles with one vertex at the closest point and two other vertices on the contour, and~compute their centroids. We choose as a reference point a centroid near the center of gravity that has the property to be inside the shape, and we~perform a translation so that this reference point is at the~origin. 
\end{itemize} 
    
    The first solution has the advantage of generalizing to datasets without distinguished point along the contour (which could take the role of the tip of the leaf), the~second solution has the advantage of being compatible with the rotation alignment performed in Section~\ref{section_orientation}. However, these two solutions are dependent on the step size of the displacement, which is an extra data-dependent parameter. In~contrast, the~last two translation procedures do not require learning an extra hyperparameter and are therefore preferred.
\end{Remark}

\begin{figure}[h!]
\centering
\subfloat[\centering]{\includegraphics[width=8.0cm]{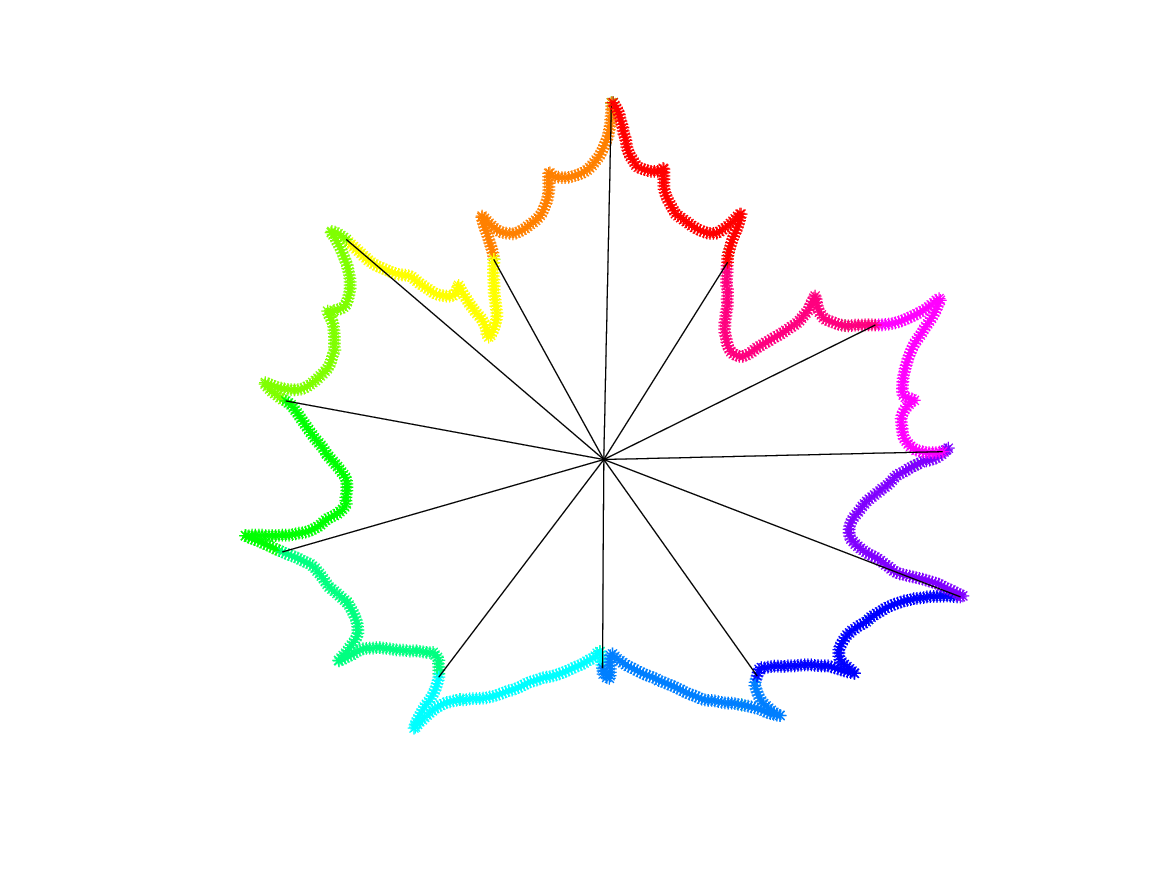}}
\subfloat[\centering]{\includegraphics[width=8.0cm]{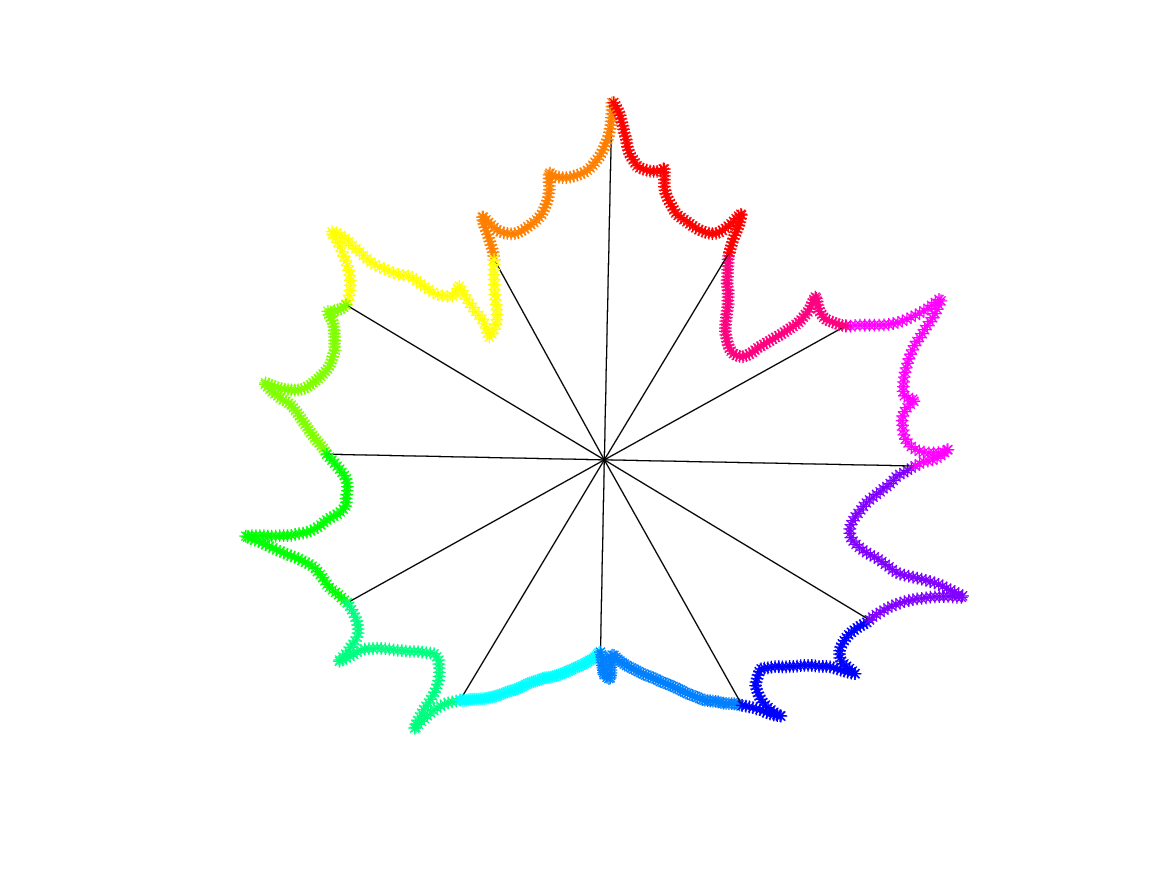}}\\
\subfloat[\centering]{\includegraphics[width=6.0cm]{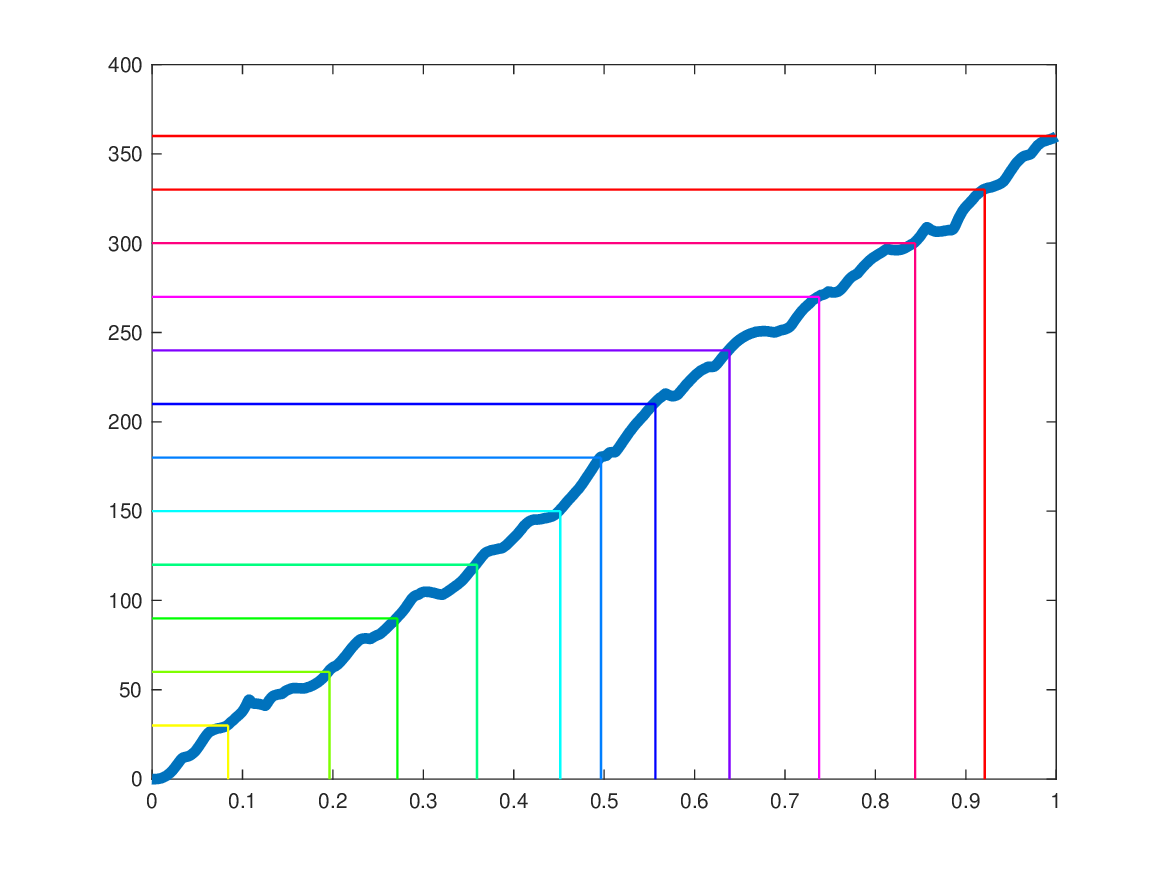}}\hspace{2cm}
\subfloat[\centering]{\includegraphics[width=6.0cm]{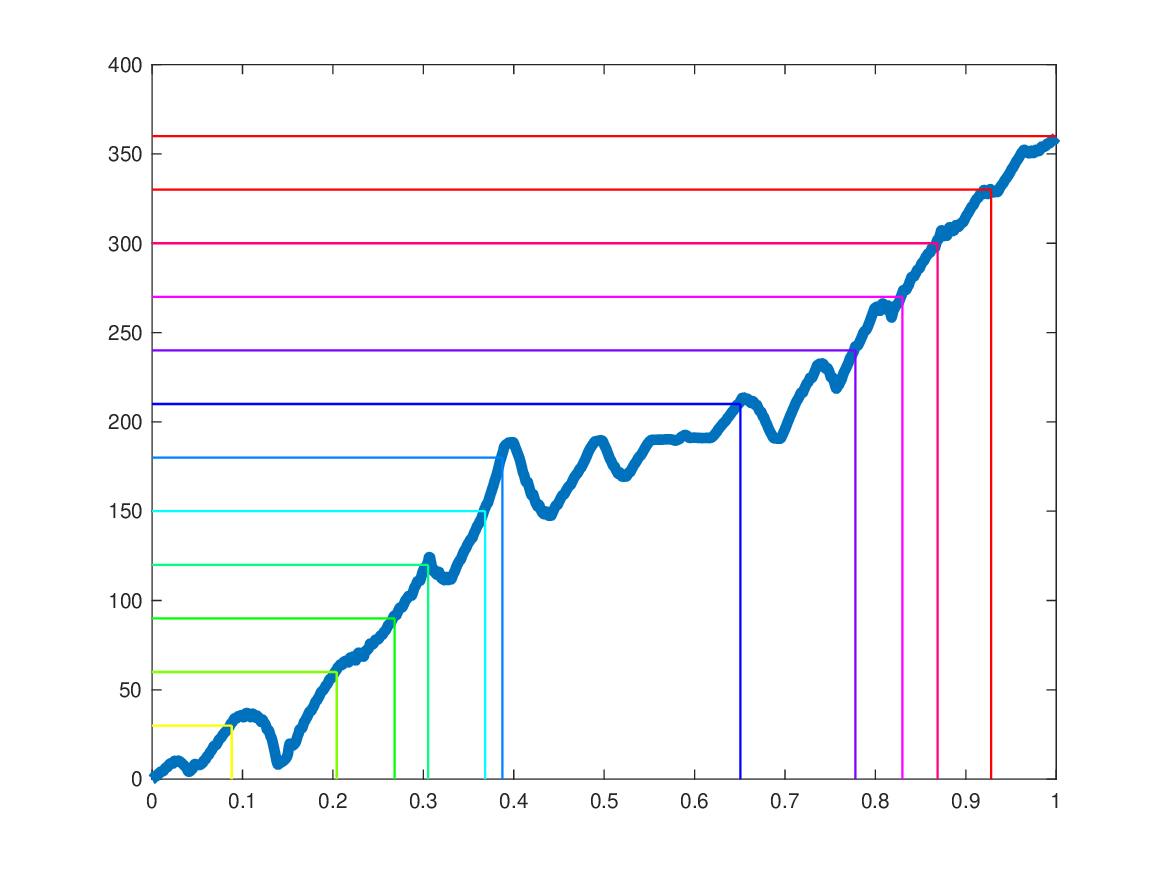}}\\

\subfloat[\centering]{\includegraphics[width=9.0cm]{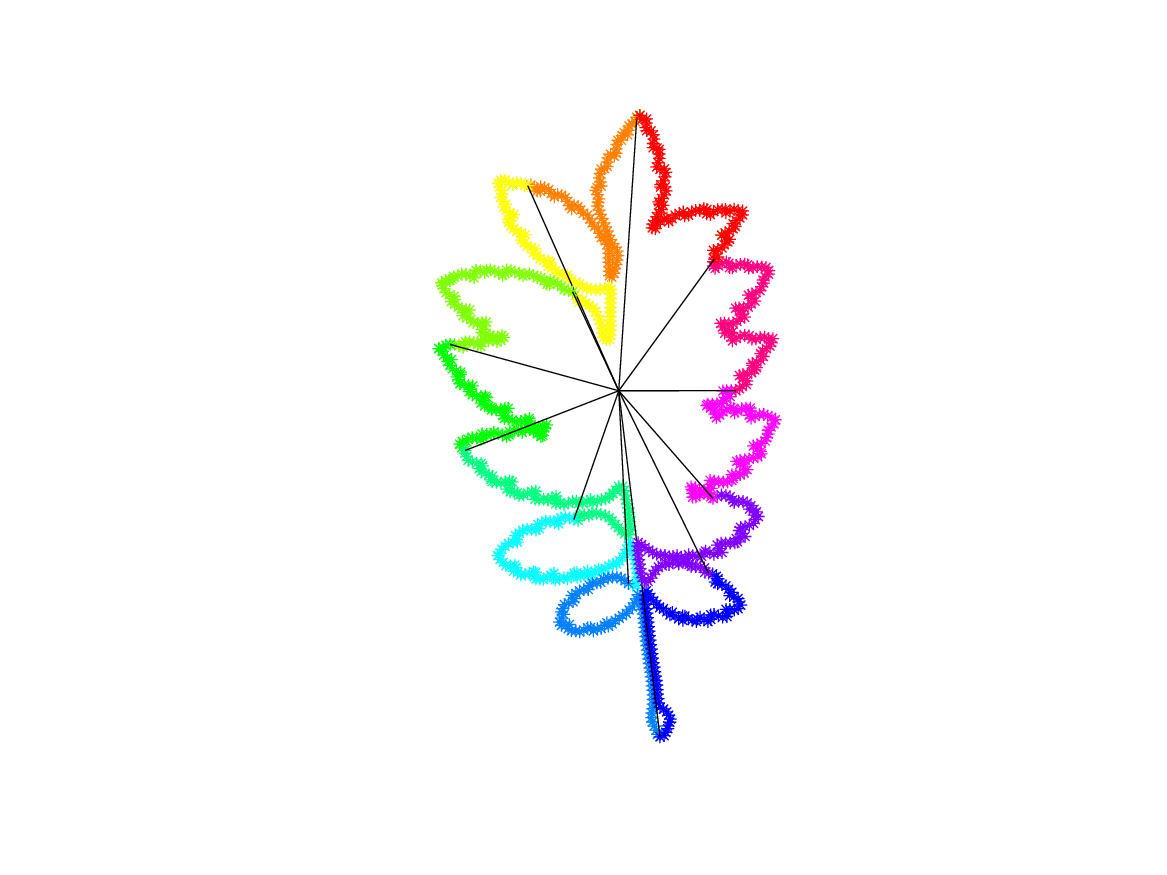}}
\subfloat[\centering]{\includegraphics[width=9.0cm]{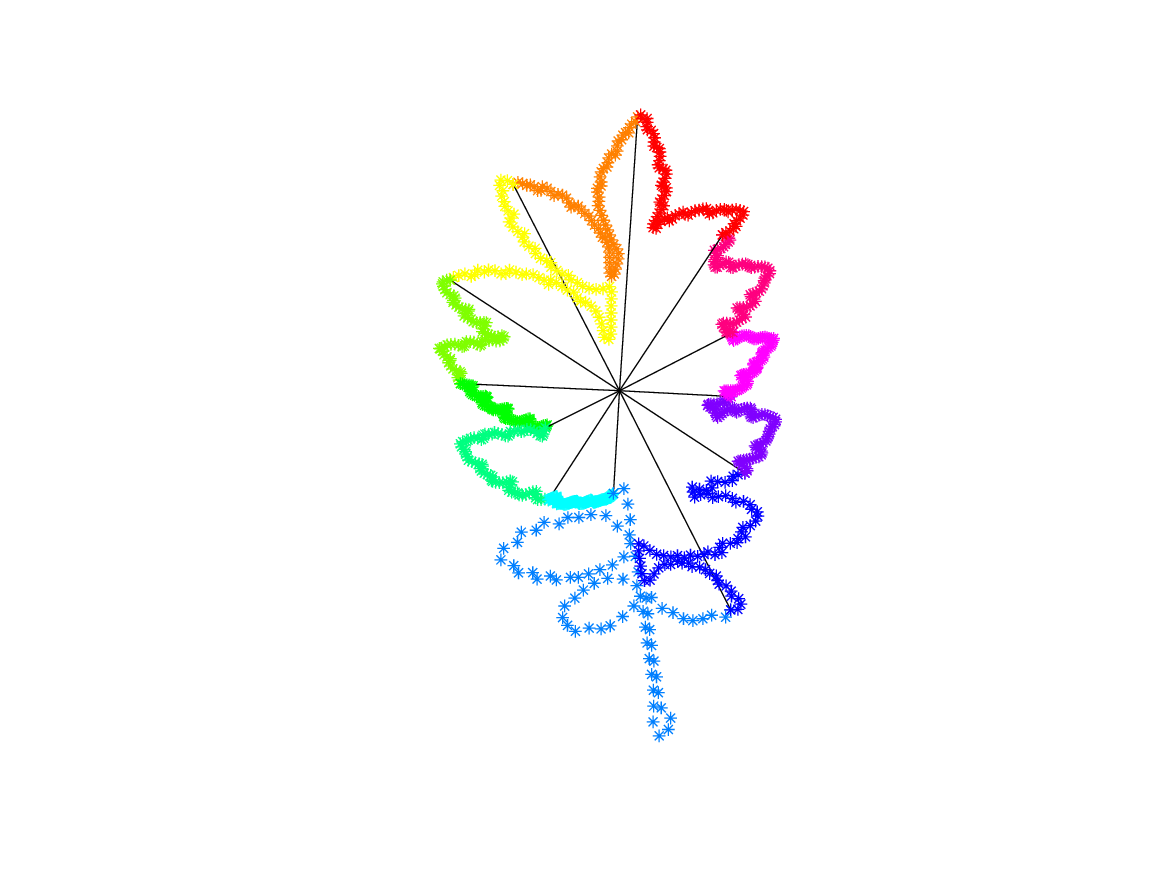}}\\

\caption{\textbf{Clock parameterization of Jordan curves}.  (\textbf{a}) A leaf of Acer is sampled uniformly with 720~points. Every 60 points the color is changed. The~angle between the first point of a colored portion, the~center of gravity, and the last point of the same colored portion is illustrated. These angles are not equal.  (\textbf{b}) 12 points are placed successively along the contour to form an angle of 360/12 with the center of gravity and the previous such point. Now the angles formed by each colored portion of the curve are the same. On~each colored portion, 60 points are distributed uniformly.  (\textbf{c}) The graph of the angle function of the Acer leaf is represented in the constant speed parameterization. (\textbf{d})  The graph of the angle function of the Sorbus leaf is represented in the constant speed parameterization. (\textbf{e}) A leaf of Sorbus is parameterized with constant speed and sampled with 720 points. (\textbf{f}) 12 keypoints are detected to form equal angles to the center of gravity and the portions of curve between them are resampled with 60 points. }
\label{fig_clock}
\end{figure}

\begin{figure}[h!]
\centering
\subfloat[\centering]{\includegraphics[width=6.0cm]{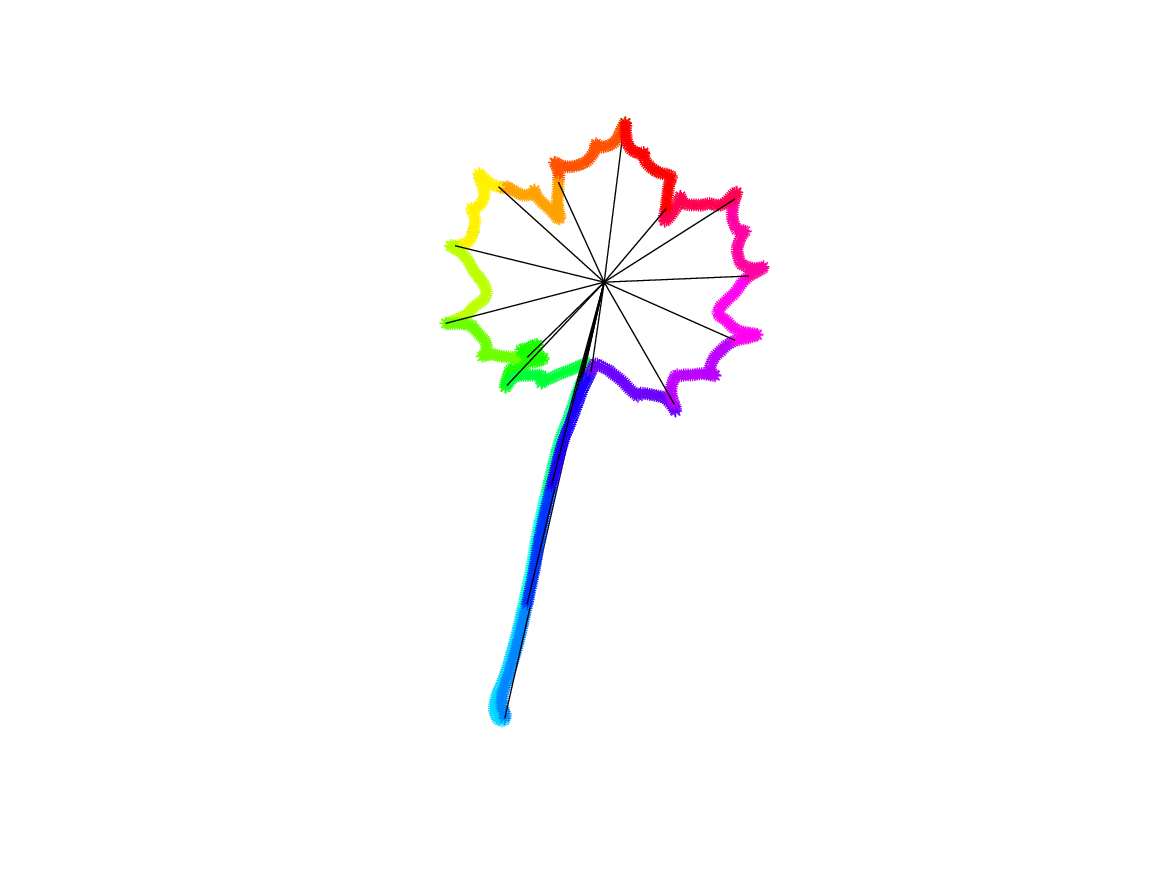}}
\subfloat[\centering]{\includegraphics[width=6.0cm]{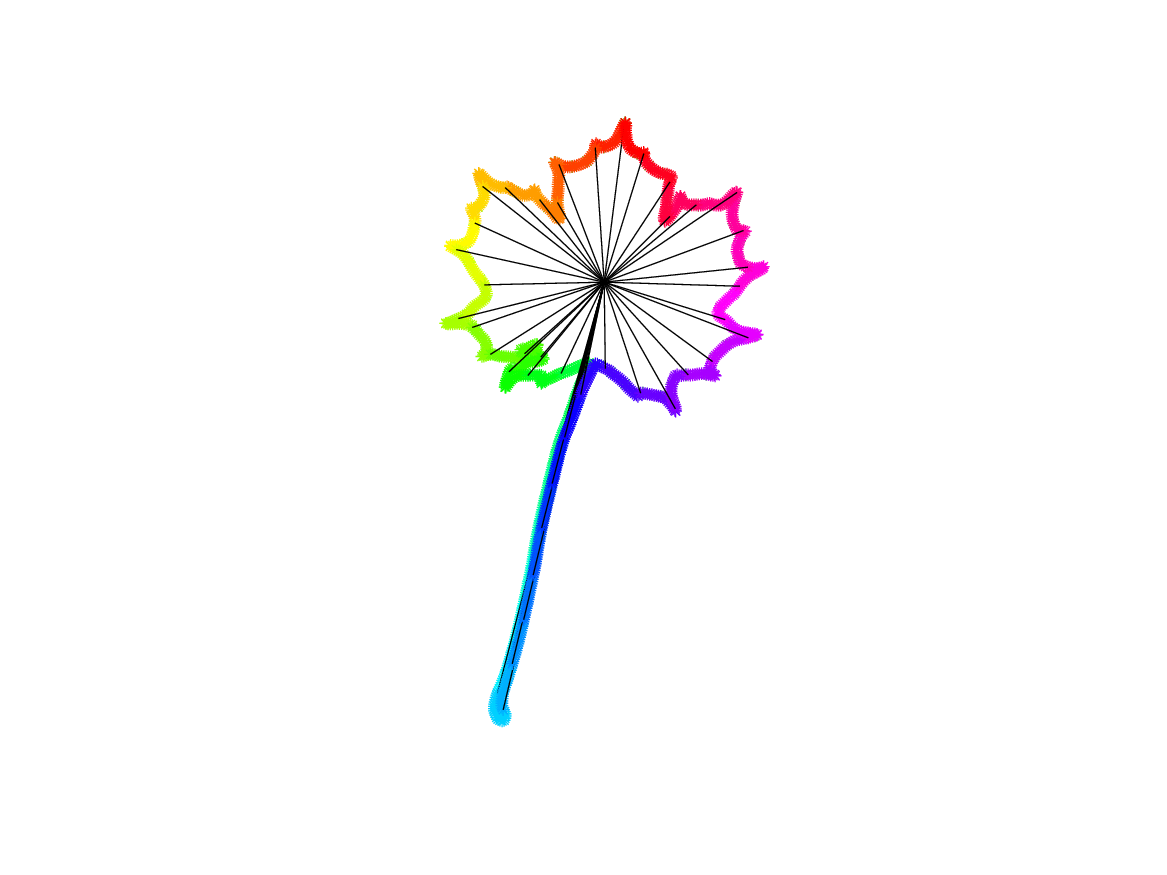}}
\subfloat[\centering]{\includegraphics[width=6.0cm]{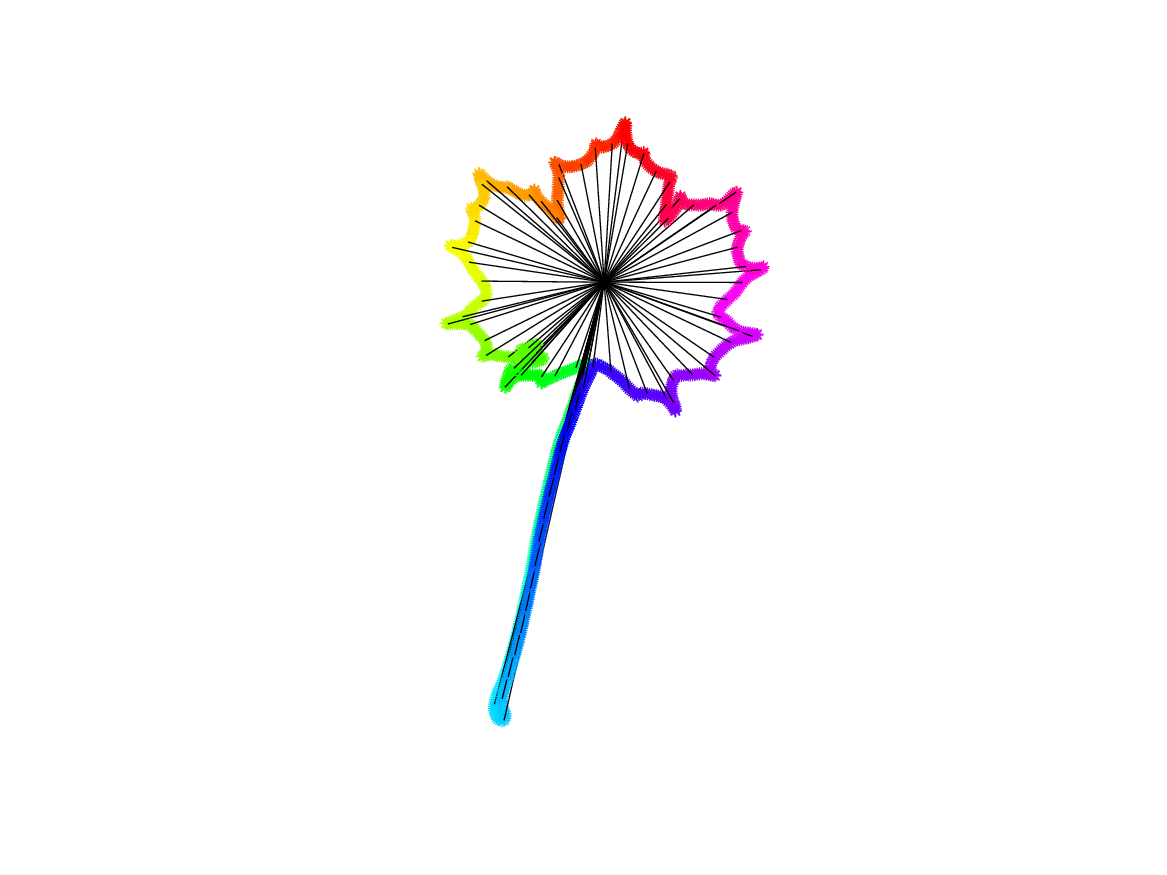}}\\
\subfloat[\centering]{\includegraphics[width=6.0cm]{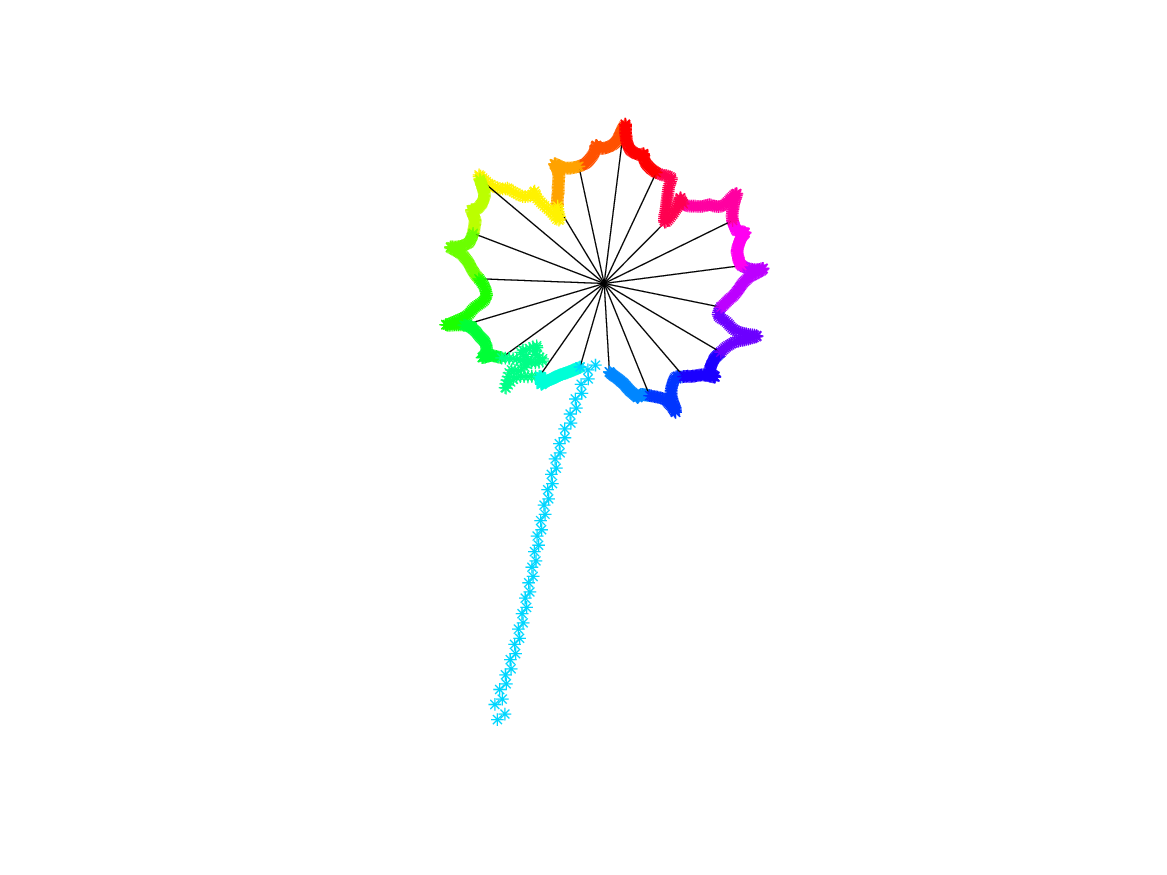}}
\subfloat[\centering]{\includegraphics[width=6.0cm]{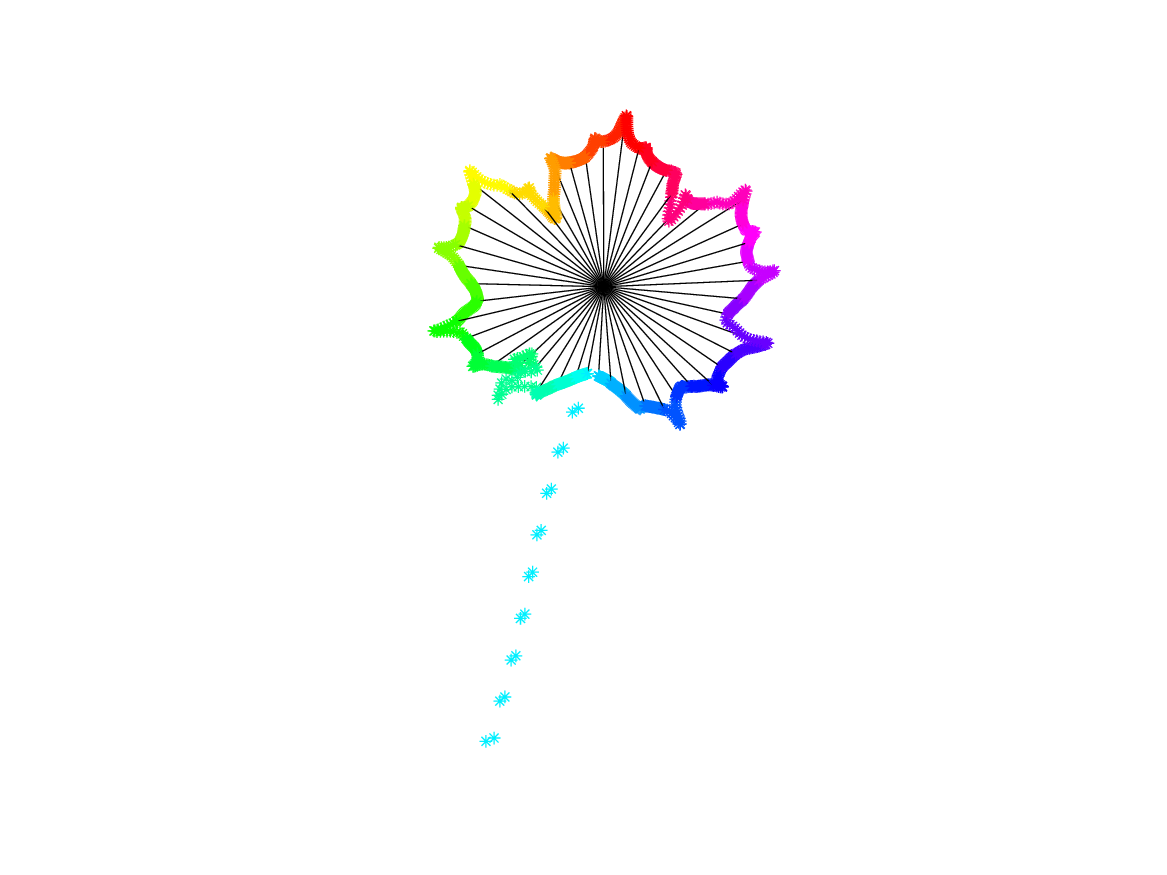}}
\subfloat[\centering]{\includegraphics[width=6.0cm]{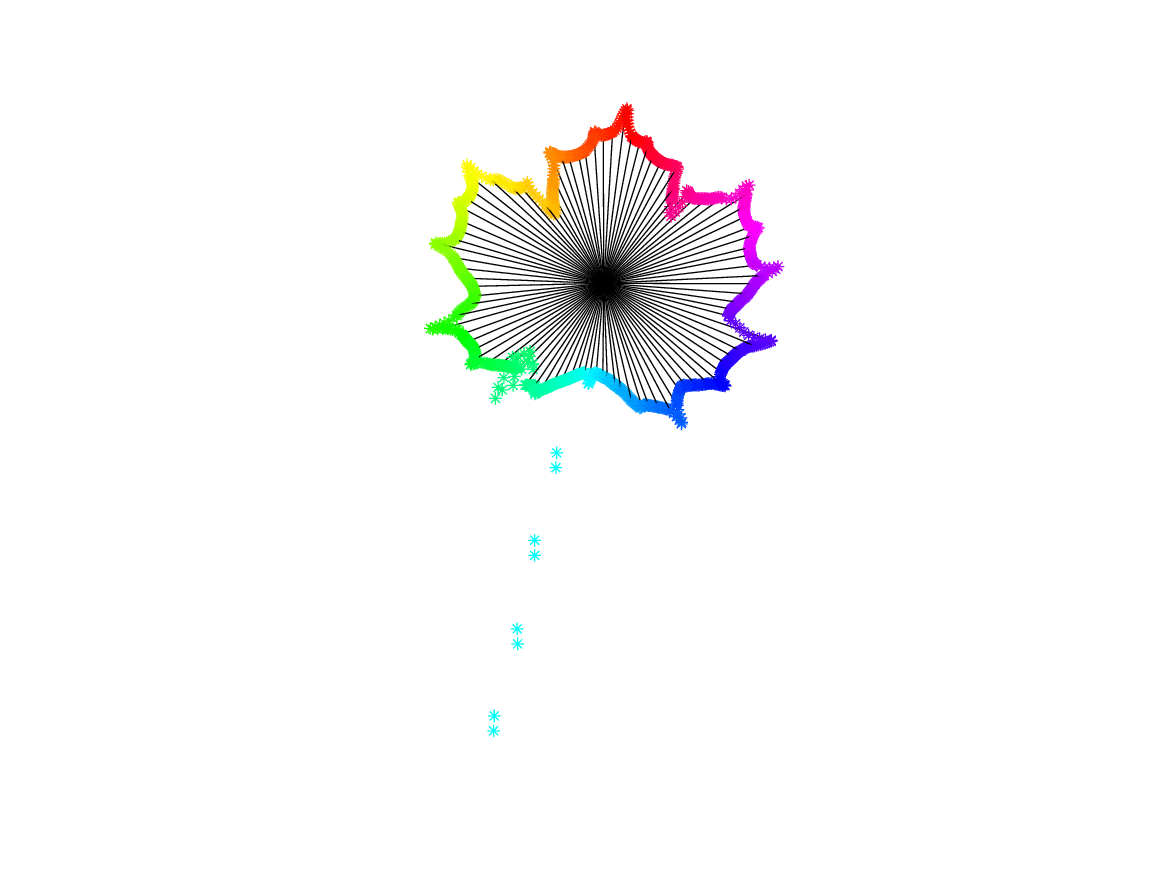}}\\
\subfloat[\centering]{\includegraphics[width=6.0cm]{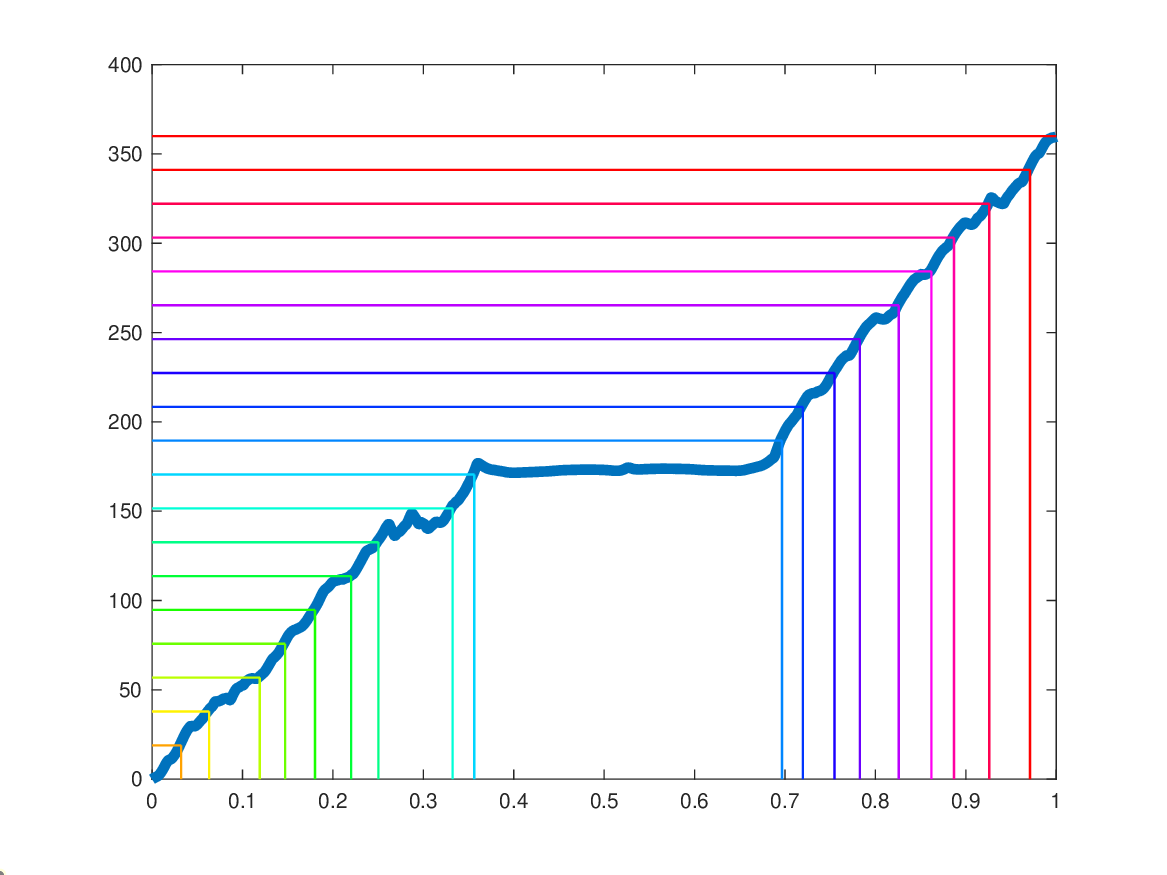}}
\subfloat[\centering]{\includegraphics[width=6.0cm]{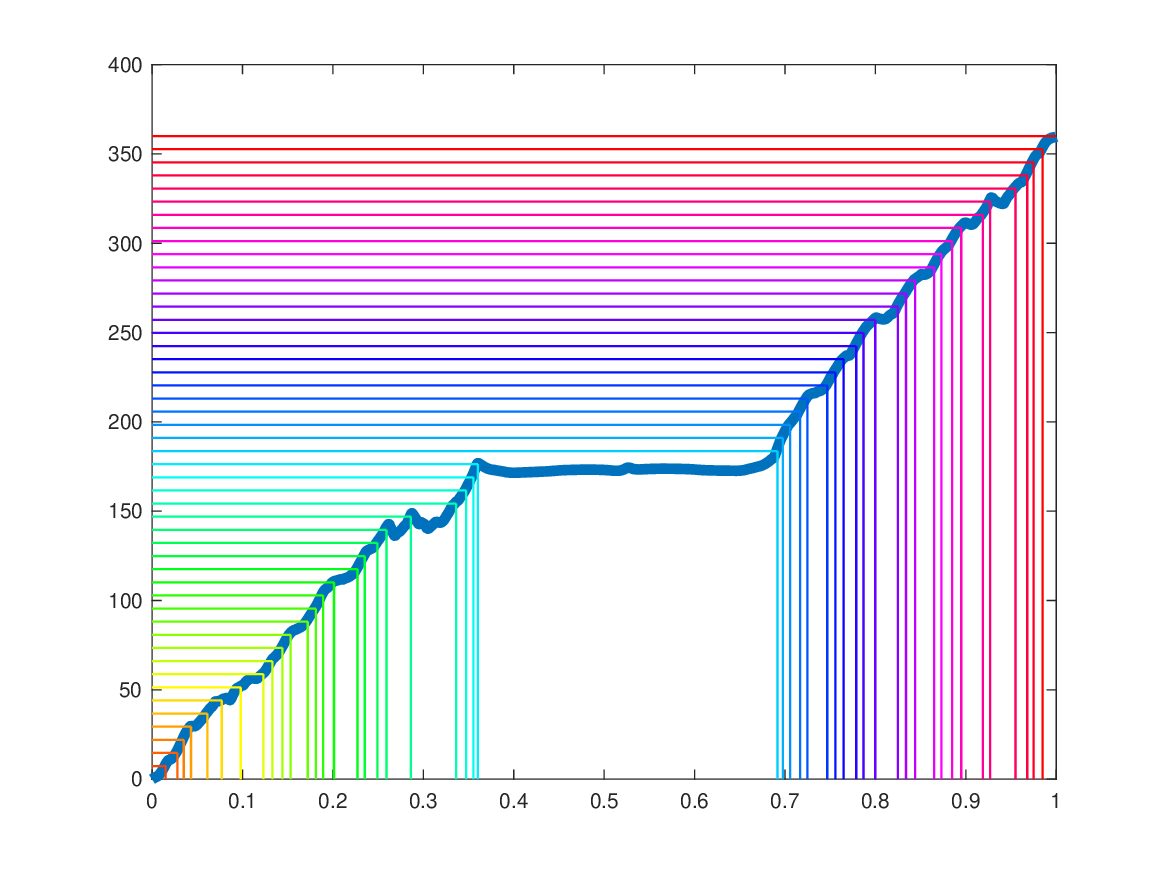}}
\subfloat[\centering]{\includegraphics[width=6.0cm]{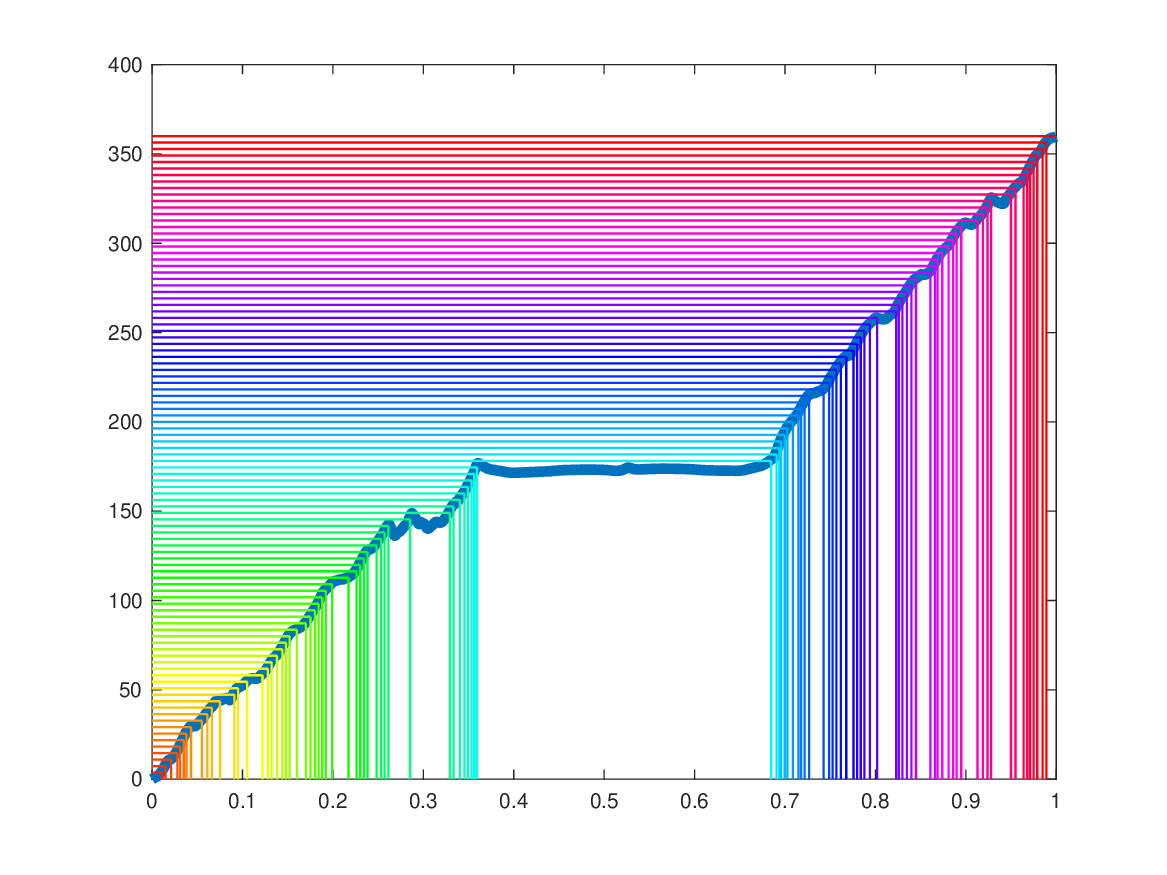}}
\caption{\textbf{{Dependence of the clock parameterization with respect to the number of subdivisions}}.   First row: An Acer leaf with peduncle is parameterized at constant speed and sampled with 1000~points. The~color is changed every (\textbf{a}) 20 points, (\textbf{b}) 50 points,  (\textbf{c}) 100~points. Second row: An Acer leaf with peduncle is parameterized with clock parameterization according to  (\textbf{d})~20 subdivisions, (\textbf{e})~50~subdivisions,   (\textbf{f}) 100 subdivisions. The~density of points along the peduncle decreases drastically with the number of subdivisions. Last row: The graph of the corresponding angle function is illustrated with  (\textbf{g}) 20 equally spaces horizontal lines, (\textbf{h})  50 equally spaces horizontal lines, (\textbf{i}) 100 equally spaces horizontal lines.}
\label{fig_clock_n}
\end{figure}

\subsubsection{Curvature-Weighted Clock Parameterizations of Jordan~Curves}\label{section_curvature_weighted_clock}

In this section, we introduce a $2$-parameter family of canonical parameterizations obtained by combining curvature-weighted parameterizations with parameter $\lambda$ (see \mbox{Section~\ref{sec_curvature_weighted}}) and clock parameterizations with $n$ subdivisions (Section~\ref{sec_just_clock}). More precisely, each contour is first decomposed into $n$ subdivisions forming $n$ equal angles to the center of gravity. Secondly, each portion of the curve is reparameterized according to a curvature-weighted parameterization with parameter $\lambda$ as in Section~\ref{sec_curvature_weighted}, Equation~\eqref{ulambda}. 

A sampling of a curve with $N$ points according to the curvature-weighted clock parameterization with parameters $(\lambda, n)$ goes as follows: the curve is subdivided into $n$ portions forming equal angles at the center of gravity; $N/n$ points are distributed on each portion according to the curvature-weighted parameterization with parameter $\lambda>0$; see Section~\ref{sec_curvature_weighted} (for low parameter $\lambda$, the~density of points decreases on flat parts of the contour and increases on curved parts, while for large parameter $\lambda$, the curvature-weighted parameterization tends to the constant speed parameterization).

In Figure~\ref{fig_curv_clock}, an~Acer leaf (with peduncle) is resampled with 1000 points according to curvature-weighted clock parameterizations with different parameters $(\lambda, n)$. The~first column corresponds to $\lambda = 0.3$, the~second to $\lambda = 1$, and~the third column to $\lambda = 2$. At~the same time, the~first row corresponds to  $n = 12$, the~second row to $n = 24$, and~the last row to $n = 36$. One can observe that the density of points along the peduncle decreases when $\lambda$ decreases and/or the number of subdivisions~increases.

\begin{figure}[h!]
\centering
\subfloat[\centering]{\includegraphics[height=5cm]{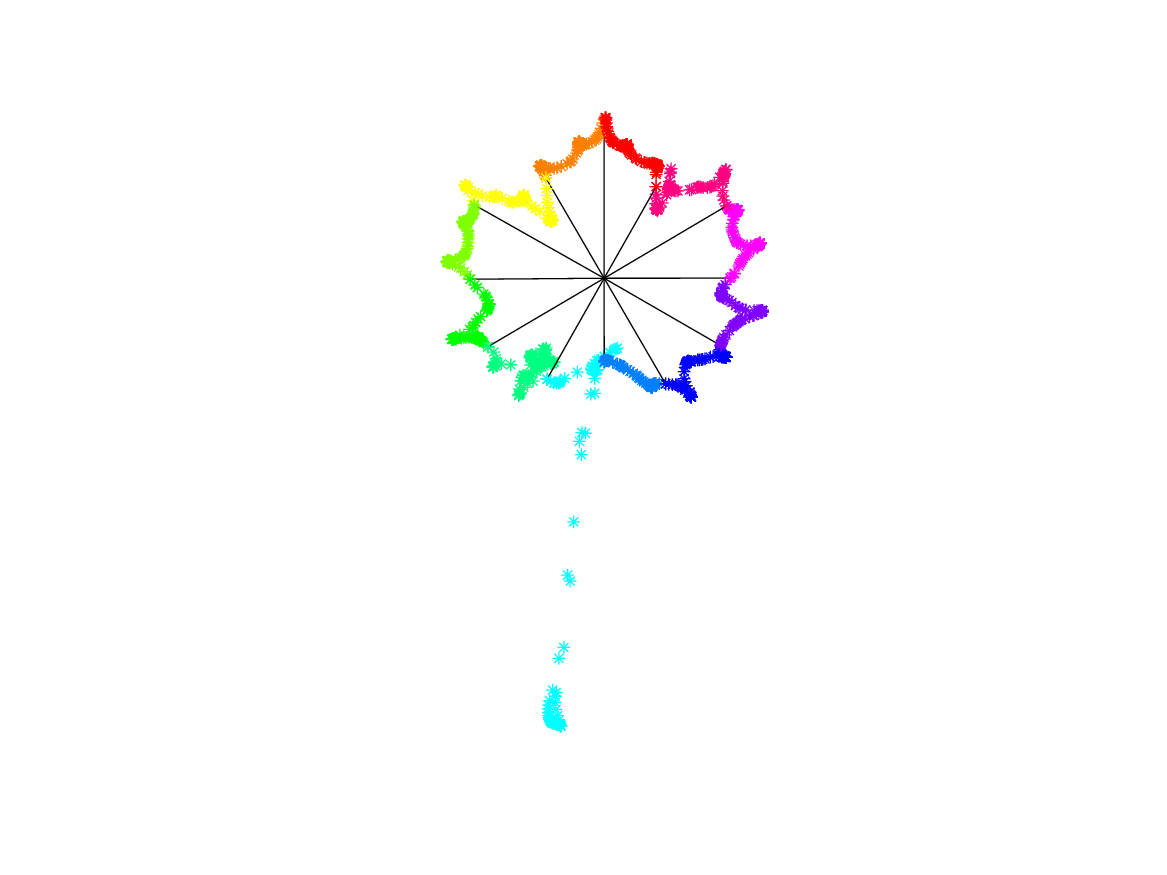}}
\subfloat[\centering]{\includegraphics[height=5cm]{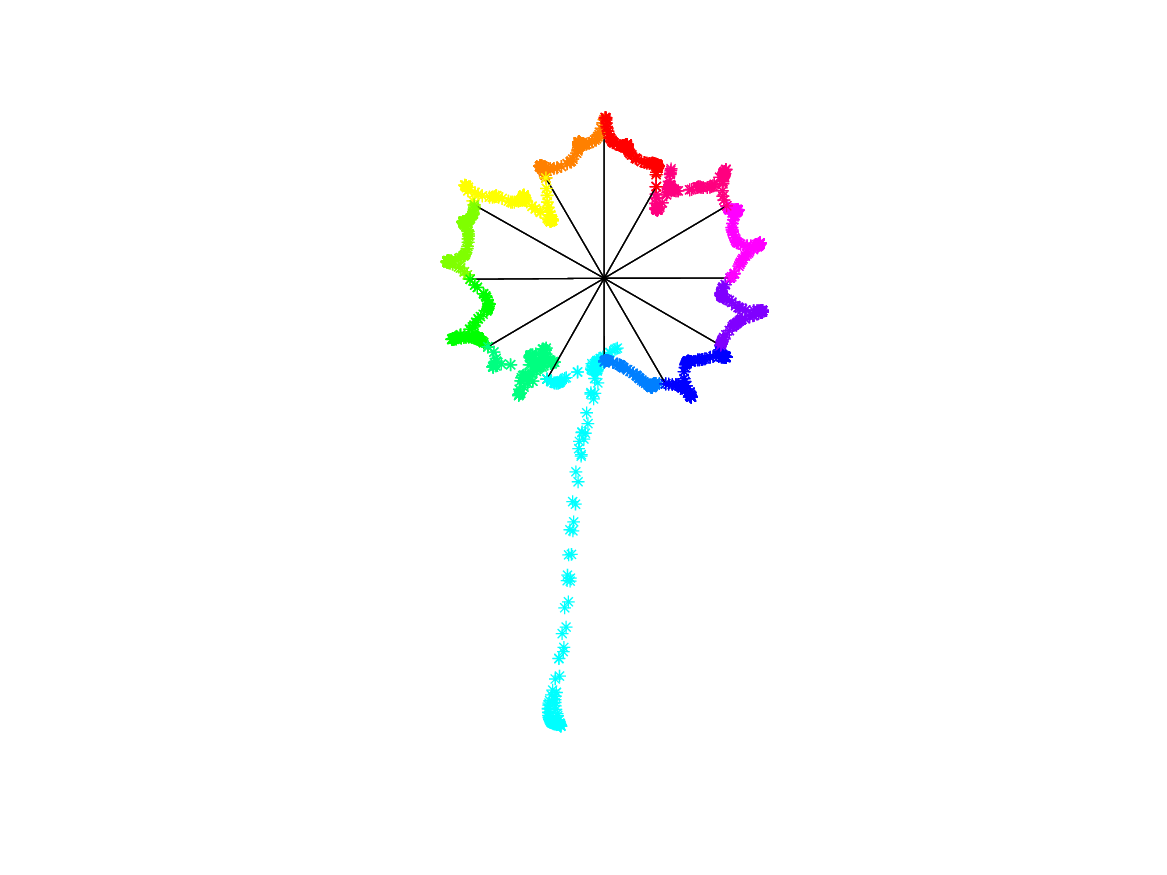}}
\subfloat[\centering]{\includegraphics[height=5cm]{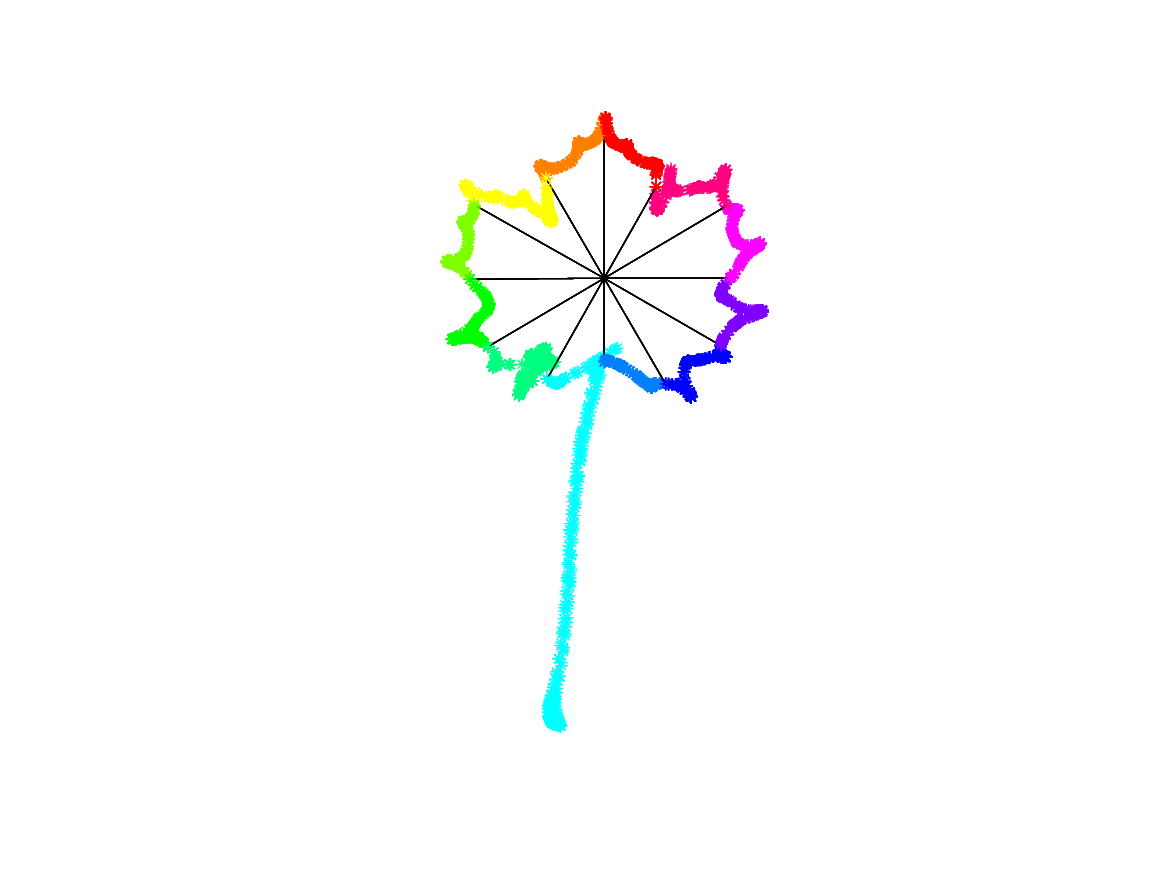}}\\
\subfloat[\centering]{\includegraphics[height=5cm]{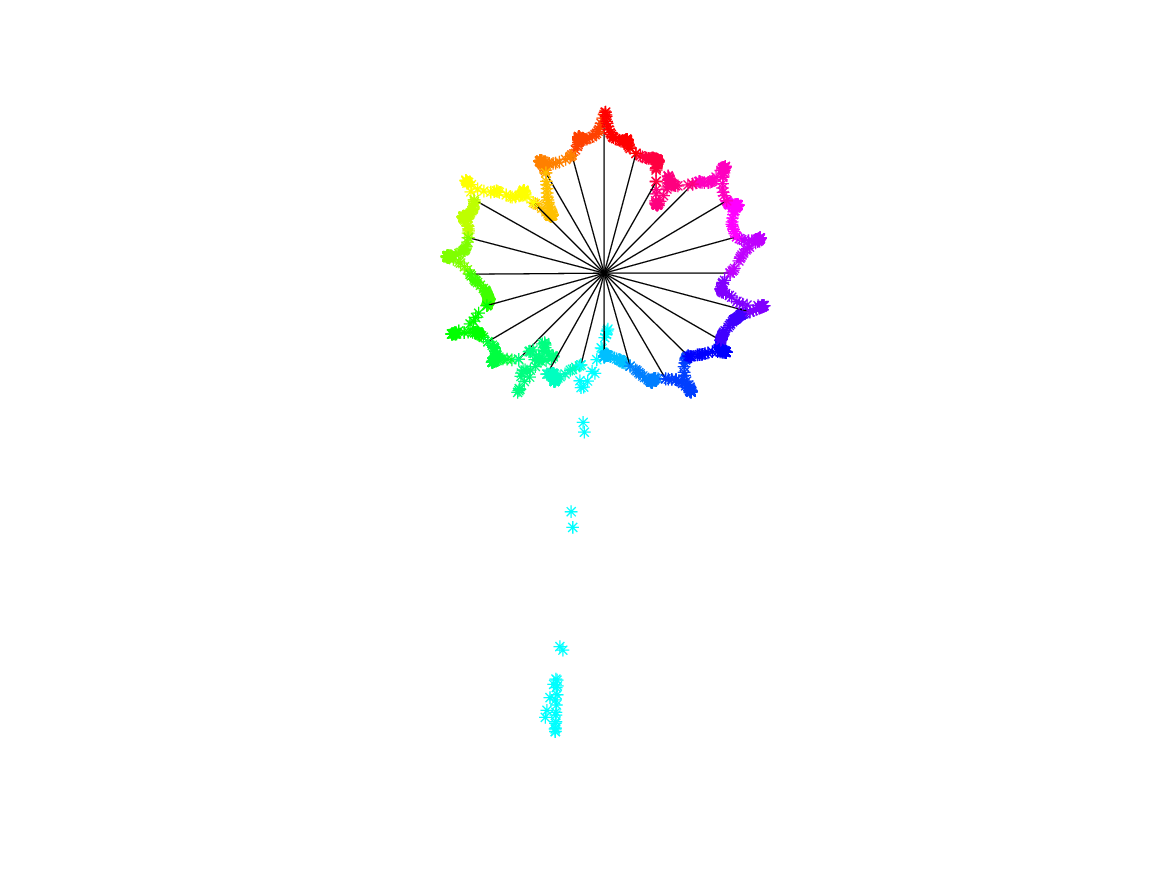}}
\subfloat[\centering]{\includegraphics[height=5cm]{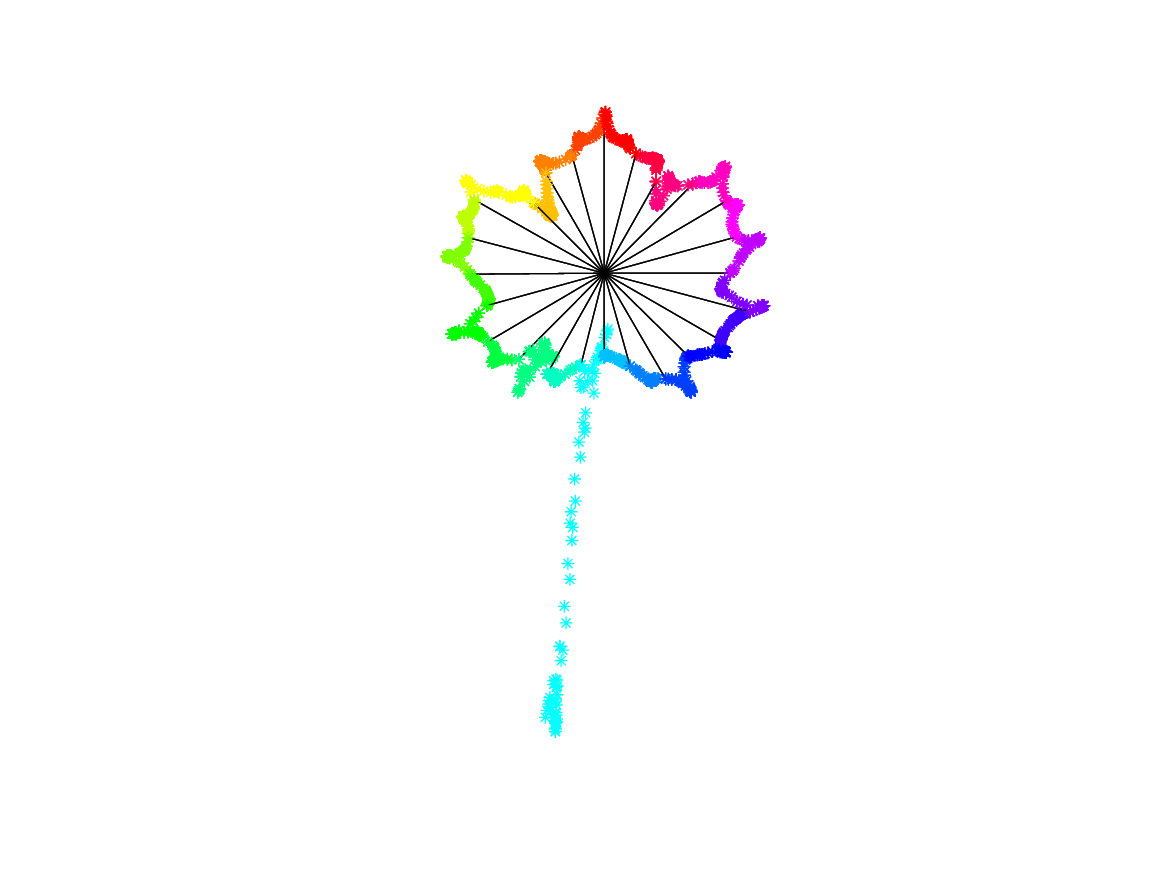}}
\subfloat[\centering]{\includegraphics[height=5cm]{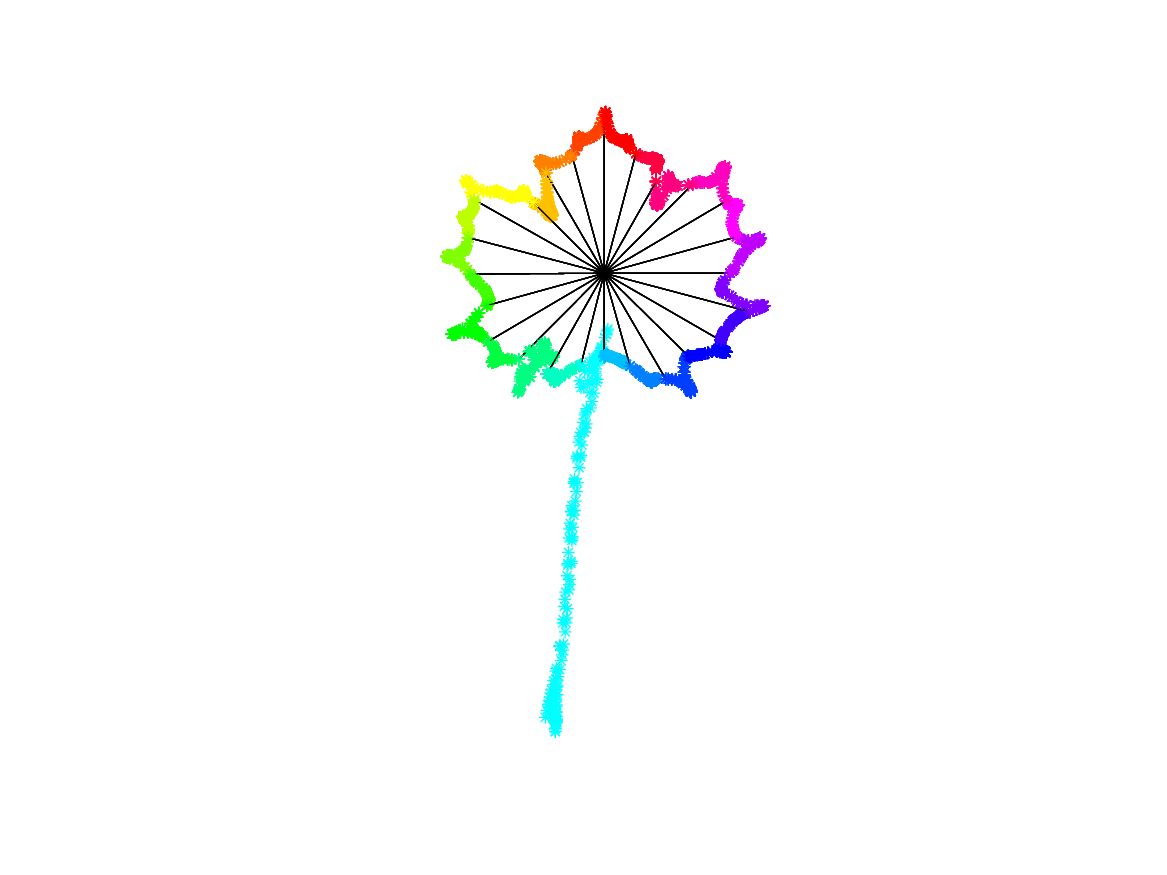}}\\
\subfloat[\centering]{\includegraphics[height=5cm]{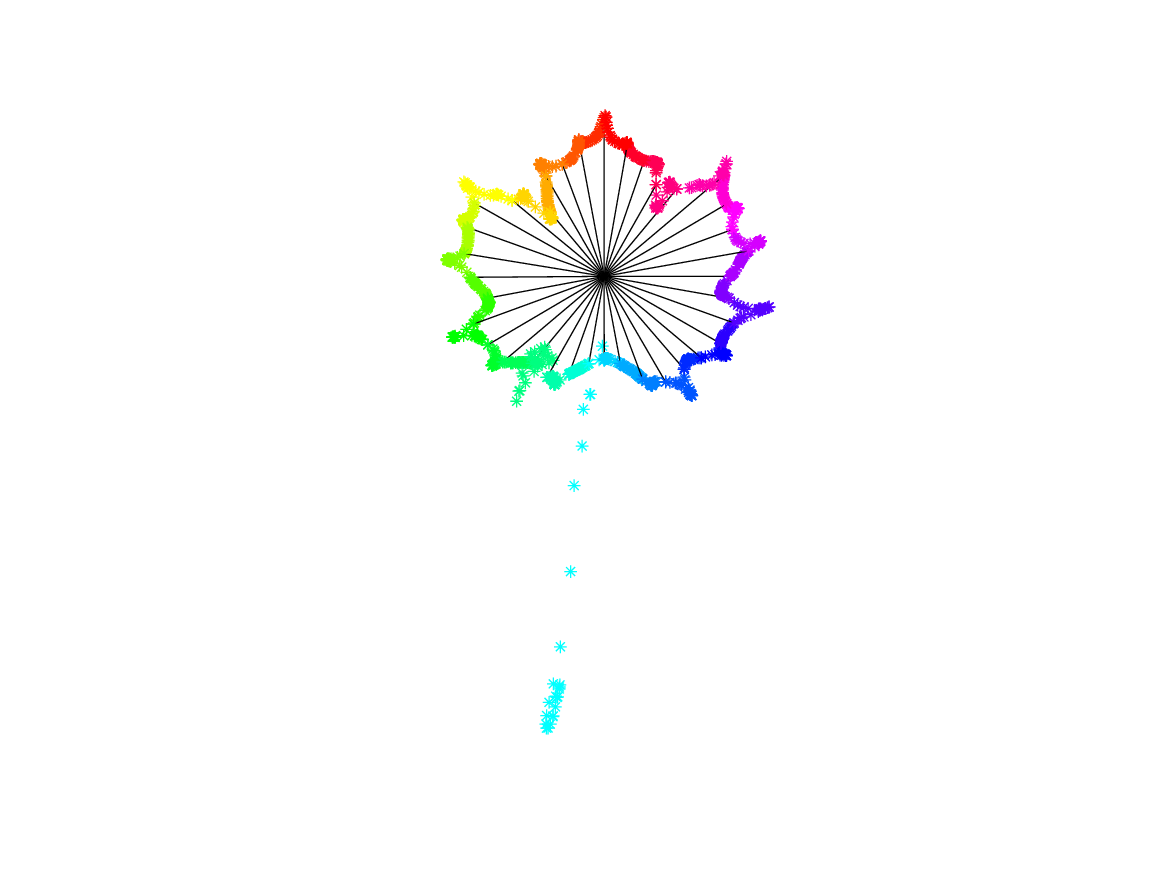}}
\subfloat[\centering]{\includegraphics[height=5cm]{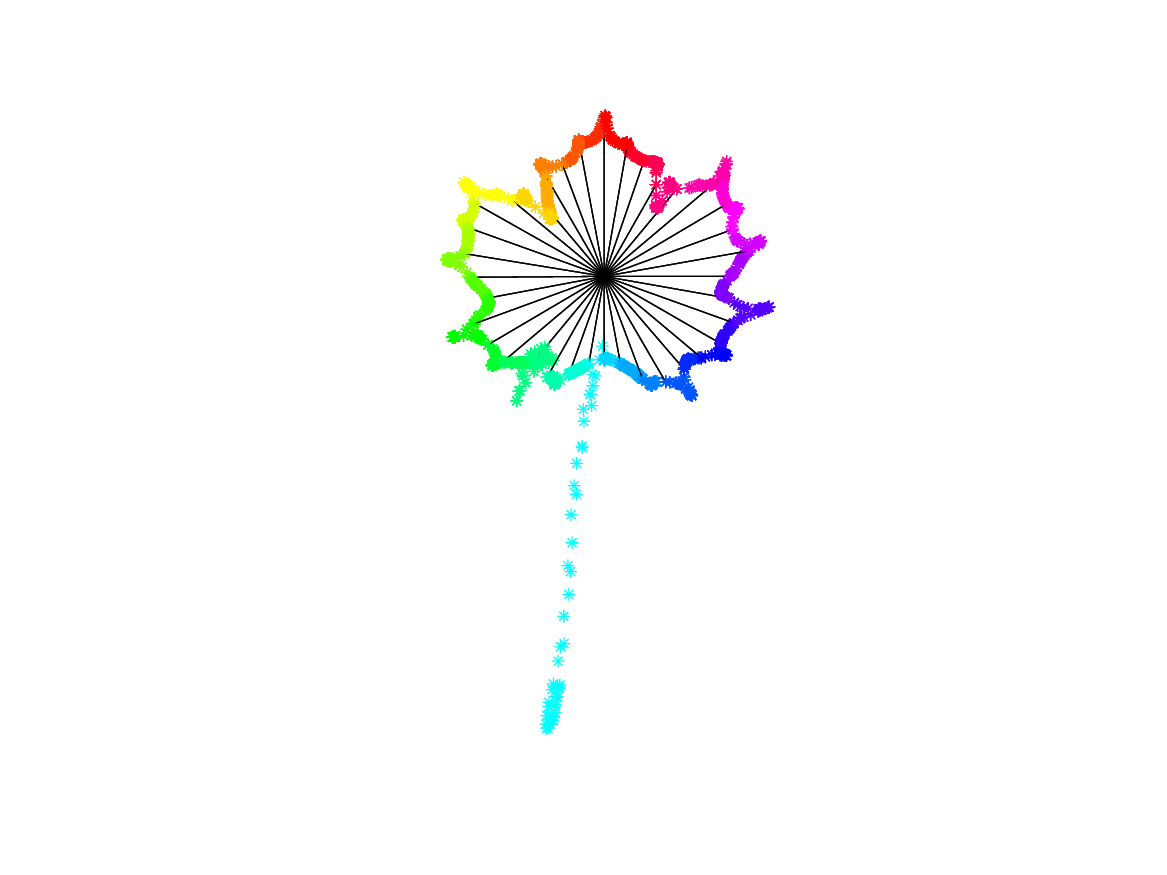}}
\subfloat[\centering]{\includegraphics[height=5cm]{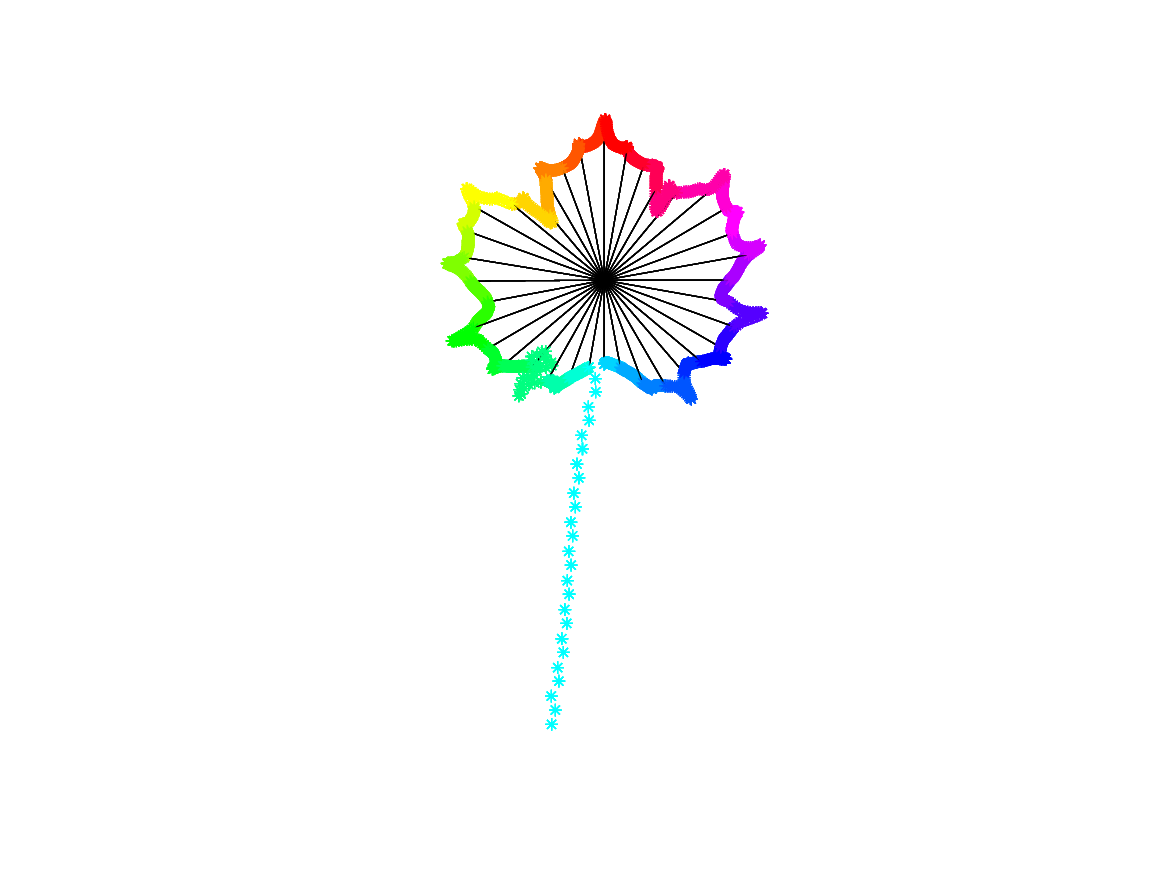}}\\

\caption{\textbf{Curvature-weighted clock parameterizations with different parameters}.   An~Acer leaf (with peduncle) is resampled with 1000 points according to curvature-weighted clock parameterizations with different parameters $(\lambda, n)$. The~first column (a,d,g) corresponds to $\lambda = 0.3$, the~second (b,e,h) to $\lambda = 1$, and~the third column (c,f,i) to $\lambda = 2$. At~the same time, the~first row (a,b,c) corresponds to  $n = 12$, the~second row (d,e,f) to $n = 24$, and~the last row (g,h,i) to $n = 36$. One can observe that the density of points along the peduncle decreases when $\lambda$ decrease and/or the number of subdivisions increases.}
\label{fig_curv_clock}
\end{figure}

\subsection{Geometric Learning of Canonical~Parameterizations}\label{section_geometric_learning}

In this section, we consider the $2$-parameter family of curvature-weighted clock parameterizations with parameters $(\lambda, n)$ introduced in Section~\ref{section_curvature_weighted_clock}, as well as~the corresponding sections $s_{\lambda, n}: \mathcal{P}/\mathcal{G} \rightarrow \mathcal{P}$ of the fiber bundle consisting of embedded closed curves.  The~aim is to optimize clustering based on the distance between shapes defined in \eqref{d2}, which depends on the section $s_{\lambda, n}$ chosen. 

A table containing the Dunn index for various values of parameters $\lambda$ (weighting the parameterization by curvature) and $n$ (number of segments in the clock parameterization) is given in Table~\ref{tab2}. {The Dunn index values were averaged over a 30-fold cross-validation.} For this experiment, we used the training set consisting of 15 classes of leaves with 50~leaves each. We can see in Table~\ref{tab2} that the largest Dunn index (corresponding to the best clustering for this metric) is obtained for $n = 3$ subsections along the contours of the leaves, {and $\lambda = 2000$, which corresponds to a curvature-weighted parameterization on each of the three portions of the curve. In~Figure~\ref{illustration-optimal_parameterization}a, we visualize the pair of curves that maximizes the intraclass distance, as~well as the pair of curves that minimizes the interclass distance. These two pairs of curves are responsible for the value of the Dunn index. We illustrate the segment in $L^2(\mathbb{S}^1, \mathbb{R}^2)$ ,which connects the leaves parameterized by the optimal parameterization ($n = 3$, $\lambda = 2000$).} In the left picture of Figure~\ref{illustration-optimal_parameterization}a, we can see that the south portion of the contour of the leaf without peduncle deforms to create a peduncle. In~comparison in the right picture of Figure~\ref{illustration-optimal_parameterization}a, two leaves from two different classes seem perfectly aligned. This illustrates the challenges of clustering or classifying this dataset of leaves, where very different shapes belong to the same class and similar shapes belong to different classes. Figure~\ref{optimal} illustrates  a $2$-dimensional representation of the distance distribution along the dataset using the  tsne algorithm before any normalization and after normalization using the optimal parameterization for the Dunn index. One can see that the classes are significantly better clustered after~normalization.

\begin{figure}[h!]
\centering
\subfloat[\centering]{\includegraphics[width=6cm]{images/etape0_flipudbis.eps}}
\subfloat[\centering]{\includegraphics[width=6cm]{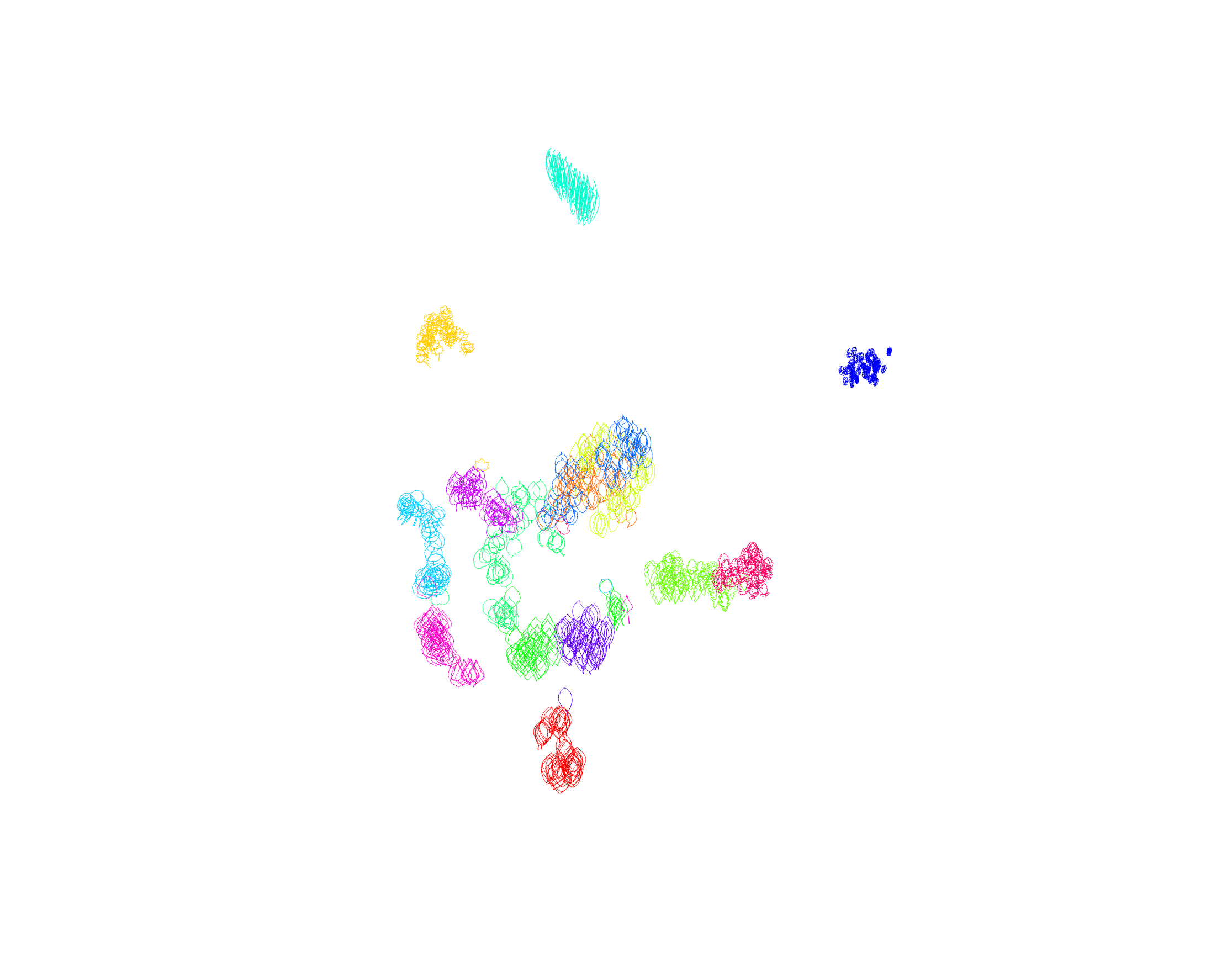}}\\
\subfloat[\centering]{\includegraphics[width=6cm]{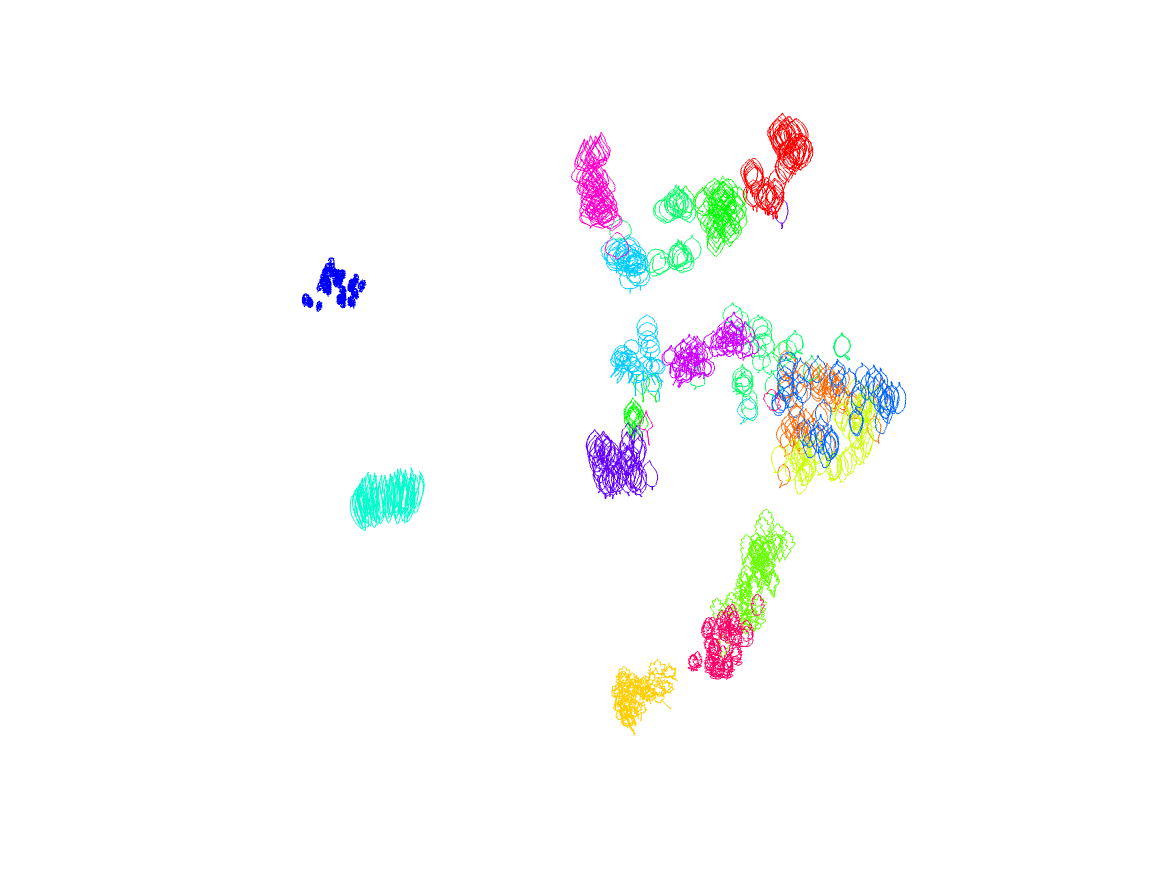}}\\

\caption{Two-dimensional representation of the distance distribution along the dataset using \texttt{tsne} algorithm (\textbf{a}) before any normalization;
(\textbf{b}) after normalization over finite-dimensional shape-preserving groups, as explained in Section~\ref{section_normalization}, and normalization over the infinite-dimensional group of orientation-preserving reparameterization by using the optimal parameterization for the Dunn index obtained as clock parameterization (see Section~\ref{sec_just_clock}) with $n = 3$ subdivisions and curvature-weighted parameterization ($\lambda = 2000$) on all portions of the subdivisions; 
(\textbf{c}) after normalization over finite-dimensional shape-preserving groups, as explained in Section~\ref{section_normalization}, and normalization over the infinite-dimensional group of orientation-preserving reparameterization by using the optimal parameterization for the Davies Bouldin index obtained as clock parameterization (see Section~\ref{sec_just_clock}) with $n = 5$ subdivisions and arc-length parameterization  ($\lambda = +\infty$) on all portions of the subdivisions.}
\label{optimal}
\end{figure}

\begin{table}[h!]
\caption{Dunn index for various clustering of the Swedish dataset based on clock parameterization with parameters $\lambda$ (weighting the parameterization by curvature) and $n$ (number of segments in the clock parameterization), cross-validated over 30 partitions of the dataset into training set and testing set. Each column corresponds to the parameter $\lambda$ given at the top of the column; each row corresponds to the values $n$ given on the left. A~larger Dunn index corresponds to a better clustering. The frame highlights the highest Dunn index.
\label{tab2}}
\centering
\begin{tabular}{cccccccccc}

\hline
			\boldmath{$\lambda =$} & \boldmath{$ 0.5$}  & \boldmath{$1$}  & \boldmath{$2$} & \boldmath{$5$}
            & \boldmath{$10$} & \boldmath{$100$} & \boldmath{$1000$} & \boldmath{$2000$} & \boldmath{$+\infty$}\\
\hline
n = 0  & 0.0535 & 0.0535 & 0.0536 &  0.0539 & 0.0543 & 0.0584 &  0.0670 & 0.0706        &  0.0774\\			  	                 
\hline
n = 2  & 0.0382 & 0.0382 & 0.0383 & 0.0384 & 0.0387	& 0.0432 & 	0.0522	& 0.0560        &  0.0639 \\
\hline
n= 3   & 0.0768 & 0.0768 & 0.0768 & 0.0769 & 0.0770	& 0.0785 & 	0.0821	& \fbox{0.0827}        &0.0826\\
\hline
n = 4  & 0.0656	& 0.0657 & 0.0657 & 0.0658 & 0.0659	& 0.0677 & 	0.0723	& 0.0735        & 0.0766\\
\hline
n = 5  & 0.0711 & 0.0711 & 0.0711 & 0.0711 & 0.0712	& 0.0720 & 	0.0748	& 0.0759        &  0.0780\\
\hline
n = 7  & 0.0703 & 0.0703 & 0.0703 & 0.0703 & 0.0704	& 0.0714 & 	0.0755	& 0.0772        &  0.0811\\ 
\hline
n = 9  & 0.0698 & 0.0698 & 0.0698 & 0.0699 & 0.0699	& 0.0707 & 	0.0739	& 0.0755        &  0.0798\\
\hline
n = 10 & 0.0672	& 0.0672 & 0.0672 & 0.0672 & 0.0673	& 0.0680 & 	0.0712	& 0.0727        &  0.0771\\
\hline              
n = 20 & 0.0642	& 0.0642 & 0.0642 & 0.0642 & 0.0643	& 0.0647 & 	0.0667	& 0.0677        &  0.0710\\
\hline
\end{tabular}
\end{table}


\section{Classification~Results}
\label{sec:class_results}
\unskip

\subsection{Testing on the Dataset of the Swedish~Leaves}\label{Testing}

\subsubsection{Clustering Evaluation Using Another Cluster Validation~Index}
Recall that the Swedish leaves dataset was divided into a training set, containing 50 contours from  each class, and~a testing set containing the remaining 25 contours per class. The~standardization steps performed in Sections~\ref{section_normalization} and \ref{section_geometric_learning} were necessary to define a distance between contours in the plane that is independent of their position and orientation in space, their scaling, and their parameterization (starting point, traveling direction, and~velocity). 
The best standardization procedures were selected to optimize the clustering of the classes of the labeled training set.  {The quality of the clustering obtained can be measured by computing a cluster validity index, like the Dunn index or the Davies Bouldin index, as explained in Section~\ref{validation_indices}, for~the distance defined in \eqref{d2}. \mbox{Tables~\ref{tab2} and~\ref{tabDB}}  contain, respectively, the values of the Dunn index and Davies Bouldin index, cross-validated over 30 partitions of the dataset into training set and testing set. As~mentioned in Section~\ref{Davies Bouldin}, the~averaging of the distances to a centroid over all elements of a class allows the Davies Bouldin index to be more stable than the Dunn index in the presence of outliers. We therefore select the Davies Bouldin index for classification task. }

\begin{table}[h!]
\caption{Davies Bouldin index for various clustering of the Swedish dataset based on clock parameterization with parameters $\lambda$ (weighting the parameterization by curvature) and $n$ (number of segments in the clock parameterization), cross-validated over 30 partitions of the dataset into training set and testing set. Each column corresponds to the parameter $\lambda$ given at the top of the column, each row corresponds to the values $n$ given on the left. A~lower Davies Bouldin index corresponds to a better~clustering. The lowest Davies-Bouldin index is highlighted with a frame.}
\label{tabDB}
\centering
\begin{tabular}{cccccccccc}
\hline
			\boldmath{$\lambda =$} & \boldmath{$ 0.5$}  & \boldmath{$1$}  & \boldmath{$2$} & \boldmath{$5$}
            & \boldmath{$10$} & \boldmath{$100$} & \boldmath{$1000$} & \boldmath{$2000$} & \boldmath{$+\infty$}\\
\hline
n = 0  & 2.7540 & 2.7524 & 2.7491 & 2.7394 & 2.7238 & 2.5351 &  2.2157  & 2.1575        &  2.0848\\			  	                 
\hline
n = 2  & 2.8729 & 2.8725 & 2.8717 & 2.8694 & 2.8657	& 2.8121 & 	2.6816	& 2.6337        &  2.5677 \\
\hline
n= 3   & 2.1682 & 2.1678 & 2.1671 & 2.1649 & 2.1614 & 2.1129 & 	1.9709	& 1.9301        &  1.8636\\
\hline
n = 4  & 2.4506	& 2.4505 & 2.4502 & 2.4495 & 2.4483	& 2.4317 &  2.3666	& 2.3343        &  2.2473\\
\hline
n = 5  & 2.0851 & 2.0848 & 2.0843 & 2.0827 & 2.0801	& 2.0446 & 	1.9508	& 1.9201        &\fbox{1.8574}\\
\hline
n = 7  & 2.0892 & 2.0890 & 2.0885 & 2.0870 & 2.0847	& 2.0539 & 	1.9715	& 1.9414        &  1.8707\\ 
\hline
n = 9  & 2.1272 & 2.1270 & 2.1265 & 2.1251 & 2.1229 & 2.0959 & 	2.0197	& 1.9886        &  1.9101\\
\hline
n = 10 & 2.2697 & 2.2695 & 2.2690 & 2.2677 & 2.2656	& 2.2397 & 	2.1573	& 2.1207        &  2.0176\\
\hline
n = 20 & 2.2241	& 2.2239 & 2.2236 & 2.2225 & 2.2207	& 2.2015 & 	2.1493	& 2.1258        &  2.0457\\
\hline
\end{tabular}
\end{table}

\subsubsection{Improvement of the Classification Results After~Normalization}\label{improvement}

{In the present section, we illustrate how standardization procedures affect classification performance of samples from the testing set. 
We have~used the following:
\begin{itemize}
    \item Logistic Regression with $L^2$-norm regularization;
    \item Random Forest with 400 trees;
    \item Support Vector Machine (SVM) with a non-linear Radial Basis Function (RBF) kernel;
    \item $k$-Nearest Neighbors (KNN) with $k = 5$ nearest neighbors.
\end{itemize}
Complete parameter specifications are available in the code. The~Support Vector Machine with an RBF kernel achieved the highest performance, with~$C=25$ (the regularization parameter controlling the trade-off between margin size and classification error) and $\gamma = 1.5$ (the kernel coefficient that determines the influence radius of individual training samples). 
To assess classification performance, we used accuracy as an evaluation metric, defined as {follows:}
\begin{equation}
\text{Accuracy} = \frac{1}{N} \sum_{i=1}^{N} \mathbf{1}\{\, Y_i = \hat{Y}_i \,\},
\end{equation}
where $Y_i$ is the true label of the $i$-th element in the testing set, and~$\hat{Y}_i$ is the corresponding predicted label.
}	

{
The results are displayed in Table~\ref{tab:testing}.
We see that all four classification algorithms perform significantly better after normalization, i.e.,~when a representative is chosen in each orbit of the shape-preserving groups in a consistent way. In~particular,  we observe an increase of 25,85\% of correct classifications for the KNN  algorithm between the first line of Table~\ref{tab:testing} (no normalization performed) and the last line (all finite and infinite-dimensional shape-preserving groups taken into account using optimized sections). This illustrates that including standardization of the representative of each orbits under shape-preserving groups in the pre-processing step improves classification results irrespective of the classification algorithms. 
}

\begin{table}[h!]
\caption{Classification results on the dataset of Swedish leaves in terms of average accuracy across pre-processing stages, with~different classifiers and over a 30-fold cross-validation. The~Dunn index and the Davies Bouldin indices are also reported, cross-validated over 30 partitions of the dataset into the training set, as well as~testing set and reported in the first column. We reparametrized the curve with 1000 points and, for~the clock parametrization, we set the number of subsections to $n=5$ and $\lambda = \infty$, which corresponds to arc-length parameterization on each portion of the curves and minimizes the Davies Bouldin index (Table~\ref{tabDB}). For~comparison, the~last line corresponds to the curvature-weighted clock parameterization with a number of subsections equal to $n = 5$ and the weight of the curvature equal to $\lambda=2000$. The best result in term of accuracy is highlighted with a frame.} \label{tab:testing}
\centering
\begin{tabular}{lcccccc}
\hline
\textbf{Pre-processing Steps} & \textbf{Dunn} & \textbf{DB} & \textbf{Logistic} & \textbf{RF} & \textbf{SVM} & \textbf{KNN} \\
\hline
No normalization \cref{section_dataset}                         & 0.0325 & 3.7981 & 0.7273 & 0.7489 & 0.8310 & 0.6713  \\
Std the travel direction \cref{section_counterclockwise}        & 0.0314 & 3.8766 & 0.7743 & 0.7483 & 0.8411 & 0.6800  \\
Std the starting point \cref{section_rot_parameter}             & 0.0367 & 3.2387 & 0.8604 & 0.7708 & 0.8829 & 0.6826 \\
Std the scale variability \cref{section_scaling}                & 0.0566 & 2.4960 & 0.9233 & 0.8754 & 0.9449 & 0.8913  \\
Std the position \cref{section_translation}                     & 0.0667 & 2.4239 & 0.9132 & 0.8902 & 0.9364 & 0.8892 \\
Std the orientation \cref{section_orientation}                  & 0.0774 & 2.0847 & 0.9192 & 0.8990 & 0.9496 & 0.9228 \\
Clock parametrization \cref{Section_clock_parameterization}     & 0.0759 & 1.9201 &  0.9395 & 0.9063 & \fbox{0.9602} 
& 0.9332 \\
Curvature-weighted \cref{section_curvature_weighted_clock}      & 0.0780 & 1.8574 & 0.9357 & 0.8992 & 0.9562 & 0.9284 \\
\hline
\end{tabular}
\end{table}

\subsubsection{Comparison with State-of-the-Art Classification~Results}

{Compared to the state-of-the art classification results
displayed in Table~\ref{tab:state_of_the_art} for classical machine learning algorithms (without Neural Networks) and in Table~\ref{tab:testingNN} for Neural Network-based algorithms, we observe that,  with~an optimization over only 2 parameters, our algorithm reaches 0.9602 accuracy (96.02\% of correct classifications) with SVM on the dataset of Swedish leaves, whereas the state-of-the art model VGG-16 needs 138 million parameters to reach perfect accuracy (100\% correct classifications) on the same dataset.  This illustrates that algorithms using fewer but well-chosen parameters can compete with brute force algorithms using millions of parameters.  We hope that this can motivate the investigation of more sustainable solutions for classifications tasks, as well as~meaningful parameter optimization. Moreover, as~shown in Section~\ref{improvement}, our proposed method could be a beneficial pre-precessing step before applying fine-tuned algorithms since it leads to an optimal point-to-point correspondence across the dataset.  Contrary to the other classification methods present in Tables~\ref{tab:state_of_the_art} and~\ref{tab:testingNN}, the~standardization procedure that we propose allows us to interpolate between elements in the dataset (as in Figure~\ref{illustration-optimal_parameterization}). It could be interesting to test whether the methods of~\cite{Xu2021, Alajlan} improve if we apply our standardization method as a pre-precessing step. 
}

\begin{table}[h!]
\caption{Comparison of classification results on the Swedish leaves dataset using different classical machine learning methods (no Neural Networks) taken from~\cite{Xu2021,Almodfer}. We see that with an optimization over only two parameters, our method is comparable to the state-of-the art classical machine learning algorithms. Moreover, it could serve as a pre-processing step for more complex~algorithms.}\label{tab:state_of_the_art}
\centering
\begin{tabular}{lc}
\hline
\textbf{Methods} & \textbf{Classification Rate (\%)}  \\
\hline
Multi-features fusion~\cite{Rojas}              & 77.24   \\
MSRA         & 91.87  \\
MARCH   & 93.20  \\
MDM~\cite{Rongxiang}        & 93.60  \\
IDSC~\cite{Ling}    & 94.13\\
MCC~\cite{Adamek}         & 94.75 \\
SPTC~\cite{Ling}           & 95.33  \\
TAR~\cite{Alajlan}  & 95.97  \\
OURS & 96.02 \\
MSSD~\cite{Xu2021} & 96.85 \\
\hline
\end{tabular}
\end{table}

\begin{table}[h!]
\centering
\caption{Comparison of classification results on the Swedish leaves dataset using Neural Networks methods; table taken from~\cite{Li}. Note that the state-of-the art model VGG-16 needs 138 million parameters to reach perfect accuracy, whereas our method achieves similar accuracy with an optimization over only two geometrically explainable parameters.}
\label{tab:testingNN}
\begin{tabular}{lc}
\hline
\textbf{Methods} & \textbf{Classification Rate (\%)}  \\
\hline
AlexNet         & 99.70  \\
GoogleLeNet   & 99.39  \\
VGG16        & 100.00  \\
ResNet18      & 99.39\\
ResNet50          & 99.39 \\
ResNet101            & 99.70  \\
\hline
\end{tabular}
\end{table}

\subsection{Testing on Flavia~Dataset}\label{sec:Flavia}

{To further assess the effectiveness of the proposed pipeline, we evaluated it on a second dataset. We use the Flavia dataset, which contains 1,907 leaf images belonging to 32 classes and is available at \href{https://www.kaggle.com/datasets/gauravneupane/flavia-dataset}{https://www.kaggle.com/datasets/gauravneupane/flavia-dataset} (accessed on 18 November 2025). Figure~\ref{Flavia} illustrates the different types of leaves present in this dataset. Achieving high classification accuracy on this dataset is more challenging due to the larger number of classes and the extremely similar shapes among many of them. 
}

{While applying our pipeline on the Flavia dataset, we were surprised to see that normalization of the orientation in space deteriorated the clustering drastically. After~taking a closer look at the dataset, we discovered that the original Flavia dataset contains an alignment bias. Indeed, for~some classes, all the leaves are oriented in a class-dependent direction in space. In~Figure~\ref{angle_dist}, the~angle distribution of the leaves in each class is depicted. As~we can see, for~instance on classes 15, 19, and 32, the~distribution is very concentrated around a mean orientation, and~this mean orientation differs from class to class. This is probably due to the way the dataset was collected. Consequently, the~orientation in space can be used to determine the belonging of a sample to a given class, which is unfortunate. In~order to test our algorithm on an unbiased dataset, we applied random rotations to the samples of the dataset. The~unbiased dataset is available at the following links: \href{https://github.com/GiLonga/Geometric-Learning}{https://github.com/GiLonga/Geometric-Learning} and \href{https://github.com/ioanaciuclea/geometric-learning-notebook}{https://github.com/ioanaciuclea/geometric-learning-notebook}.
}

\begin{figure}[h!]
\centering
\includegraphics[width = 0.9\linewidth]{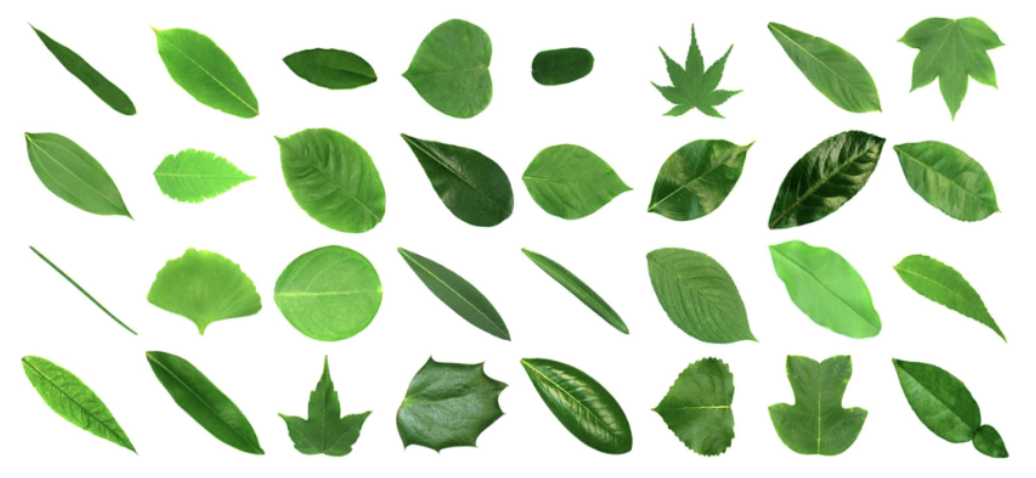}
\caption{Sample of the Flavia leaf dataset. Picture taken from~\cite{Almodfer}.}\label{Flavia}
\end{figure}

\begin{figure}[h!]
\centering
\includegraphics[width = \linewidth, trim= 3.5cm 3cm 2cm 2cm,clip=true]{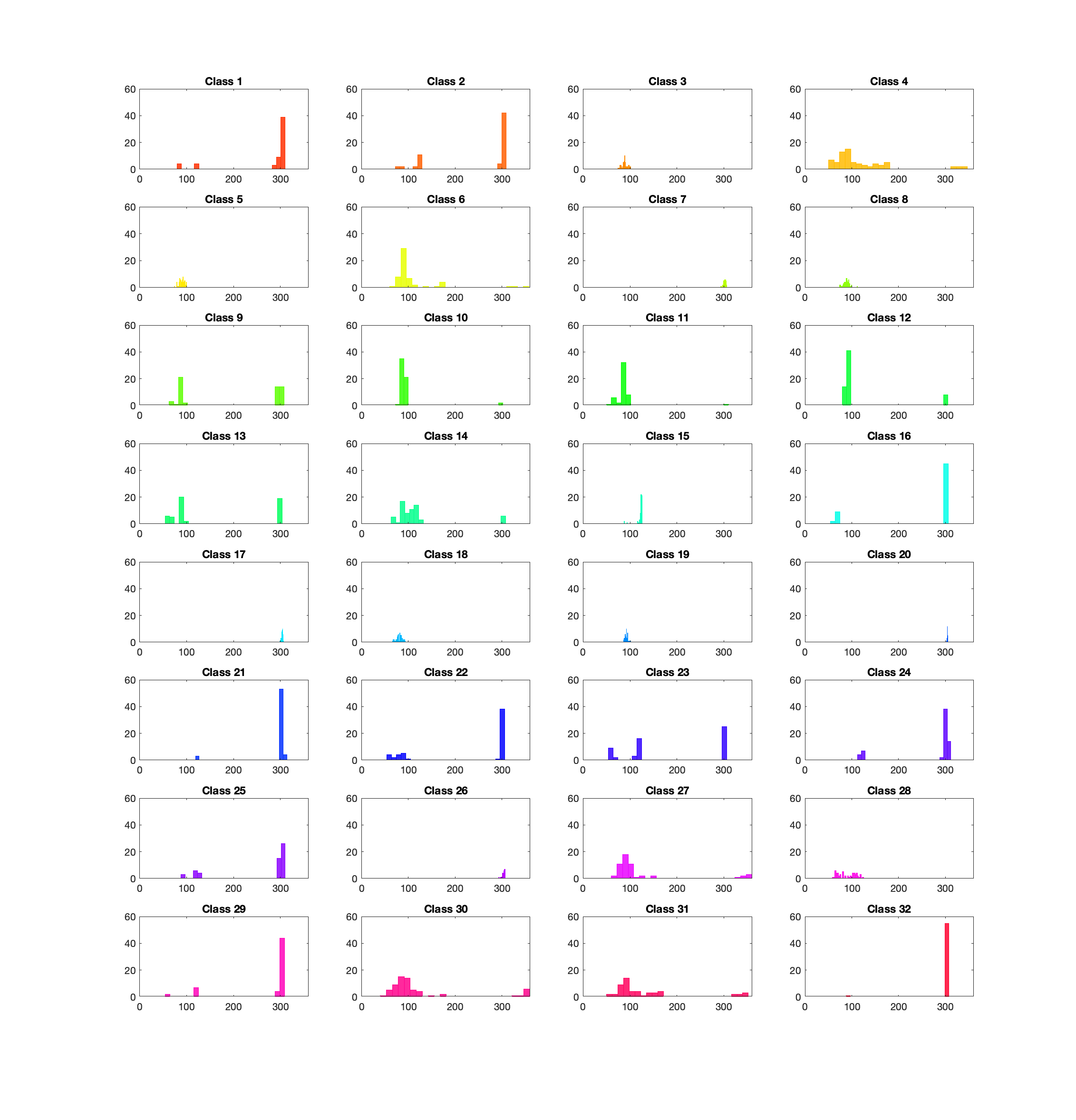}

\caption{Distribution of orientation angles in each class of the Flavia dataset of leaves. We see that for some classes, all the leaves are oriented in the same direction in space, with~a mean orientation differing from class to class, like for the classes 15, 19, and 32, for example. This implies that the dataset is biased with respect to~orientation.}\label{angle_dist}
\end{figure}

{
Starting from the unbiased Flavia dataset, we can see in Table~\ref{tab:testing_Flavia} that our normalization procedure improves the classification performance of all the algorithms tested. Since this dataset contains very similar shapes but with different scales, scale normalization was not performed because the scale contains valuable information in order to distinguish between classes.  In~order to optimize over the parameterization, we have used the Davies Bouldin index, which is more stable than the Dunn index in the presence of outliers.  Table~\ref{tab:DB_Flavia} contains the Davies Bouldin index for different values of the parameters, 30-fold cross-validated. 
}

As a concluding remark, let us note that the optimal normalization procedure and the optimal parameters $(n, \lambda)$ depend on the dataset and the selected cluster validity index. However, for~a given dataset, the~optimal parameterizations $s_{n, \lambda}$  for various cluster validity indices seem to be close in the space of sections over the sample points. Indeed, as~can be seen in Figure~\ref{illustration-optimal_parameterization} for the Swedish leaves dataset, the~optimal section $s_{n, \lambda}$ for the Dunn index is different from the optimal section for the Davies Bouldin index (the former is associated with $(n = 3, \lambda = 2000)$ whereas the latter with $(n = 5, \lambda = +\infty)$), but~the corresponding contours parameterizations look very similar.
This can be explained by the fact that the various cluster validity indices are linked to each other~\cite{Bezdek} and are continuous functions of the distances between samples, while these distances depend continuously on the section $s_{n, \lambda}$. From~this perspective, it becomes clear that the standardization procedure improves classification performance, as~the optimal distance function better reflects the intrinsic geometry of the dataset.

\begin{table}[h!]
\centering
\caption{Classification results in terms of average accuracy across pre-processing stages, with~different classifiers and over a 30-fold cross-validation. The~Dunn and Davies Bouldin indices at each step are also reported. We reparametrized the curve with 1000 points and, for~the clock parametrization, we set the number of subsections to $n=3$. For~the curvature-weighted clock parameterization (last line) we set $n = 3$  and $\lambda = 1000$ which corresponds to the curvature-weighted clock parameterization that minimizes the Davies Bouldin index (Table~\ref{tab:DB_Flavia}). The best result in term of accuracy is highlighted with a frame. 
}\label{tab:testing_Flavia}
\centering
\begin{tabular}{lcccccc}
\hline
\textbf{Pre-processing Steps} & \textbf{Dunn} & \textbf{DB} & \textbf{Logistic} & \textbf{RF} & \textbf{SVM} & \textbf{KNN} \\
\hline
No normalization \cref{section_dataset}                         & 0.0137 & 37.8766 & 0.0317  & 0.0392 & 0.0319 & 0.0331  \\
Std the travel direction \cref{section_counterclockwise}        &  0.0096 & 44.8592 & 0.1333  & 0.3188 & 0.6392  & 0.2192 \\

Std the position \cref{section_translation}                     & 0.0103 & 38.8019 & 0.0905 &  0.5311 & 0.6916 &  0.3586 \\
Std the orientation \cref{section_orientation}                  & 0.0070 & 48.3259 & 0.1075 & 0.7002 & 0.7100 & 0.5649 \\

Std the starting point \cref{section_rot_parameter}             & 0.0132 & 20.0387 & 0.4514 & 0.6464 & 0.6782  &0.5798 \\     
Clock parametrization \cref{Section_clock_parameterization}     & 0.0188 & 4.7575 & 0.6959 & 0.7361 & 0.7565 & 0.6756 \\
Curvature-weighted \cref{section_curvature_weighted_clock}      & 0.0203 & 4.5763 & 0.6531 & 0.7384 & \fbox{0.7679} 
& 0.6759 \\
\hline

\end{tabular}
\end{table}

\begin{table}[h!]
\caption{Davies Bouldin index for various clustering of the Flavia dataset based on clock parameterization with parameters $\lambda$ (weighting the parameterization by curvature) and $n$ (number of segments in the clock parameterization), over~a 30-fold cross-validation. Each column corresponds to the parameter $\lambda$ given at the top of the column, each row corresponds to the values $n$ given on the left. A~lower Davies Bouldin index corresponds to a better~clustering. The lowest Davies-Bouldin index is highlighted with a frame.}
\label{tab:DB_Flavia}
\centering
\begin{tabular}{cccccccccc}
\hline
			\boldmath{$\lambda =$} & \boldmath{$ 0.5$}  & \boldmath{$1$}  & \boldmath{$2$} & \boldmath{$5$}
            & \boldmath{$10$} & \boldmath{$100$} & \boldmath{$1000$} & \boldmath{$2000$} & \boldmath{$+\infty$}\\
\hline
n = 0  & 6.5694 & 6.5654 & 6.5576 & 6.5347 & 6.4983 & 6.0650 & 5.2734 & 5.1864          & 5.1206\\			  	                 
\hline
n = 2  & 6.0932 & 6.0911 & 6.0869 & 6.0748 & 6.0559	& 5.8459 & 5.5693 & 5.5721          & 5.6204 \\
\hline
n= 3   & 5.6204 & 4.8815 & 4.8790 & 4.8741 & 4.8662	& 4.7501 & \fbox{4.5763} & 4.6257          & 4.7575\\
\hline
n = 4  &  4.8376 & 4.8375 & 4.8373 & 4.8367 & 4.8360 & 4.8297 & 4.8644 & 4.9224
          & 5.0768\\
\hline
n = 5  & 4.7432 & 4.7425 & 4.7410 & 4.7371 & 4.7308 & 4.6594 & 4.6479 & 4.7174          & 4.8260\\
\hline
n = 7  & 4.7803 & 4.7801 & 4.7797 & 4.7788 & 4.7773	& 4.7739 & 4.9330 & 5.0198          & 5.0973\\ 
\hline
n = 9  & 4.7388 & 4.7388 & 4.7388 & 4.7388 & 4.7389	& 4.7672 & 4.9784 & 5.0584          & 5.1095\\
\hline
n = 10 & 4.8379	& 4.8377 & 4.8372 & 4.8359 & 4.8343	& 4.8250 & 4.9542 & 5.0271
          & 5.0772\\
\hline
n = 20 & 5.2147 & 5.2145 & 5.2141 & 5.2131 & 5.2117	& 5.2127 & 5.2593 & 5.2638          & 5.2056\\
\hline
\end{tabular}
\end{table}

\section{Discussion}

In this paper, a~supervised classification task is considered on contours in the plane. We have shown that classification performance is significantly improved when shape-preserving groups are taken into account and the dataset is appropriately normalized. In~order to design classification algorithms that are independent of the action of shape-preserving groups and hence make sense on the quotient space, we propose to use customized sections of the corresponding fiber bundle for standardization or normalization along the dataset. This amounts to choosing a representant in each orbit of the shape-preserving group in a standardized way. We have introduced a distance on the manifold of contours in the plane based on a simple $L^2$ distance function and the choice of a section. We have presented multiple normalization procedures for the finite-dimensional groups of translations, rotations, and~scalings, as~well as for the infinite-dimensional group of reparameterizations (which act on the starting point and the velocity along the contours). In~particular, 
for the latter group, we have introduced a new two-parameter family of canonical parameterizations of curves, called curvature-weighted clock parameterizations, that may be of interest for other applications.  By~optimizing a cluster validation index, like the Dunn or Davies Bouldin indices, of~the resulting clustering in the training set, we are able to achieve high classification performance on the testing set, without~the use of any neural network and by optimizing over only two parameters. This method can serve as a beneficial  pre-processing step for more complex algorithms since it gives optimal point-to-point correspondances, solving a registration task. It can be easily generalized to curves in a Euclidean space of any dimension, and~we will explore this in a future work. We hope that this work can serve as a guide for the design of more sustainable AI algorithms on manifolds of~curves.

\vspace{6pt} 

\textbf{Author contributions:}
Conceptualization, A.B.T.; methodology, A.B.T.; software, G.L.; validation, G.L. and I.C.; formal analysis, G.L., I.C., and A.B.T.; investigation, G.L.;  data curation, G.L. and I.C.; writing---original draft preparation, A.B.T.; writing---review and editing, G.L., I.C., and A.B.T.; visualization, I.C.; supervision, A.B.T.; project administration, A.B.T.; funding acquisition, A.B.T. All authors have read and agreed to the published version of the manuscript.

\textbf{Funding:} 
This research was funded by Austrian Science Fund (FWF), AI Austria call,  grant number PAT1179524. I.C. was financially supported by the START Grant of the West University of Timi\c soara during the writing of this~paper.

\textbf{Data availability:}
In this paper, we analyzed the dataset of Swedish leaves from the Link\"opling University, which is publicly available and can be freely downloaded from the following website: \href{https://www.cvl.isy.liu.se/en/research/datasets/swedish-leaf/}{https://www.cvl.isy.liu.se/en/research/datasets/swedish-leaf/}, and Flavia leaves dataset is available at \href{https://www.kaggle.com/datasets/gauravneupane/flavia-dataset}{https://www.kaggle.com/datasets/gauravneupane/flavia-dataset}.
The code used is available at the following link: \href{https://github.com/GiLonga/Geometric-Learning}{https://github.com/GiLonga/Geometric-Learning}. A Tutorial notebook is available at the following link: \href{https://github.com/ioanaciuclea/geometric-learning-notebook}{https://github.com/ioanaciuclea/geometric-learning-notebook}.

\textbf{Acknowledgments:}
A.B.T and I.C. would like to acknowledge the excellent working conditions and interactions
at Erwin Schr\"odinger International Institute for Mathematics and Physics, Vienna, during~the
thematic programme ``Infinite-dimensional Geometry: Theory and Applications'' where part of
this work was completed.
I.C was supported by the START Grant from the West University of Timi\c soara during the writing of this paper. This research was funded in whole or in part by the Austrian Science Fund (FWF)  Grant ``Geometric Green Learning on Groups and Quotient Spaces'' [\href{https://www.fwf.ac.at/en/research-radar/10.55776/PAT1179524}{https://www.fwf.ac.at/en/research-radar/10.55776/PAT1179524} (1 February 2025). The~authors acknowledge TU Wien Bibliothek for financial support through its Open Access Funding Program.

\textbf{Conflicts of interest:}
The authors declare no conflicts of~interest.


\begin{thebibliography}{999}

\bibitem[Bauer and Bruveris and Michor (2014)]{Overview}
Bauer, M.; Bruveris, M.; Michor, P.W. {Overview of the geometries of shape spaces and diffeomorphism groups.}
\textit{J. Math. Imaging Vis.} \textbf{2014}, \textit{50}, 60--97. \href{https://doi.org/10.1007/s10851-013-0490-z}{https://doi.org/10.1007/s10851-013-0490-z}.

\bibitem[Mennucci (2013)]{Mennucci_book}
 Mennucci, A.C.G. {Metrics of Curves in Shape Optimization and Analysis}. In \textit{Level Set and PDE Based Reconstr}; Springer Lecture Notes in Mathematics; Springer: Berlin/Heidelberg, Germany, 2013; pp. 205--319. \url{https://doi.org/10.1007/978-3-319-01712-9_4}.

\bibitem[Binz and Fischer (1978)]{Binz}
 Binz, E.; Fischer, H.R. {The manifold of embeddings of a closed manifold}. In \textit{Differential
Geometric Methods in Theoretical Physics, Proceedings of the International Conference Held at the Technical University of Clausthal, Clausthal-Zellerfeld, Germany, July 1978}; Lecture Notes in Physics
139; Springer: Berlin/Heidelberg, Germany, 1981.

\bibitem[Cervera and Mascar\'o and Michor (1991)]{Cervera}
 Cervera, V.;  Mascar\'o, F.; Michor, P.W. 
{The action of the diffeomorphism group on the space of immersions}.
\textit{Diff. Geom. Appl.} \textbf{1991}, \textit{1}, 391--401.
\href{https://doi.org/10.1016/0926-2245(91)90015-2}{https://doi.org/10.1016/0926-2245(91)90015-2}.

\bibitem[Mennucci(2021)]{Mennucci(2021)}
Mennucci, A.C.G.  {Neighborhoods and Manifolds of Immersed Curves}. \textit{Int. J. Math. Math. Sci.} \textbf{2021}, Volume 2021.  \href{https://doi.org/10.1155/2021/6974292}{https://doi.org/10.1155/2021/6974292}.

\bibitem[Preston (2012)]{Preston} Preston, S.C. {The geometry of whips}. \textit{Ann. Global Anal. Geom.} \textbf{2012}, \textit{41}, 281--305.

\bibitem[Tumpach and Preston (2023)]{TumPre2} Tumpach, A.B.;  Preston, S.C. { Three methods to put a Riemannian metric on Shape Space}. In  \textit{Geometric Science of Information, Proceedings of the 6th International Conference, GSI 2023, St. Malo, France, 30 August--1 September 2023}; Proceedings, Part I, Springer: Berlin/Heidelberg, Germany, 2023; pp. 3--11.
\url{https://doi.org/10.1007/978-3-031-38271-0_1}.

\bibitem[Maksimovi\'c and Borkovi\'c (2021)]{MakBor} Maksimovi\'c, S.; Borkovi\'c, A. {A New Class of Plane Curves with Arc Length Parametrization and Its Application to Linear Analysis of Curved Beams}. \textit{Mathematics} \textbf{2021}, \emph{9}, 1778. \href{https://doi.org/10.3390/math9151778}{https://doi.org/10.3390/math9151778}.

\bibitem[Tumpach (2023)]{TumCan} 
Tumpach, A.B. {On canonical parameterizations of $2D$-curves}. In \textit{Geometric Science of Information, Proceedings of the 6th International Conference, GSI 2023, St. Malo, France, 30 August--1 September 2023}; Proceedings, Part I; Springer: Berlin/Heideberg, Germany, 2023; pp. 31--40.
  \url{https://doi.org/10.1007/978-3-031-38271-0_4}.

\bibitem[Bruveris (2015)]{Bruveris}
 Bruveris, M. {Optimal Reparametrizations in the
Square Root Velocity Framework}. \textit{SIAM J. 
Math. Anal.} \textbf{2015}, \textit{48}, 4335--4354.\href{https://doi.org/10.1137/15M1014693}{https://doi.org/10.1137/15M1014693}

\bibitem[Mio, Srivastava and Joshi]{MioSrivastavaJoshi} Mio, W.; Srivastava, A.; Joshi, S.H.  {On shape of plane elastic curves.} \textit{Int. J. Comput. Vis.} \textbf{2007}, \textit{73}, 307--324.

\bibitem[Srivastava et al. (2011)]{Srivastava2011b}
Srivastava, A.; Klassen, E.; Joshi, ; S.H.; Jermyn, I.H.
\textit{Shape analysis of elastic curves in Euclidean spaces}.
 \emph{IEEE Trans. Pattern Anal. Mach. Intell.} \textbf{2011}, \textit{33}, 1415--1428.

\bibitem[Younes et al. (2008)]{Younes2008} Younes, L.; Michor, P.W.; Shah, J.; Mumford, D.
\textit{A metric on shape space with explicit geodesics.}
 \emph{Mat.  Appli.} \textbf{2008}, \textit{19}, 25--57.

\bibitem[Bauer, Bruveris, Marsland and Michor]{Bauer et al.} Bauer, M.; Bruveris, M.; Marsl, S.; Michor, P.W. {Constructing reparametrization invariant metrics on spaces of plane curves.}
\textit{Diﬀ. Geom. Its Appl.} \textbf{2014}, \textit{34}, 139--165.

\bibitem[Needham and Kurtek (2020)]{NeedhamKurtek} Needham, T.;  Kurtek. S. {Simplifying transforms for general elastic metrics on the space of plane
curves}.  \emph{SIAM J. Imaging Sci.} \textbf{2020}, \textit{13}, 445--473.

\bibitem[Lahiri, Robinson and Klassen]{Lahiri}
Lahiri, S.; Robinson, D.;   Klassen, E. {Precise matching of PL curves in $\mathbb{R}^n$ in the square
root velocity framework}. \textit{Geom. Imaging Comput.} \textbf{2015}, \textit{2}, 133--186.

\bibitem[Hartman et al. (2021)]{Hartman}
Hartman, E.; Sukurdeep, Y.; Charon, N.; Klassen, E.;  Bauer, M. { Supervised deep learning of elastic SRV distances on the shape space of curves}. In Proceedings of the IEEE/CVF Conference on Computer Vision and Pattern Recognition, Virtual, 19--25 June 2021; pp. 4425--4433. \href{https://doi.org/10.1109/CVPRW53098.2021.00499}{https://doi.org/10.1109/CVPRW53098.2021.00499}.

\bibitem[Tumpach and Preston (2017)]{TumPre}
Tumpach, A.B.; Preston, S.C. {Quotient Elastic Metrics on the manifold of arc-length parameterized
plane curves}. \textit{J. Geom. Mech.} \textbf{2017}, \textit{9}, 227--256.
\href{https://doi.org/10.3934/jgm.2017010}{https://doi.org/10.3934/jgm.2017010}.

\bibitem[Tumpach et al.(2016)]{Tpami} Tumpach, A.B.; Drira, H.; Daoudi, M.; Srivastava, A. {Gauge Invariant Framework for Shape Analysis
of Surfaces}. \textit{IEEE Trans. Pattern Anal. Mach. Intell.} \textbf{2016}, \textit{38}, 1.
\href{https://doi.org/10.1109/TPAMI.2015.2430319}{https://doi.org/10.1109/TPAMI.2015.2430319}.

\bibitem[Tumpach (2016)]{Notices}
Tumpach, A.B. {Gauge Invariance of degenerate Riemannian metrics}. \textit{Not. Am. Math. Soc.} \textbf{2016}, \textit{63}, 342--350. Available online:
\href{https://www.ams.org/journals/notices/201604/rnoti-p342.pdf}{https://www.ams.org/journals/notices/201604/rnoti-p342.pdf}

\bibitem[Drira and Tumpach and Daoudi (2015)]{Curves}
Drira, H.; Tumpach, A.B.; Daoudi, M. {Gauge Invariant Framework for Trajectories Analysis}.
In Proceedings of the 1st International Workshop on DIFFerential Geometry in Computer Vision for
Analysis of Shapes, Images and Trajectories (DIFF-CV), Swansea, UK, 7--10 September 2015. \url{https://doi.org/10.5244/C.29.DIFFCV.6}.

\bibitem[Michor and Mumford (2006)]{MM}
Michor, P.W.;  Mumford, D.B. {Riemannian geometries on spaces of plane curves}. \textit{J. Eur. Math. Soc. }\textbf{2006}, \textit{8}, 1--48
\href{https://doi.org/10.4171/JEMS/37}{https://doi.org/10.4171/JEMS/37}.

\bibitem[Beck and Ochs (2025)]{Beck}
Beck, A.;  Ochs, P. 
{Symmetries in PAC-Bayesian Learning}. \textit{arXiv} \textbf{2025}, arxiv:2510.17303.

\bibitem[Bezdek (1998)]{Bezdek}
Bezdek, J.C.; Pal, N.R. { Some new indexes of cluster validity}.  \textit{IEEE Trans. Syst. Man Cybern. B Cybern. } \textbf{1998}, \textit{28}, 301--315. \url{https://doi.org/10.1109/3477.678624}.

\bibitem[Bolshakova and Azuaje (2003)]{BolAzu} Bolshakova, N.; Azuaje, F. 
{ Cluster validation techniques for genome expression data}.
\textit{Signal Process.} \textbf{2003}, \textit{83}, 825--833.
\href{https://doi.org/10.1016/S0165-1684(02)00475-9}{https://doi.org/10.1016/S0165-1684(02)00475-9}. 

\bibitem[Dunn (1973)]{Dunn}
Dunn, J.C. { A Fuzzy Relative of the ISODATA Process and Its Use in Detecting Compact Well-Separated Clusters}.  \textit{J. Cybern.} \textbf{1973}, \textit{3}, 32--57. \href{https://doi.org/10.1080/01969727308546046}{https://doi.org/10.1080/01969727308546046}.

\bibitem[Davies and Bouldin (1979)]{Davies_Bouldin}
Davies, D.L.; Bouldin, ; D.W.  { A Cluster Separation Measure}. \textit{IEEE Trans. Pattern Anal. Mach. Intell.} \textbf{1979}, \textit{PAMI-1}, 224--227. \url{https://doi.org/10.1109/TPAMI.1979.4766909}. ISSN 0162-8828

\bibitem[Alajlan (2007)]{Alajlan}
Alajlan, N.; El Rube, I.; Kamel, M.S.; Freeman, G. Shape retrieval using triangle-area representation anddynamic space warping. \textit{Pattern Recognit.} \textbf{2007}, \textit{40}, 1911--1920.

\bibitem[Xu et al. (2021)]{Xu2021}
Xu, G.; Li, C.
{Plant leaf classification and retrieval using multi-scale shape descriptor}.
 \emph{ J. Eng.} \textbf{2021}, \textit{8}, 467--475. \href{https://doi.org/10.1049/tje2.12050}{https://doi.org/10.1049/tje2.12050}.

\bibitem[Almodfer, Mudhsh and Zhao (2023)]{Almodfer} 
Almodfer, R.; Mudhsh, M.; Zhao, J. Pyramided and optimized blurred shape model for plant leaf classification. \textit{IET Image Process.} \textbf{2023}, \textit{17}, 2838--2854. \href{https://doi.org/10.1049/ipr2.12830}{https://doi.org/10.1049/ipr2.12830}.

\bibitem[Rojas-Hern\'andez et al (2016)]{Rojas}
Rojas-Hernández, R.; López-Chau, A.; Trujillo-Mora, V.; Rojas-Hernández, C.A. {Plant identiﬁcation using new geometricfeatures with standard data mining methods}, In Proceedings of the 2016 IEEE 13th International Conference on Networking, Sensing, and Control (ICNSC), Mexico City, Mexico, 28--30 April 2016; pp. 1--4.

\bibitem[Rongxiang et al (2012)]{Rongxiang}
Hu, R.; Jia, W.; Ling, H.; Huang, D. {Multiscale distance matrix for fast plant leaf recognition.} \textit{IEEE Trans. Image Process.} \textbf{2012}, \textit{21}, 4667--4672.

\bibitem[Ling and Jacobs (2007)]{Ling}
Ling, H.; Jacobs, D.W. {Shape classiﬁcation using the inner-distance}. \textit{IEEE Trans. Pattern Anal. Mach. Intell.} \textbf{2007}, \textit{29}, 286--299

\bibitem[Adamek and Connor (2004)]{Adamek}
Adamek, T.; Connor, N.E.O. { A multiscale representation method for non-rigid shapes with a single closed contour}. \textit{IEEE Trans. Circuits Syst. Video Technol.} \textbf{2004}, \textit{14}, 742--753.

\bibitem[Li, Zang and Ni (2024)]{Li} Li, G.; Zhang, R.; Qi, D.; Ni, H. { Plant-Leaf Recognition Based on Sample Standardization and Transfer Learning}.  \textit{Appl. Sci.} \textbf{2024}, \emph{14}, 8122. \href{https://doi.org/10.3390/app14188122}{https://doi.org/10.3390/app14188122}.

\bibitem[Sundaramoorthi et al.(2011)]{Sundaramoorthi}
Sundaramoorthi, G.; Mennucci, A.; Soatto, S.; Yezzi, A. { A new geometric metric in the space of curves, and applications to tracking deforming objects by prediction and filtering}. \textit{SIAM J. Imaging Sci.} \textbf{2011}, \textit{4}, 109--145. \href{https://doi.org/10.1137/090781139}{https://doi.org/10.1137/090781139}. 


  
\end{thebibliography}
\end{document}